\newcommand{\shortversion}[1]{}
\newcommand{\longversion}[1]{#1}
\definecolor{mygray}{gray}{0.9}
\newtheorem{example}{Example}
\newtheorem{problem}{Problem}
\newtheorem{theorem}{Theorem}
\newtheorem{lemma}{Lemma}
\title{\LARGE \bf
Online Concurrent Multi-Robot Coverage Path Planning
}
\author{Ratijit Mitra$^{1}$ and Indranil Saha$^{2}$
\thanks{$^{1}$Ratijit Mitra and $^{2}$Indranil Saha are with the Department of Computer Science and Engineering, Indian Institute of Technology Kanpur, Uttar Pradesh - 208016, India
        {\tt\small \{ratijit, isaha\}@iitk.ac.in}}%
}
\begin{document}

\maketitle
\thispagestyle{empty}
\pagestyle{empty}

\begin{abstract}
Recently, centralized receding horizon online multi-robot coverage path planning algorithms have shown remarkable scalability in thoroughly exploring large, complex, unknown workspaces with many robots. 
In a horizon, the path planning and the path execution interleave, meaning when the path planning occurs for robots with no paths, the robots with outstanding paths do not execute, and subsequently, when the robots with new or outstanding paths execute to reach respective goals, path planning does not occur for those robots yet to get new paths, leading to wastage of both the robotic and the computation resources. 
As a remedy, we propose a centralized algorithm that is not horizon-based. 
It plans paths at any time for a subset of robots with no paths, i.e., who have reached their previously assigned goals, while the rest execute their outstanding paths, thereby enabling concurrent planning and execution. 
We formally prove that the proposed algorithm ensures complete coverage of an unknown workspace and analyze its time complexity. 
To demonstrate scalability, we evaluate our algorithm to cover eight large $2$D grid benchmark workspaces with up to $512$ aerial and ground robots, respectively. 
A comparison with two state-of-the-art horizon-based algorithms shows its superiority in completing the coverage with up to $1.6\times$ speedup. 
For validation, we perform ROS + Gazebo simulations in six $2$D grid benchmark workspaces with $10$ Quadcopters and TurtleBots, respectively. 
We also successfully conducted one outdoor experiment with three quadcopters and one indoor with two TurtleBots. 
\end{abstract}

\section{Introduction}
\label{sec:introduction}

Coverage Path Planning (CPP) determines conflict-free paths for a group of mobile robots so they can visit obstacle-free sections of a particular workspace to perform a given task. 
It has many indoor (e.g., cleaning \cite{DBLP:conf/icra/BormannJHH18} and industrial inspection \cite{DBLP:conf/iros/JingPGRLS17}) and outdoor (e.g., precision agriculture \cite{DBLP:journals/jfr/BarrientosCCMRSV11}, \cite{DBLP:journals/ral/MainiGI22}, surveying \cite{DBLP:conf/icra/KarapetyanMLLOR18}, and disaster management \cite{drones3010004}) applications. 
A CPP algorithm is also known as a Coverage Planner (CP), and its primary objective is to ensure \textit{complete} coverage, meaning the robots visit the entire obstacle-free section of the workspace. 
While a single robot (e.g., \cite{DBLP:conf/icra/GabrielyR01, DBLP:conf/iros/KleinerBKPM17, DBLP:conf/atal/SharmaDK19}) can cover a small workspace, it may take too long to cover a large workspace. 
Multiple robots (e.g., \cite{DBLP:conf/icra/ModaresGMD17, DBLP:conf/iros/KarapetyanBMTR17, DBLP:conf/icra/VandermeulenGK19, DBLP:conf/iros/HardouinMMMM20, DBLP:conf/icra/CollinsGEDDC21, DBLP:conf/icra/TangSZ21}) can cover a large workspace faster if the CP distributes the workload fairly among the robots. 
However, designing a CP that can supervise many robots efficiently is challenging. 

We call a CP \textit{offline} if the workspace is \textit{known}, i.e., the CP knows the obstacle placements beforehand (e.g., \cite{DBLP:journals/ras/GalceranC13, DBLP:conf/iros/BouzidBS17, DBLP:conf/iros/ChenTKV19}, \cite{DBLP:journals/ral/LuZTLW23}). 
So, it finds the robots' paths all at once. 
In contrast, we call a CP \textit{online} if the workspace is \textit{unknown}, i.e., the CP has no prior knowledge about the obstacle placements (e.g., \cite{DBLP:conf/agents/Yamauchi98, DBLP:conf/icra/HazonMK06, DBLP:conf/icra/OzdemirGKHG19, DBLP:conf/icra/DharmadhikariDS20}, \cite{DBLP:journals/ral/KanTK20}). 
So, it runs numerous iterations to cover the workspace gradually. 
The robots send sensed data, acquired through attached sensors, about the explored workspace sections to the CP in each iteration, and the CP finds subpaths that the robots follow to visit obstacle-free sections not visited so far and explore more of the workspace. 

We classify a CP as \textit{centralized} (e.g., \cite{DBLP:conf/agents/Yamauchi98, DBLP:conf/icra/GabrielyR01, DBLP:conf/iros/KleinerBKPM17, DBLP:conf/iccps/DasS18, DBLP:conf/atal/SharmaDK19}), which runs in a server and solely obtains all the robots' paths, or \textit{distributed} (e.g., \cite{DBLP:conf/icra/HazonMK06, DBLP:journals/apin/VietDCC15}, \cite{DBLP:journals/ral/ZhongCHO19}), which runs in each robot as a local instance. 
These local instances \textit{collaborate} among themselves to obtain individual paths. 
Recently, for large unknown workspaces, centralized \textit{horizon}-based (e.g., \cite{DBLP:conf/icra/BircherKAOS16, DBLP:conf/iccps/DasS18}) algorithm \FnGAMRCPP\cite{DBLP:conf/iros/MitraS22} has shown its \textit{scalability} in finding shorter length paths for many robots (up to 128) by exploiting the global state space, thereby completing the coverage faster than distributed algorithm $\mathtt{BoB}$~\cite{DBLP:journals/apin/VietDCC15}. 

Centralized online multi-robot horizon-based CPs, like \cite{DBLP:conf/iccps/DasS18, DBLP:conf/iros/MitraS22, DBLP:conf/icra/MitraS24}, \cite{DBLP:journals/ral/WangZZYWTZS24}, in each horizon, attempts to plan paths for either all the robots (e.g., \cite{DBLP:conf/iccps/DasS18, DBLP:conf/iros/MitraS22}, \cite{DBLP:journals/ral/WangZZYWTZS24}) or a subset of robots (e.g., \cite{DBLP:conf/icra/MitraS24}), called \textit{the participants}, who have entirely traversed their last planned paths. 
So, the participants do not have any remaining paths to follow, demanding new paths from the CP. 
The rest (if any), called \textit{the non-participants}, are yet to finish following their last planned paths, so they have remaining paths left to follow. 
Due to the \textit{interleaving} of \textit{path planning} (by the CP) and subsequently \textit{path following} (by the robots) in each horizon of these CPs, they suffer from the \textit{wastage of robotic resources} 
in two ways. 
When the CP replans for the participants, the non-participants wait until the planning ends. 
Another, while the \textit{active participants} (the participants for whom the CP has found the paths) and the non-participants of the current horizon follow their paths, the \textit{inactive participants} wait for reconsideration by the CP in the next horizon till the path following is over. 
The latter is also a \textit{waste of computation resources}. 

In this paper, we design a centralized online multi-robot non-horizon-based CP, where planning and execution of paths happen in \textit{parallel}. 
It makes the CP design extremely challenging because while planning the participants' conflict-free paths on demand, it must accurately infer the remaining paths of the non-participants, which are moving in parallel. 
The proposed CP \textit{timestamps} the active participants' paths with a \textit{discrete global clock} value, meaning the active participants must follow respective paths from that time point. 
Hence, path execution happens in \textit{sync} with the global clock. 
However, path planning happens anytime if there is a participant. 
Thus, the CP enables overlapping path planning with path execution, significantly reducing the time required to complete the coverage with hundreds of robots. 

We formally prove that our CP ensures complete coverage of an unknown workspace and then analyze its time complexity. 
We evaluate our CP on eight large $2$D grid benchmark workspaces with up to $512$ aerial and ground robots, respectively. 
The results justify its scalability by outperforming the state-of-the-art \FnOnDemCPP\cite{DBLP:conf/icra/MitraS24} and \FnAPFCPP\cite{DBLP:journals/ral/WangZZYWTZS24}, completing the coverage of large unknown workspaces faster with hundreds of robots. 
We validate the CP through ROS + Gazebo simulations on six $2$D grid benchmark workspaces with $10$ aerial and ground robots, respectively. 
Moreover, we perform two experiments with real robots --- 
One outdoor experiment with three quadcopters and another indoor with two TurtleBots. 
\section{Problem}
\label{sec:problem}

\subsection{Preliminaries}
\label{subsec:preliminaries}

Let $\mathbb{R}$ and $\mathbb{N}$ denote the set of real numbers and natural numbers, respectively, and $\mathbb{N}_0$ denotes the set $\mathbb{N} \cup \{0\}$. 
Also, for $m \in \mathbb{N}$, $[m] = \{n \in \mathbb{N}\, |\, n \leq m\}$ and $[m]_0 = [m] \cup \{0\}$. 
The size of the countable set $\mathcal{S}$ is denoted by $|\mathcal{S}| \in \mathbb{N}_0$. 
Furthermore, we denote the Boolean values $\{0, 1\}$ by $\mathbb{B}$.

\subsubsection{Workspace}
\label{subsubsec:workspace}

We consider an unknown $2$D grid workspace $W$ of size $X \times Y$, where $X, Y \in \mathbb{N}$ are its size along the $x$ and the $y$ axes, respectively. 
So, $W$ consists of \textit{square-shaped equal-sized non-overlapping} cells $\{(x, y)\ |\ x \in [X] \land y \in [Y]\}$. 
Some of these cells are obstacle-free (denoted by $W_{free}$), hence traversable. 
The rest are \textit{fully} occupied with \textit{static} obstacles (denoted by $W_{obs}$), hence not traversable. 
Note that $W_{free}$ and $W_{obs}$ are unknown initially, and $W = W_{free} \cup W_{obs}$ and $W_{free} \cap W_{obs} = \emptyset$. 
We assume that $W_{free}$ is 
\textit{connected}.

\subsubsection{Robots and their States}
\label{subsubsec:robots_and_their_motions}

We employ a team of \mbox{$R \in \mathbb{N}$}  \textit{homogeneous failure-free} robots, where each robot fits entirely within a cell. 
We denote the $i (\in [R])$-th robot by $r^i$ and assume that the robots are \textit{location-aware}. 
So, we denote the state of $r^i$ at the $j (\in \mathbb{N}_0)$-th \textit{discrete} time step by $s^i_j$, which is a tuple of its location and possibly orientation in $W$. 
We define a function $\mathcal{L}$, which takes a state $s$ as input and returns the corresponding location $\mathcal{L}(s) \in W$ as a tuple. 
Initially, the robots get randomly deployed at different cells in $W_{free}$, comprising \textit{the set of start states} $S = \{s^i_0\ |\ i \in [R] \land \mathcal{L}(s^i_0) \in W_{free} \land \forall j(\neq i) \in [R]\ \mathcal{L}(s^i_0) \neq \mathcal{L}(s^j_0)\}$. 
We also assume that each $r^i$ is fitted with four rangefinders on all four sides to detect obstacles in those neighboring cells.

\subsubsection{Motions and generated Paths}
\label{subsubsec:motions_and_generated_paths}

The robots have \textit{a set of motion primitives} $M$ to change their states in each time step. 
It contains a unique motion primitive $\mathtt{Halt(H)}$ to keep a state unchanged in the next step. 
Each motion primitive $\mu \in M$ is associated with some cost $\mathtt{cost(\mu)} \in \mathbb{R}$ (e.g., distance traversed, energy consumed, etc.), but we assume that they all take the same $\tau \in \mathbb{R}$ amount of time for execution. 

Initially, the path $\pi^i$ of robot $r^i$ contains its start state $s^i_0$. 
So, the \textit{length} of $\pi^i$, denoted by $|\pi^i|$, is $0$. 
When a sequence of motion primitives $(\mu_j \in M)_{j \in [\Lambda]}$ of length $\Lambda \in \mathbb{N}$ gets applied on $s^i_0$, it results in a $\Lambda$-length path $\pi^i$, i.e., $|\pi^i| = \Lambda$. 
Therefore, $\pi^i$ contains a sequence of states of $r^i$ s.t. \mbox{$s^i_{j - 1} \xrightarrow{\mu_j} s^i_j, \forall j \in [\Lambda]$}. 
Note that we can make all the path lengths equal to $\Lambda$ by suitably applying $\mathtt{H}$ motions at their ends. 
The resultant \textit{set of paths} is \mbox{$\Pi = \{\pi^i\ |\ i \in [R] \land |\pi^i| = \Lambda\}$}, where each $\pi^i$ satisfies the following three conditions: 
    \begin{enumerate}
        \item $\forall j \in [\Lambda]_0\ \mathcal{L}(s^i_j) \in W_{free}$,\ \ \ \ \ \ \ \ \ \ \ \ \ \ \ \ \ [\textit{Avoid obstacles}]
        \item $\forall j \in [\Lambda]_0\ \forall k \in [R] \setminus \{i\}\\
        \mathcal{L}(s^i_j) \neq \mathcal{L}(s^k_j)$,\ \ \ \ \ \ \ \ \ \ \ \ \ \ \ \ \ [\textit{Avert same cell collisions}]
        \item $\forall j \in [\Lambda]\ \forall k \in [R] \setminus \{i\}\\
        \lnot((\mathcal{L}(s^i_{j-1}) = \mathcal{L}(s^k_j)) \wedge (\mathcal{L}(s^i_j) = \mathcal{L}(s^k_{j-1})))$.\\
        \textcolor{white}{BLANK}\ \ \ \ \ \ \ \ \ \ \ \ \ \ \ \ \ \ \ \ \ \ \ \ \ \ \ \ \ [\textit{Avert head-on collisions}]
    \end{enumerate}

Further, we define the cost of a path $\pi^i$, denoted by $\mathtt{cost}(\pi^i) \in \mathbb{R}$, as the sum of its motions' costs, i.e., $\mathtt{cost}(\pi^i) = \sum_{j \in [\Lambda]} \mathtt{cost}(\mu_j)$. 
Note that, for simplicity, we do not consider \textit{diagonal} motions that cause \textit{side} collisions.

\begin{example}
To illustrate using an example, we consider a TurtleBot~\cite{key_tb}, which rotates around its axis and drives forward. 
So, the state of a TurtleBot is $s^i_j = (x, y, \theta)$, where $(x, y) \in W$ is its location and $\theta \in \{\mathtt{East(E)}, \mathtt{West(W)}, \mathtt{North(N)}, \mathtt{South(S)}\}$ is its orientation in $W$. 
The motions of a TurtleBot are $M = \{\mathtt{Halt(H)}, \mathtt{TurnRight(TR)}, \mathtt{TurnLeft(TL)}, \mathtt{MoveNext(MN)}\}$, where $\mathtt{TR}$ and $\mathtt{TL}$ rotate a TurtleBot $90^{\circ}$ \textit{clockwise} and \textit{counterclockwise}, respectively, and $\mathtt{MN}$ moves it to the next cell pointed by its orientation $\theta$. 
\end{example}

\subsection{Problem Definition}
\label{subsec:problem_definition}

\begin{problem}[Complete Coverage Path Planning Problem]
Formally, it is defined as a problem $\mathcal{P}_{CPP}$, which takes an unknown workspace $W$, a team of $R$ robots with their start states $S$ and motions $M$, and generates their \textit{obstacle avoiding} and \textit{inter-robot collision averting} paths $\Pi$ to ensure each obstacle-free cell gets visited by at least one robot:
\begin{center}
$\Pi = \mathcal{P}_{CPP}(W, R, S, M)$ s.t. $\cup_{i \in [R]} \cup_{j \in [\Lambda]_0} \mathcal{L}(s^i_j) = W_{free}$. 
\end{center}
\end{problem}
\section{Concurrent CPP Framework}
\label{sec:concpp}

This section presents the proposed centralized concurrent CPP framework that makes a team of mobile robots cover an unknown workspace completely, whose size and boundary are only known to the CP and the robots. 

To ensure space-time consistency of the robots' paths, the CP runs a \textit{global discrete} clock $CLK \in \mathbb{N}_0$ that increments after each $\tau\ \si{\second}$ (the motion primitive execution time) and the clocks on the robots are in sync with it using a clock synchronization protocol~\cite{Eidson20}. 
Each robot has its initial perception of the workspace, called \textit{local view}, 
as the fitted rangefinders are range-limited. 
Initially, all the robots \textit{participate} in planning, so they send their local views to the CP through \textit{request} messages for paths. 
When the CP receives any request, it immediately updates the \textit{global view} of the workspace using the local view. 
After receiving the intended number of requests, the CP uses the global view to generate paths, possibly of different lengths, for the participants. 
The participants for which the CP has found paths are said to be \textit{active} while the rest are said to be \textit{inactive}. 
The CP, while generating the paths, 
\textit{timestamps} them with a particular $CLK$ value, signifying the active participants must follow their respective path from that specified $CLK$ value. 
Now, the CP \textit{simultaneously} sends paths to those active participants through \textit{response} messages. 
The robots that receive responses follow their respective paths from the timestamped $CLK$ and update their local views accordingly. 
As a robot finishes following its path, it sends its updated local view to the CP. 
Upon receiving, first, the CP updates the global view, and then it begins the next planning round if some \textit{preset} conditions are satisfied. 
At the end of any current planning round, the CP checks the preset conditions to begin the next planning round for inactive participants, new participants, or both. 
Thus, the path planning is \emph{on-demand}, where the CP replans for the participants who have entirely traversed their previously planned paths but without modifying the non-participants' existing paths. 
And, the path following is synchronous w.r.t. $CLK$. 
Moreover, path planning for the participants can happen in \textit{parallel} with the path execution by the non-participants, hence concurrent CPP. 
Note that the CP allows only one instance of planning round at a time, which can begin or end at any time. 

\begin{figure}[t]
    \centering
    \includegraphics[scale=0.34]{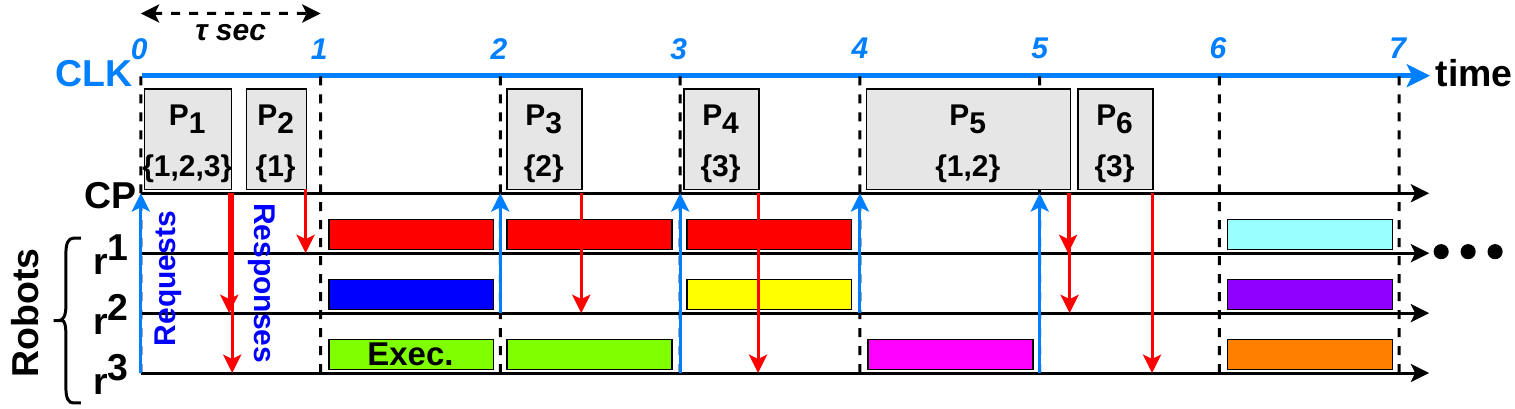}
    \caption{Overview of concurrent CPP}
    \label{fig:concpp}
\end{figure}

\begin{example}
\label{exp:concpp}
In Figure~\ref{fig:concpp}, we show a schematic diagram of the overall concurrent CPP framework for three robots in an arbitrary workspace. 
In the initial planning round P\textsubscript{1}, the CP finds paths timestamped with $1$ for participants $r^2$ and $r^3$ only. 
So, it begins P\textsubscript{2} for the inactive participant $r^1$ of P\textsubscript{1}, whose path gets timestamped with $1$ as well. 
When the CP replans $r^2$ in P\textsubscript{3}, non-participants $r^1$ and $r^3$ follow their existing paths in parallel. 
The new path of $r^2$ (observe the new color) gets timestamped with $3$. 
Similarly, it replans $r^3$ in P\textsubscript{4}, whose path gets timestamped with $4$. 
However, in P\textsubscript{5}, by the time it ends replanning $r^1$ and $r^2$, whose paths get timestamped with $6$, it receives a new request from $r^3$. 
So, it replans $r^3$ in P\textsubscript{6}, whose path gets timestamped with $6$ also. 
\end{example}

\subsection{Data structures at CP}
\label{subsec:concpp_ds}

Let $I_{par} \subseteq [R]$ be the set of IDs of the robots participating in the coming planning round, and $S$ be the set of their corresponding states. 
Also, let $\eta \in [R]_0$ be the number of participants needed to begin the coming planning round, which the CP sets \textit{dynamically} after each planning round to replan for inactive participants or new participants. 
By $T_{stop}$, we denote an array of size $R$, where $T_{stop}[i] \in \mathbb{N}_0$ denotes the $CLK$ value when $r^i$ finishes following its latest path. 
\subsection{Overall Concurrent Coverage Path Planning}
\label{subsec:concpp}

\begin{algorithm}[t]
    \footnotesize
    \DontPrintSemicolon
    
    \caption{$\mathtt{Robot}(i, s^i_0, X, Y)$}
    \label{algo:robot}
    
    $W^i \gets \mathtt{init\_localview}(s^i_0, X, Y)$\;
    $\mathtt{M_{req}[id, state, view]} \gets [i, s^i_0, W^i]$       \tcp*{Create $\mathtt{M_{req}}$}
    
    \While{true}{
        $\mathtt{send\_localview(M_{req})}$\;
        $\langle \sigma^i, T_{start}\rangle \gets \mathtt{receive\_path(M_{res})}$\;
        $W^i \gets \mathtt{follow\_path}(\sigma^i, T_{start})$\;
        $\mathtt{M_{req}[state, view]} \gets [s^i_{|\sigma^i|}, W^i]$       \tcp*{Update $\mathtt{M_{req}}$}
    }
\end{algorithm}

In this section, we explain the overall concurrent CPP in detail with the help of Algorithms \ref{algo:robot} and \ref{algo:concpp}. 
First, we explain Algorithm \ref{algo:robot}, which runs at each robot $r^i$. 
At the beginning, $r^i$ initializes its local view $W^i=\langle W^i_u, W^i_o, W^i_g, W^i_c \rangle$ (line 1), where $W^i_u$, $W^i_o$, $W^i_g$, and $W^i_c$ are the set of \textit{unexplored} cells, \textit{obstacles} (explored cells that are obstacle-occupied), \textit{goals} (explored cells that are obstacle-free and yet to get visited), and \textit{covered} (explored cells that are obstacle-free and have already been visited) cells, respectively. 
Next, it creates a request message $\mathtt{M_{req}}$, containing its $\mathtt{id}\ i$, current $\mathtt{state}\ s^i_0$, and local $\mathtt{view}\ W^i$ (line 2). 
Then, it sends the $\mathtt{M_{req}}$ to the CP (line 4) and waits for the response message $\mathtt{M_{res}}$ to arrive (line 5). 
Upon arrival, it extracts its path $\sigma^i$ and associated timestamp $T_{start} \in \mathbb{N}$ from the $\mathtt{M_{res}}$, and starts following $\sigma^i$ from $T_{start}$ (line 6). 
While following, it uses its rangefinders to explore previously unexplored workspace cells and updates $W^i$ accordingly. 
Finally, it reaches its goal $\mathcal{L}(s^i_{|\sigma^i|})$ corresponding to the state $s^i_{|\sigma^i|}$ when $CLK$ becomes $T_{start} + |\sigma^i|$. 
So, it updates the $\mathtt{M_{req}}$ with $s^i_{|\sigma^i|}$ and $W^i$ (line 7) and sends it to the CP in a \textbf{while} loop (lines 3-7). 

\begin{algorithm*}[t]
    \footnotesize
    \DontPrintSemicolon
    
    \caption{$\mathtt{ConCPP}(R, S)$}
    \label{algo:concpp}

    
    \begin{multicols}{2}
    
    $I_{par} \gets \emptyset, S \gets \emptyset, W \gets \emptyset$     \tcp*{Init.\ }
    $\eta \gets R$      \tcp*{Intended number of participants\ }
    $\Pi \gets \{\pi^i\ |\ i \in [R]\}$ where $\pi^i \gets s^i_0$\;
    $T_{stop}[i] \gets 0, \forall i \in [R]$\;
    \BlankLine
    \Srv{\SrvRcvLV{$\mathtt{M_{req}}$}}{
        $I_{par} \gets I_{par} \cup \{\mathtt{M_{req}.id}\}$\;
        $S \gets S \cup \{\mathtt{M_{req}.state}\}$\;
        $W \gets \mathtt{update\_globalview(M_{req}.view)}$\;
        \FnCheckCPPCriteria{}\;
    }
    \BlankLine    
    \Fn{\FnCheckCPPCriteria{}}{
        \If{$|I_{par}| = \eta$}{
            $\widetilde{W_g} \gets \mathtt{get\_reserved\_goals}(\overline{I_{par}}, \Pi)$\; 
            \If(\tcp*[f]{Start\ }){$(W_g \setminus \widetilde{W_g}) \neq \emptyset$}{
                $W^* \gets W$       \tcp*{Take snapshot\ }
                $I^*_{par} \gets I_{par}$\;
                $S^* \gets S$\;
                $I_{par} \gets \emptyset, S \gets \emptyset$       \tcp*{Reinit.\ }
                $\eta \gets 0$\;
                \textcolor{magenta}{$\mathtt{THD}$}.\FnConCPPRound{$W^*, \widetilde{W_g}, I^*_{par}, S^*$}\;
            }
            \ElseIf(\tcp*[f]{Stop (Coverage Complete)}){$|I_{par}| = R$}{
                \Exit()\;
            }
            \Else(\tcp*[f]{Skip\ }){
                \FnIncrementEta($\overline{I_{par}}$)\;
            }
        }
    }
    \BlankLine
    \Fn{\FnIncrementEta{$I$}}{
        $T_{min} \gets \min \{T_{stop}[i]\ |\ i \in I \land\ T_{stop}[i] > CLK\}$\;
        \For{$i \in I$}{
            \If{$T_{stop}[i] = T_{min}$}{
                $\eta \gets \eta + 1$     \tcp*{New participant to be \ \ }
            }
        }
    }
    \BlankLine
    \Fn{\FnConCPPRound{$W^*, \widetilde{W_g}, I^*_{par}, S^*$}}{
        $\langle \Sigma, T_{start} \rangle \gets \FnConCPPForPar(W^*, \widetilde{W_g}, I^*_{par}, S^*, M, R, \Pi)$\;
        \For{$i \in I^*_{par}$}{
            \If(\tcp*[f]{Active \ \ }){$|\sigma^i| > 0$}{
                $\zeta \gets \mathtt{dummy\_path}(s^i_0, T_{start} - T_{stop}[i] - 1)$\;
                $\pi^i \gets \pi^i : \zeta : \sigma^i$       \tcp*{Concatenation \ \ }
                $T_{stop}[i] \gets T_{start} + |\sigma^i|$\;
            }
            \Else(\tcp*[f]{Inactive \ \ }){
                $I_{par} \gets I_{par} \cup \{i\}$\;
                $S \gets S \cup \{s^i_0\}$\;
                $\eta \gets \eta + 1$\;
            }
        }
        \For{$j \in \overline{I^*_{par}}$}{
            \If{$T_{stop}[j] \leq CLK$}{
                $\eta \gets \eta + 1$       \tcp*{New participant \ \ }
            }
        }
        \If{$\eta = 0$}{
            \FnIncrementEta($[R]$)\;
        }
        $\mathtt{send\_paths\_to\_active\_participants}$($I^*_{par}, \Sigma, T_{start}$)\;
        \FnCheckCPPCriteria{}\;
    }
    \end{multicols}
     \BlankLine
\end{algorithm*}

Now, we explain \FnConCPP (Algorithm \ref{algo:concpp}), which stands for \textbf{Con}current \textbf{CPP}. 
After initializing the required data structures (lines 1-4), the CP starts a \textit{service} (lines 5-9) to receive requests from the robots in a \textit{mutually exclusive} manner. 
When it receives a request $\mathtt{M_{req}}$ from $r^i$, it adds $r^i$'s ID $i$ to $I_{par}$ (line 6), current state $s^i_0$ to $S$ (line 7), and updates the global view $W$ with $r^i$'s local view $W^i$ (line~8) \cite{DBLP:conf/arxiv/MitraS23}. 
Thus, $r^i$ becomes a participant in the coming planning round. 
Then, it invokes \FnCheckCPPCriteria (line 9) to check whether it can start a planning round. 

In \FnCheckCPPCriteria (lines 10-23), the CP decides not only whether to \textit{start} or \textit{skip} a planning round but also whether to \textit{stop} itself based on the current information about the workspace and the robots. 
First, the CP checks whether there are $\eta$ participants, i.e., $|I_{par}| = \eta$ (line 11). 
If yes, it finds the goals $\widetilde{W_g}$ already assigned to the non-participants $[R] \setminus I_{par}$ (denoted by $\overline{I_{par}}$) in previous rounds (line 12). 
Formally, $\widetilde{W_g} = \{\mathcal{L}(s^i_{|\pi^i|})\ |\ i \in \overline{I_{par}} \land |\pi^i| > 0\}$, i.e., $\widetilde{W_g}$ contains the goals corresponding to the non-participants' last states in their full paths. 
Recall that the first state in a full path is the start state of the corresponding robot, which is not a goal by definition (line 3). 
As the CP cannot assign the participants to $\widetilde{W_g}$, it checks whether there are unassigned goals, i.e., $W_g \setminus \widetilde{W_g}$ left in the workspace (line 13). 
If yes, it starts a planning round (lines 14-19). 
So, it takes a snapshot of the current information (lines 14-16) and invokes \FnConCPPRound with the snapshot (line 19), which generates timestamped paths for the participants, leading to some unassigned goals. 
Notice that the CP invokes \FnConCPPRound using a \textit{thread} $\mathtt{THD}$ to simultaneously receive new requests from some non-participants when the current planning round goes on for the participants. 
So, it resets $I_{par}$ and $S$ (line 17) to receive new requests and subsequently $\eta$ (line 18) to prevent another planning round from starting until the current planning round ends. 
Otherwise, i.e., when there are no unassigned goals, the CP cannot plan for the participants. 
So, it checks whether all the robots are participants, i.e., $|I_{par}| = R$ (line 20). 
If yes, the CP stops (line 21) as it indicates complete coverage \longversion{(proved in Theorem \ref{theorem:concpp_complete_coverage})}. 
Otherwise, the CP skips planning (lines 22-23) due to the unavailability of goals. 
Nevertheless, new goals (if any) get added to $W_g$ when some non-participants send their requests after reaching their goals, thereby becoming participants. 
So, the CP needs to increment $\eta$ for those non-participants by invoking \FnIncrementEta (line 23). 

In \FnIncrementEta (lines 24-28), $T_{min} \in \mathbb{N}_0$ denotes the $CLK$ when at least one non-participant becomes a participant (line 25). 
So, the \textbf{for} loop (lines 26-28) increments $\eta$ by the number of non-participants that become participants at $T_{min}$, enabling the \textbf{if} condition (line 11) to get satisfied. 

\shortversion
{
\begin{figure}[t]
    \centering
    \includegraphics[scale=0.24]{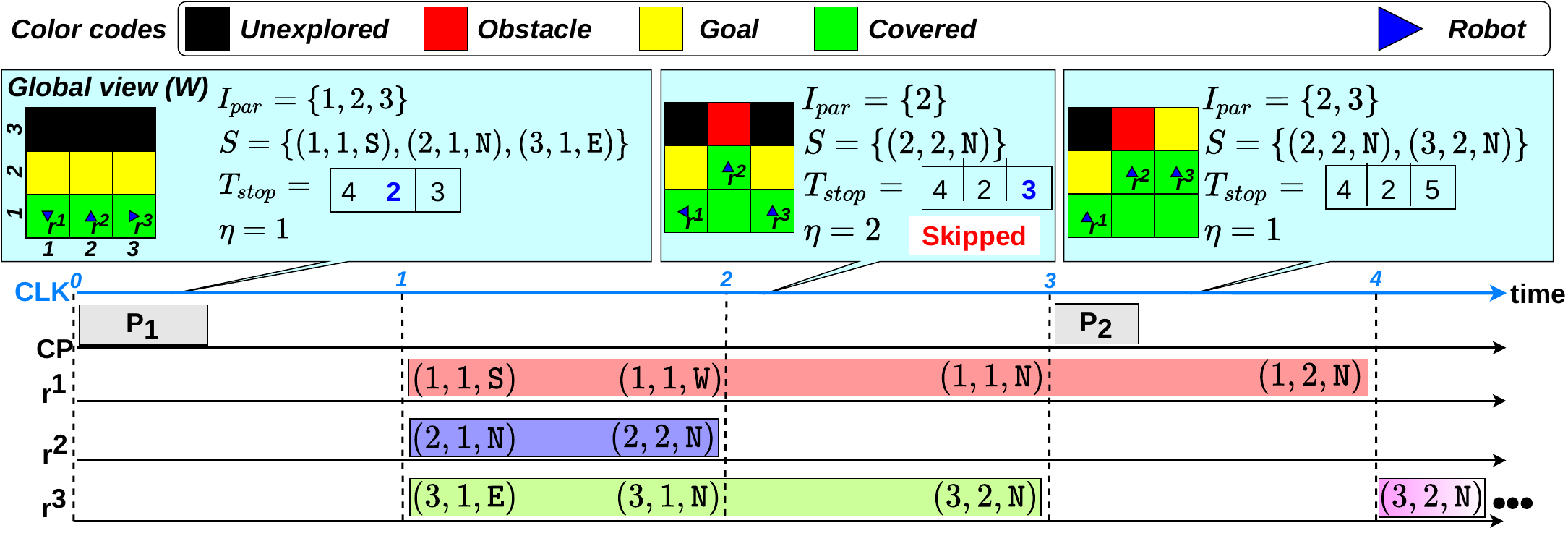}
    \caption{Skip replanning the participants}
    \label{fig:skip}
\end{figure}
}
\longversion
{
\begin{figure*}[t]
    \centering
    \includegraphics[scale=0.4]{figures_new/skip_planning.pdf}
    \caption{Skip replanning the participants}
    \label{fig:skip}
\end{figure*}
}

\begin{example}
\label{exp:skip}
In Figure \ref{fig:skip}, we show an example where the CP skips replanning a participant due to the unavailability of goals. 
Initially, all three robots $r^1, r^2$, and $r^3$ participate. 
As there are no non-participants, $\widetilde{W_g} = \emptyset$. 
Moreover, $W_g \setminus \widetilde{W_g} \neq \emptyset$. 
So, the first round P\textsubscript{1} begins, where the CP finds paths, timestamped with $1$, of lengths $3, 1,$ and $2$, respectively. 
So, it updates $T_{stop}$ accordingly and sets $\eta = 1$ because $r^2$ reaches its goal at $CLK = 2$, which is the earliest. 
When $r^2$ sends its updated request, the CP cannot start another round because the non-participants' reserved goals $\widetilde{W_g} = \{(1, 2), (3, 2)\}$ make $W_g \setminus \widetilde{W_g} = \emptyset$. 
So, the CP skips replanning $r^2$ (line 22 of Algorithm \ref{algo:concpp}) and increments $\eta$ to $2$ because $r^3$'s request arrives at $CLK = 3$, which is now the earliest. 
When $r^3$ sends its updated request, a goal becomes available as $\widetilde{W_g} = \{(1, 2)\}$ make $W_g \setminus \widetilde{W_g} \neq \emptyset$. 
So, P\textsubscript{2} begins for $r^2$ and $r^3$. 
\end{example}
\subsection{Coverage Path Planning in the current round}
\label{subsec:concpp_round}

\FnConCPPRound (lines 29-46) performs two tasks, viz., replanning the participants of the current round (line 30) and dynamically setting $\eta$ for the next round (lines 31-44). 
First, the CP invokes \FnConCPPForPar (Algorithm \ref{algo:cpp_for_par}, discussed in section \ref{subsec:cpp_for_participants}), which finds the paths $\Sigma = \{\sigma^i\ |\ i \in I^*_{par}\}$ timestamped with $T_{start}$ for the participants $I^*_{par}$. 
These paths $\Sigma$ must be collision-free w.r.t. the existing paths of the non-participants $\overline{I^*_{par}}$, which the CP checks from $\Pi$. 

Next, the CP examines $\Sigma$ (lines 31-39) to determine which participants are active (lines 32-35) and which are not (lines 36-39). 
If a participant $r^i$, where $i \in I^*_{par}$, is found active (i.e., $|\sigma^i| > 0$), the CP creates a \textit{dummy} path $\zeta$, containing $T_{start} - T_{stop}[i] - 1$ many $s^i_0$ state (line 33) because $r^i$ \textit{halts} at $s^i_0$ for $CLK \in \{(T_{stop}[i] + 1), \cdots ,(T_{start} - 1)\}$ before starting to follow $\sigma^i$ from $T_{start}$. 
Then, it updates $r^i$'s full path $\pi^i$ with $\zeta$ and $\sigma^i$ (line 34). 
It also updates $T_{stop}[i]$ (line 35) because after following $\sigma^i$ from $T_{start}$, $r^i$ reaches its goal at $T_{start} + |\sigma^i|$ and sends its updated request. 
In contrast, if $r^i$ is found inactive, the CP adds $r^i$'s ID $i$ to $I_{par}$, its state $s^i_0$ to $S$, and increments $\eta$ by $1$ (lines 37-39) as it reattempts to find a path for $r^i$ in the next round. 
Thus, inactive participants in the current round will become participants in the next round. 

As the current round goes on, a non-participant $r^j$, where $j \in \overline{I^*_{par}}$, sends its request to the CP if it reaches its goal recently, i.e., $T_{stop}[j] \leq CLK$. 
So, the CP increases $\eta$ for planning their paths in the next round (lines 40-42). 
At this point (line 43), $\eta$ remains $0$ if all the participants $I^*_{par}$ are found active (lines 32-35) and all the non-participants $\overline{I^*_{par}}$ (if any) are yet to reach their goals (the implicit \textbf{else} part of the \textbf{if} statement in line 41). 
So, the CP invokes \FnIncrementEta for all the robots (lines 44 and 24-28), which first determines $T_{min}$ (line 25), the earliest $CLK$ value when it will receive new requests, and subsequently sets $\eta$ to the number of requests arriving at $T_{min}$ (lines 26-28). 
In the penultimate step, the CP invokes $\mathtt{send\_paths\_to\_active\_participants}$ (line 45) that creates a response message $\mathtt{M_{res}}$ containing path $\sigma^i$ and timestamp $T_{start}$ for each active participant $r^i$, and subsequently sends that $\mathtt{M_{res}}$ to $r^i$. 

Finally, the CP concludes the current round by invoking \FnCheckCPPCriteria (line 46), which checks whether it can start the next round to plan paths for the inactive participants of the current round (lines 36-39), the new participants (lines 40-42), or both. 
Lines 31-46 constitute a critical section as the CP updates $I_{par}, S$, and $\eta$. 

\begin{example}
In Example \ref{exp:concpp}, the CP calls \FnIncrementEta (line 44 of Algorithm \ref{algo:concpp}) in $P\textsubscript{2-4, 6}$ but not in $P\textsubscript{1}$ and $P\textsubscript{5}$. 
In $P\textsubscript{1}$, the CP finds participant $r^1$ inactive (line 39) and during $P\textsubscript{5}$ it receives a request from non-participant $r^3$ (line 42). 
\end{example}
\subsection{Coverage Path Planning for the Participants}
\label{subsec:cpp_for_participants}

\FnConCPPForPar (Algorithm \ref{algo:cpp_for_par}) finds the participants' paths $\Sigma$ with timestamp  $T_{start}$ in two steps, viz., \textit{cost-optimal path} finding (line 1) and \textit{collision-free path} finding (lines 2-11). 
It is based on Algorithm 3 of \cite{DBLP:conf/arxiv/MitraS23}, which finds only $\Sigma$ while keeping the non-participants' existing paths \textbf{intact}. 

\begin{algorithm}[t!]
    \footnotesize
    \DontPrintSemicolon
    
    \caption{$\mathtt{ConCPPForPar}(W^*, \widetilde{W_g}, I^*_{par}, S^*, M, R, \Pi)$}
    \label{algo:cpp_for_par}

    \KwResult{Participants' paths $\Sigma$ with timestamp $T_{start}$}
    
    $\langle \Gamma, \Phi \rangle \gets \mathtt{COPForPar}(W^*, \widetilde{W_g}, I^*_{par}, S^*, M)$\;
    $la \gets 1$        \tcp*{Number of $CLK$ values to look-ahead}
    \While{$true$}{
        $t^{start}_{cur} \gets \mathtt{read\_current\_time()}$\;
        $T_{start} \gets CLK + la$\;
        $\Sigma_{rem} \gets \mathtt{get\_remaining\_paths}(\overline{I^*_{par}}, \Pi, T_{start})$\;
        $\Sigma \gets \mathtt{CFPForPar}(\Gamma, \Phi, I^*_{par}, R, \Sigma_{rem})$\;
        \lIf(\tcp*[f]{Success}){$CLK < T_{start}$}{
            \Break
        }
        \Else(\tcp*[f]{Failure! So, Reattempt}){
            $t^{end}_{cur} \gets \mathtt{read\_current\_time()}$\;
            $la \gets 1 + [\lfloor\{t^{end}_{cur} + (t^{end}_{cur} - t^{start}_{cur})\} / \tau\rfloor - CLK]$\;
        }
    }
\end{algorithm}

\smallskip
\subsubsection{Cost-optimal Paths}

In the current round, let $R^*$ be the number of participants, i.e., $R^* = |I^*_{par}| = |S^*|$ and $G^*$ be the number of unassigned goals, i.e., \mbox{$G^* = |W^*_g \setminus \widetilde{W_g}|$}. 
So, the CP finds an assignment $\Gamma$ using the Hungarian algorithm~\cite{kuhn1955hungarian} to cost-optimally assign the participants $I^*_{par}$ to the unassigned goals having IDs $[G^*]$. 
Note that the CP uses the A$^*$ algorithm \cite{DBLP:journals/tssc/HartNR68} to find the optimal path and its cost for each participant-unassigned goal pair. 
Formally, \mbox{$\Gamma: I^*_{par} \rightarrow [G^*] \cup \{\mathsf{NULL}\}$}, where $\Gamma[i]$ denotes the goal ID assigned to a participant $r^i$. 
Note that $\Gamma[i]$ can be $\mathsf{NULL}$, e.g., when $R^* > G^*$. 
Such a participant is said to be \emph{inactive}, whose optimal path $\varphi^i$ only contains its current state $s^i_0$. 
Otherwise, it is said to be \emph{active}. 
Therefore, its optimal path $\varphi^i$ leads $r^i$ from its current state $s^i_0$ to the goal having id $\Gamma[i]$ using motions $M$. 
Such a path $\varphi^i$ passes through some cells in $W^*_c \cup W^*_g$ but avoids cells in $W^*_o \cup W^*_u$. 
We denote the set of participants' optimal paths by $\Phi = \{\varphi^i\ |\ i \in I^*_{par}\}$. 

\smallskip
\subsubsection{Collision-free Paths}
\label{subsubsec:cfp_for_par}

The participants' optimal paths in $\Phi$ are not necessarily \textit{collision-free}, meaning a participant $r^i$, where $i \in I^*_{par}$, may collide with another participant $r^j$, $j(\neq i) \in I^*_{par}$ or a non-participant $r^k$, where $k \in \overline{I^*_{par}}$. 
So, the CP must make the participants' paths collision-free. 
Moreover, it must find the timestamp $T_{start}$ of those collision-free paths. 
So, first, the CP sets $T_{start}$ (line 5) to be the next $CLK$ value using a \textit{look-ahead} $la \in \mathbb{N}$ (line 2). 

Next, the CP precisely computes the remaining paths $\Sigma_{rem} = \{\sigma^k_{rem}\ |\ k \in \overline{I^*_{par}}\}$ of the non-participants starting from $T_{start}$ (line 6) using the following Equation \ref{eq:rem_path}. 

\begin{equation}
    \label{eq:rem_path}
    \begin{aligned}
        \sigma^k_{rem} = 
            \begin{cases}
                s^k_{|\pi^k|},  & \text{if $|\pi^k| \leq T_{start}$}\\
                s^k_{T_{start}}\ \cdots\ s^k_{|\pi^k|},  & \text{otherwise.}
            \end{cases}
    \end{aligned}
\end{equation}

The remaining path $\sigma^k_{rem}$ of a non-participant $r^k$, where $k \in \overline{I^*_{par}}$, contains the last state of $r^k$ in its full path $\pi^k$ if $r^k$ reaches its goal by $T_{start}$. 
Otherwise, $\sigma^k_{rem}$ contains the remaining part of $\pi^k$, which $r^k$ is yet to traverse from $T_{start}$. 

Now, the CP invokes $\mathtt{CFPForPar}$, which uses the idea of \textit{Prioritized Planning} to generate the collision-free paths $\Sigma$ for the participants $I^*_{par}$ without altering the remaining paths $\Sigma_{rem}$ for the non-participants $\overline{I^*_{par}}$. 
It is a two-step procedure. 
In the first step, it \textit{dynamically} prioritizes the participants based on their movement constraints in $\Phi$. 
For example, if the start location $\mathcal{L}(s^i_0)$ of a participant $r^i$ is on the path $\varphi^j$ of another participant $r^j$, then $r^i$ must depart from its start location before $r^j$ gets past that location. 
Similarly, if the goal location $\mathcal{L}(s^i_{|\varphi^i|})$ of $r^i$ is on $\varphi^j$ of $r^j$, then $r^j$ must get past that goal location before $r^i$ arrives at that location. 
The non-participants implicitly have higher priorities than the participants, as the CP does not change $\Sigma_{rem}$. 
In the last step, it \textit{offsets} the participants' paths by prefixing them with $\mathtt{H}$ moves in order of their priority to avoid collision with higher priority robots. 
When a collision between a participant and a non-participant becomes inevitable, it inactivates the participant and returns to the first step. 
Otherwise, the offsetting step succeeds, and the participants' paths $\Sigma$ become collision-free. 
In the worst case, all the participants may get inactivated to avoid collisions with the non-participants during offsetting (see Example \ref{exp:cfp_for_par}). 

\shortversion
{
\begin{figure}[t]
    \centering
    \includegraphics[scale=0.22]{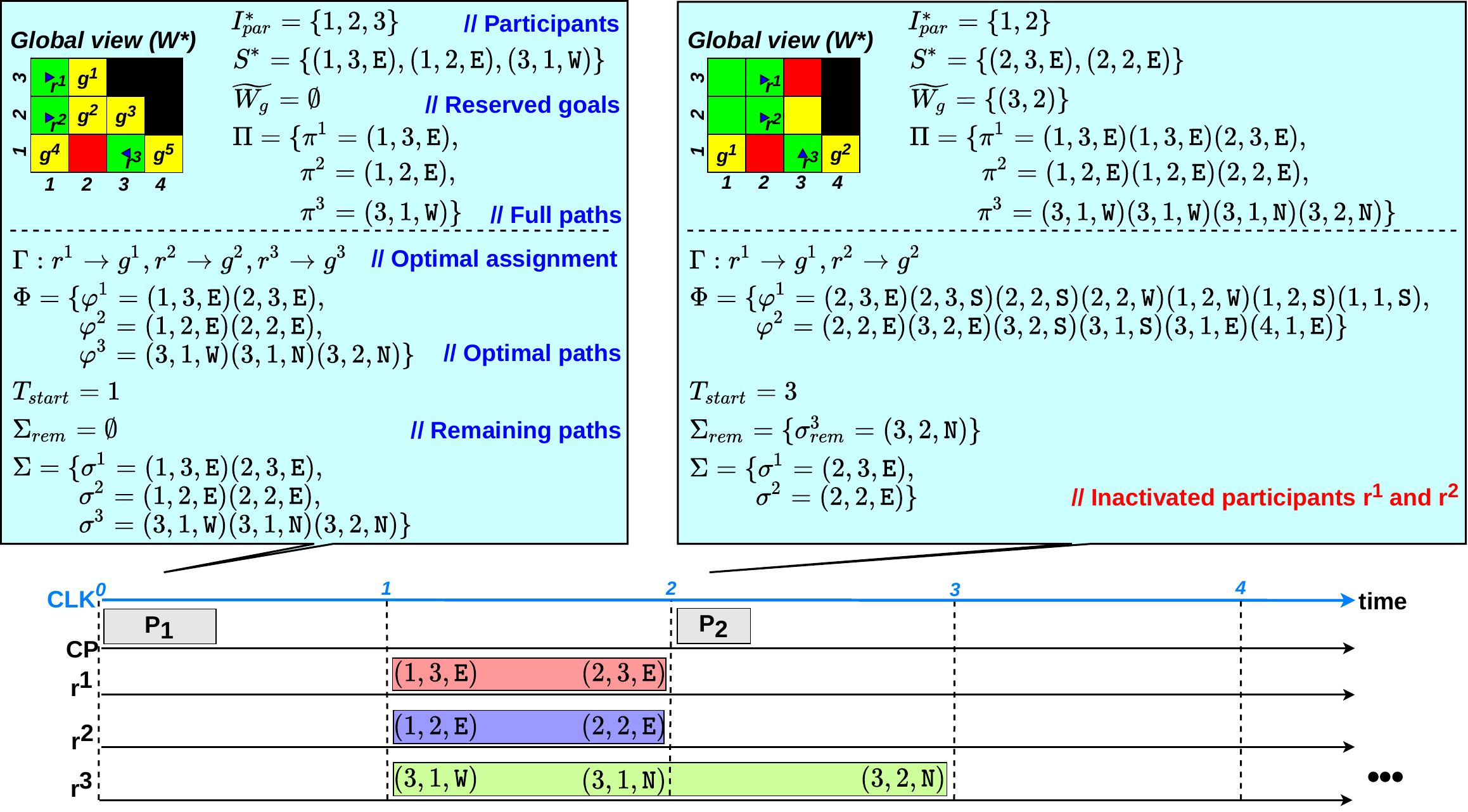}        
    \caption{Generating collision-free paths}
    \label{fig:cfp}
\end{figure}
}
\longversion
{
\begin{figure*}[t]
    \centering
    \includegraphics[scale=0.4]{figures_new/cfp.pdf}        
    \caption{Generating collision-free paths}
    \label{fig:cfp}
\end{figure*}
}
\begin{example}
\label{exp:cfp_for_par}
Figure \ref{fig:cfp} shows an example where only robots $r^1$ and $r^2$ participate in the planning round P\textsubscript{2}. 
So, the CP finds their optimal paths $\varphi^1$ and $\varphi^2$, leading them to goals $g^1$ and $g^2$, respectively. 
Participant $r^2$ has higher priority than $r^1$ as $r^2$'s start location is on $\varphi^1$, meaning $r^2$ must leave its start location before $r^1$ passes through that location. 
During offsetting, first, the CP inactivates $r^2$ because non-participant $r^3$'s remaining path $\sigma^3_{rem}$ blocks $\varphi^2$. 
Subsequently, the CP inactivates $r^1$ because recently inactivated $r^2$ now blocks $\varphi^1$. 
\end{example}

As the CP timestamps $\Sigma$ with $T_{start}$, an active participant $r^i$ must start following its path $\sigma^i$ from $T_{start}$. 
Put differently, $r^i$ must receive $\sigma^i$ from the CP before $CLK$ becomes $T_{start}$ (line 8). 
Otherwise, the CP \textit{reactively} updates $la$ (lines 4 and 9-11) to reattempt with a new $T_{start}$ (line 5). 

\begin{figure}
    \centering
    \includegraphics[scale=0.45]{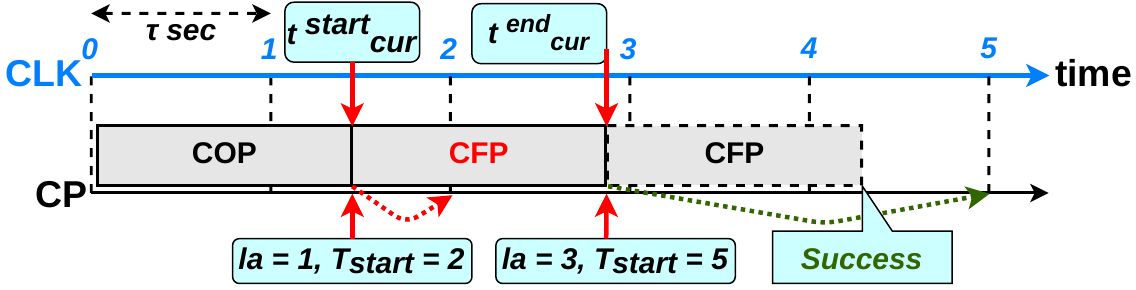}
    \caption{Reattempt to find the participants' collision-free paths}
    \label{fig:cfp_reattempt}
\end{figure}

\begin{example}
In Figure \ref{fig:cfp_reattempt}, we show an example where the CP sets $T_{start}$ to $2$ at the beginning of the collision-free pathfinding. 
As it finishes the computation, $CLK$ is already $2$. 
Hence, the participants' paths $\Sigma$ become space-time inconsistent, which the CP cannot send to the participants. 
So, it updates $la$ to $3$, sets $T_{start}$ to $5$, and reattempts. 
Now, the CP succeeds in finishing the computation on time. 
\end{example}
\section{Theoretical Analysis}
\label{sec:theoretical_analysis}

In this section, first, we formally prove that \FnConCPP guarantees complete coverage of the unknown workspace $W$. 
Then, we analyze its time complexity. 

\longversion
{
\subsection{Proof of Complete Coverage}
\label{subsec:complete_coverage}
}
\longversion
{
First, we give the outline of the proof. 
Horizon-based CP \cite{DBLP:conf/iros/MitraS22}, in each horizon, replans all the robots and guarantees that at least one robot remains active to visit its goal, which we mention in Lemma \ref{lem:gamrcpp_horizon}. 
Another horizon-based CP~\cite{DBLP:conf/arxiv/MitraS23}, in each horizon, replans for only a subset of robots (the participants) on demand. 
Therefore, in a horizon, if all the robots are participants, i.e., $R^* = R$, the latter idea becomes equivalent to the former, which we acknowledge in Lemma \ref{lem:cppforpar_equivalence_gamrcpphorizon}. 
As our CP builds upon the on-demand replanning approach of \cite{DBLP:conf/arxiv/MitraS23}, in Lemma \ref{lem:concppforpar_equivalence_cppforpar}, we establish the fact that in a round, if all the robots are participants, there is at least one active participant to visit its goal. 
To prove incremental coverage, in Lemma \ref{lem:eta_R}, we establish that rounds where all the robots are participants occur always eventually. 
Finally, in Theorem \ref{theorem:concpp_complete_coverage}, we prove that the incremental coverage eventually leads to complete coverage of $W$ when \FnConCPP terminates. 
}
\longversion
{
\begin{lemma}[Lemma $4.3$ in \cite{DBLP:conf/iros/MitraS22}]
\label{lem:gamrcpp_horizon}
\FnGAMRCPPHorizon ensures that at least one goal gets visited in each horizon. 
\end{lemma}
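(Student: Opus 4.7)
The plan is to show that in any horizon of \FnGAMRCPPHorizon, as long as at least one obstacle-free goal cell is known to remain unvisited, the algorithm produces a set of timestamped, collision-free paths in which at least one path is non-trivial (length $> 0$) and terminates at a goal cell that then becomes covered. Since \FnGAMRCPPHorizon is horizon-based, every robot is treated as a participant and the set of non-participants is empty; this observation is what lets us avoid the failure mode illustrated in Example~\ref{exp:cfp_for_par}, where inactivations cascaded only because some $\sigma^k_{rem}$ blocked a participant's optimal path.

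First, I would dispatch the cost-optimal phase. Under the assumption that $W_g \neq \emptyset$ and that $W_{free}$ is strongly connected, the Hungarian assignment $\Gamma$ produces at least one non-$\mathsf{NULL}$ entry, so $\Phi$ contains at least one optimal path $\varphi^i$ of positive length connecting the current state $s^i_0$ of some robot $r^i$ to an unvisited goal cell while passing only through $W_c \cup W_g$. Next I would analyze the prioritized collision-free phase. The dynamic priority order is built from the start/goal dependencies among the $\varphi^i$'s: if $\mathcal{L}(s^i_0)$ lies on $\varphi^j$ then $r^i$ precedes $r^j$, and if $\mathcal{L}(s^i_{|\varphi^i|})$ lies on $\varphi^j$ then $r^j$ precedes $r^i$. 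I would argue that restricting to active participants, this relation admits at least one maximal element, i.e., a robot $r^{i^\star}$ with no predecessor in the priority order.

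The key step is then to show that $r^{i^\star}$ is never inactivated during offsetting. Because there are no non-participants, the only source of forced inactivation in \FnCFPForPar comes from a non-participant's remaining path $\sigma^k_{rem}$ obstructing $\varphi^{i^\star}$; since $\overline{I^*_{par}} = \emptyset$ in a horizon of \FnGAMRCPPHorizon, no such $\sigma^k_{rem}$ exists. All other potential conflicts are with lower-priority participants, and by construction those are resolved by prefixing $\mathtt{H}$ moves to the lower-priority robot, not to $r^{i^\star}$. Hence $r^{i^\star}$ keeps its optimal path with zero offset, remains active, and upon execution of the horizon reaches $\mathcal{L}(s^{i^\star}_{|\varphi^{i^\star}|}) \in W_g$, which is then marked covered.

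The main obstacle I anticipate is the well-foundedness of the priority order: one must rule out cyclic start/goal dependencies that would prevent any maximal element from existing. I would address this by observing that start locations and assigned goal locations are pairwise distinct (start cells belong to disjoint robots by the definition of $S$, and goals are distinct outputs of an injective Hungarian assignment), and by using the cost-optimality of $\Phi$ together with the ``no two robots share a cell at the same time'' invariant inherited from earlier horizons to rule out the specific cyclic patterns that would otherwise trap every participant behind another. Once a sink in this dependency DAG is identified, the argument above applies to that sink and the lemma follows.
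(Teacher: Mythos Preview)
First, note that the present paper does not give its own proof of this lemma: it is quoted verbatim as Lemma~4.3 of \cite{DBLP:conf/iros/MitraS22} and used as a black box in the proofs of Lemma~\ref{lem:eta_R} and Theorem~\ref{theorem:concpp_complete_coverage}. So there is no in-paper argument to compare against; a full check would require the original \FnGAMRCPP paper.

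On the substance of your proposal, the structure is sound up to the step you yourself flag as ``the main obstacle'', and that step is a genuine gap rather than a formality. You claim the start-on-path/goal-on-path priority relation admits a maximal element and propose to derive this from distinctness of starts and goals together with cost-optimality of $\Phi$. Distinctness alone does nothing: nothing in ``all starts are different and all goals are different'' prevents $\mathcal{L}(s^1_0)\in\varphi^2$ and $\mathcal{L}(s^2_0)\in\varphi^1$ simultaneously. Cost-optimality of the Hungarian assignment is what actually helps, and the argument you need is an \emph{exchange} argument: if both inclusions hold along shortest paths, then swapping the two assignments reduces the total cost by $2\,d(s^1_0,s^2_0)>0$, contradicting optimality. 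That handles $2$-cycles among start-on-path edges, and a cyclic-permutation version handles longer cycles; goal-on-path cycles are symmetric. But your sketch does not address \emph{mixed} cycles that combine start-on-path and goal-on-path edges, and the exchange argument does not extend to those in an obvious way. Until that case is closed you cannot conclude the dependency relation is acyclic, and without acyclicity the ``pick a sink of the DAG'' step collapses.

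A smaller point: you analyse \FnGAMRCPPHorizon through the \FnCOPForPar/\FnCFPForPar pipeline of this paper. That transfer is legitimate only via Lemmas~\ref{lem:cppforpar_equivalence_gamrcpphorizon} and~\ref{lem:concppforpar_equivalence_cppforpar}, so you should make that reliance explicit and check that neither of those equivalences itself depends on Lemma~\ref{lem:gamrcpp_horizon} (they do not, but you are tacitly assuming so).
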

}
\longversion
{
\begin{lemma}[Lemma $2$ in \cite{DBLP:conf/arxiv/MitraS23}]
\label{lem:cppforpar_equivalence_gamrcpphorizon}
\FnCPPForPar $\equiv$ \FnGAMRCPPHorizon if \mbox{$R^* = R$}. 
\end{lemma}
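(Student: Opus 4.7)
My plan is to argue structurally that when $R^* = R$, every step of \FnCPPForPar degenerates into the corresponding step of \FnGAMRCPPHorizon, because the only features that distinguish the former from the latter are precisely those that handle non-participants. I would begin by listing the places in \FnConCPPForPar (Algorithm \ref{algo:cpp_for_par}) where non-participants enter the computation: (i) the set of reserved goals $\widetilde{W_g}$ obtained from non-participants' committed goals in $\Pi$, (ii) the remaining-path set $\Sigma_{rem}$ computed from Equation \ref{eq:rem_path}, (iii) the implicit high-priority slot that non-participants occupy during prioritized planning in $\mathtt{CFPForPar}$, and (iv) the inactivation branch that is triggered when a non-participant blocks a participant. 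The hypothesis $R^* = R$ means $\overline{I^*_{par}} = \emptyset$, so each of these four ingredients is vacuous: $\widetilde{W_g} = \emptyset$, $\Sigma_{rem} = \emptyset$, the priority order contains only participants, and no participant can ever be inactivated by a non-participant collision.

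Next I would handle the cost-optimal path finding step. With $\widetilde{W_g}=\emptyset$ the Hungarian assignment in $\mathtt{COPForPar}$ is over the full set of visible goals $W^*_g$ and the full robot set $I^*_{par}=[R]$, which is exactly the input that \FnGAMRCPPHorizon presents to its assignment stage. Since both algorithms build optimal paths $\varphi^i$ over the same workspace snapshot $W^*$ with the same motion set $M$ from identical start states, the resulting $\Phi$ is identical in the two procedures.

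For the collision-free stage, I would argue that $\mathtt{CFPForPar}$ restricted to the case $\Sigma_{rem}=\emptyset$ collapses to the prioritized inter-participant conflict resolution used by \FnGAMRCPPHorizon. The dynamic priority rule described in Section \ref{subsec:cpp_for_participants} (earlier-departure / later-arrival on another path) is defined purely in terms of the participants' own optimal paths in $\Phi$, so with no non-participants present it produces exactly the priority order that \FnGAMRCPPHorizon would build. The offset-by-\texttt{Halt} step is identical under both algorithms once the priority order agrees, and the inactivation-and-restart branch can now only be triggered by a participant-participant conflict, which is the same branch \FnGAMRCPPHorizon uses. Finally, the timestamp $T_{start}$ is an artifact of concurrent execution that has no analogue in the horizon-based setting but does not affect the symbolic content of $\Sigma$; I would note this as a bookkeeping-only difference and conclude equivalence up to that label.

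The main obstacle I foresee is formalizing the equivalence of the priority-construction and offset steps, since these are described only informally in the excerpt. I would handle this by fixing notation for the participant-only priority graph induced by $\Phi$, showing that its topological orders coincide with those generated by \FnGAMRCPPHorizon, and then arguing by induction on that order that the halt-prefix assigned to each participant in $\mathtt{CFPForPar}$ matches the one assigned by \FnGAMRCPPHorizon. Everything else (assignment, snapshot construction, inactivation bookkeeping) reduces to direct inspection once the non-participant-dependent branches are shown to be dead code.
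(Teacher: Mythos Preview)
The paper does not supply its own proof of this lemma: it is imported verbatim as ``Lemma~2 in \cite{DBLP:conf/arxiv/MitraS23}'', so there is nothing in the present paper to compare against beyond the citation itself. Both \FnCPPForPar and \FnGAMRCPPHorizon are defined only in the cited prior works, not here.

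More importantly, your proposal is not arguing about \FnCPPForPar at all --- you explicitly work through \FnConCPPForPar (Algorithm~\ref{algo:cpp_for_par}), invoking the \texttt{while} loop, the remaining-path computation $\Sigma_{rem}$ of Equation~\ref{eq:rem_path}, and the timestamp $T_{start}$. None of these exist in \FnCPPForPar; they are precisely the features that Lemma~\ref{lem:concppforpar_equivalence_cppforpar} strips away to reduce \FnConCPPForPar to \FnCPPForPar. What you have sketched is therefore a direct argument for \FnConCPPForPar $\equiv$ \FnGAMRCPPHorizon when $R^* = R$, i.e., the \emph{composition} of Lemmas~\ref{lem:cppforpar_equivalence_gamrcpphorizon} and~\ref{lem:concppforpar_equivalence_cppforpar}, rather than Lemma~\ref{lem:cppforpar_equivalence_gamrcpphorizon} in isolation.

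Your structural intuition --- that every non-participant-dependent branch becomes dead code when $\overline{I^*_{par}} = \emptyset$, collapsing the participant-aware planner to the all-robot horizon planner --- is the right idea and is essentially how the cited paper argues. But to make this a proof of the stated lemma you would need to (i) argue about \FnCPPForPar itself, which requires importing its definition from \cite{DBLP:conf/arxiv/MitraS23}, and (ii) drop the discussion of $T_{start}$ and the reattempt loop, which belong to Lemma~\ref{lem:concppforpar_equivalence_cppforpar}, not here.
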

}
\longversion
{
\begin{lemma}
\label{lem:concppforpar_equivalence_cppforpar}
In a round, \FnConCPPForPar $\equiv$ \FnCPPForPar if all the robots are participants. 
\end{lemma}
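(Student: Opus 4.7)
The plan is to trace through Algorithm~\ref{algo:cpp_for_par} line by line under the hypothesis that all robots are participants in the current round, i.e., $I^*_{par} = [R]$ and therefore $\overline{I^*_{par}} = \emptyset$, and to show that every component of \FnConCPPForPar that does not occur in \FnCPPForPar collapses to a no-op in this setting. Since both algorithms share the same cost-optimal assignment subroutine \FnCOPForPar on line~1, the only potential discrepancies lie in the collision-free pathfinding loop on lines~2--11, which differs from \FnCPPForPar by the introduction of the timestamp $T_{start}$, the look-ahead $la$, and the non-participants' remaining paths $\Sigma_{rem}$.

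First I would observe that because $\overline{I^*_{par}} = \emptyset$, the call to $\mathtt{get\_remaining\_paths}$ on line~6 returns $\Sigma_{rem} = \emptyset$. Feeding an empty $\Sigma_{rem}$ into $\mathtt{CFPForPar}$ on line~7 eliminates the sole source of additional behaviour that \FnConCPPForPar introduces beyond \FnCPPForPar: a participant can be inactivated in the offsetting stage only when a non-participant's remaining path makes its optimal path $\varphi^i$ unreachable, and with no non-participants no such inactivation is possible. Consequently, the dynamic prioritisation and \texttt{Halt}-offsetting procedure executed inside $\mathtt{CFPForPar}$ reduces to exactly the prioritised-planning step of \FnCPPForPar, producing the identical sequence of states for every participant.

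Next I would argue that the timestamp $T_{start}$ and the look-ahead $la$ do not change which state sequences are produced; they only decorate them with a global clock value and, on failure to meet that deadline, loop to retry with a larger $la$. Since neither $\Gamma$ nor $\Phi$ nor the priority ordering depends on $T_{start}$, every iteration of the \textbf{while} loop on lines~3--11 returns the same underlying $\Sigma$; successive iterations differ only in the value of $T_{start}$ stamped on it. Stripping the annotation and returning $\Sigma$ thus yields precisely the output of \FnCPPForPar, which establishes the functional equivalence claimed in the lemma.

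The step I expect to demand the most care is justifying that the reattempt loop indeed terminates and produces the same $\Sigma$ that \FnCPPForPar would produce in a single call. I would handle this by noting that the update on line~11 strictly increases $la$ using an estimate proportional to the observed running time of one iteration, so after finitely many retries $T_{start} = CLK + la$ is large enough for the condition $CLK < T_{start}$ on line~8 to hold; and since the inputs $\Gamma, \Phi, I^*_{par}, R$ to $\mathtt{CFPForPar}$ are invariant across iterations and $\Sigma_{rem}$ remains empty, the path set returned on the successful iteration is identical to the one returned by \FnCPPForPar, completing the argument.
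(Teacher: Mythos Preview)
Your proposal is correct and follows essentially the same approach as the paper: both proofs observe that $\overline{I^*_{par}}=\emptyset$ forces $\Sigma_{rem}=\emptyset$, that \FnCFPForPar therefore behaves identically to its counterpart in \FnCPPForPar, and that the only remaining difference is the \textbf{while} loop around \FnCFPForPar, which does not affect the returned $\Sigma$. The one minor distinction is that the paper asserts the loop iterates at most twice (by appealing to the discussion in Section~\ref{subsubsec:cfp_for_par}), whereas you give a more general finite-termination argument via the strictly increasing $la$; both are adequate for the lemma, and your invariance observation (that the inputs to \FnCFPForPar do not depend on $T_{start}$ when $\Sigma_{rem}=\emptyset$) makes the equivalence of the returned $\Sigma$ across iterations explicit where the paper leaves it implicit.
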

}
\longversion
{
\begin{proof}
The difference between \FnConCPPForPar (Algorithm \ref{algo:cpp_for_par}) and \FnCPPForPar (Algorithm 3 in \cite{DBLP:conf/arxiv/MitraS23}) is that in \FnConCPPForPar, \FnCFPForPar (line 7) is in a \textbf{while} loop (lines 3-11). 
In a round, no non-participants exist if all the robots are participants, i.e., $R^* = R$. 
So, the non-participants' remaining paths $\Sigma_{rem} = \emptyset$ (line 6). 
As explained in Section \ref{subsubsec:cfp_for_par}, this \textbf{while} loop iterates \textit{at most} twice to find the participants' collision-free paths $\Sigma$, additionally the timestamp $T_{start}$ (lines 7-8). 
Thus, \FnConCPPForPar outputs the same $\Sigma$ as \FnCPPForPar does if $R^* = R$. 
Hence, \FnConCPPForPar $\equiv$ \FnCPPForPar if all the robots are participants. 
\end{proof}
}
\longversion
{
\begin{lemma}
\label{lem:eta_R}
Always eventually, there will be a round with all the robots as participants. 
\end{lemma}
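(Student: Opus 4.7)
My plan is to argue by contradiction, assuming that no round ever begins with $|I_{par}| = R$ after some time $T^{\star}$, and derive a contradiction from the dynamics of the algorithm. The first ingredient is finiteness: every path $\sigma^{i}$ produced by \FnConCPPForPar is a cost-optimal shortest path in the finite grid $W$, so $|\sigma^{i}| \leq |W_{free}|$, and every non-participant rejoins $I_{par}$ within $|W_{free}|$ clock ticks of receiving its path. In particular, each robot re-enters $I_{par}$ infinitely often during any infinite execution.

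The second ingredient is to pinpoint the only mechanism that allows $|I_{par}|$ to grow up to $R$ without being reset: the skip branch (lines 22--23) of \FnCheckCPPCriteria, which fires exactly when $|I_{par}| = \eta$, $W_g \setminus \widetilde{W_g} = \emptyset$, and $|I_{par}| < R$. Crucially, this branch does not reset $I_{par}$ and only bumps $\eta$ via \FnIncrementEta. Consequently, once the skip branch is entered, successive arrivals accumulate monotonically in $I_{par}$ and, by the first ingredient, $|I_{par}| = R$ is reached in bounded time; at that instant \FnCheckCPPCriteria either starts a round with all $R$ participants (if $W_g \setminus \widetilde{W_g}$ has become non-empty in the meantime, e.g., because an arriving robot's local view uncovered new cells) or exits via line~21. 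In both outcomes all $R$ robots are simultaneously in $I_{par}$, contradicting the assumption. So the task reduces to showing that the skip branch is entered in finite time.

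To this end, observe that $W_c$ is monotonically non-decreasing and bounded by $|W_{free}|$, so coverage must stabilise beyond some finite time $T_{sat}$, after which no new cells enter $W_c$ and no new goals get added to $W_g$. From $T_{sat}$ onward $W_g$ can only shrink as reserved goals are visited by the corresponding non-participants; eventually every remaining goal lies in $\widetilde{W_g}$, and the next invocation of \FnCheckCPPCriteria with $|I_{par}| = \eta < R$ triggers the skip branch.

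The hardest part will be formalising this saturation step. Worst-case partial-participation rounds can inactivate every participant (as in Example~\ref{exp:cfp_for_par}) and thus make no immediate coverage progress, and newly discovered cells can intermittently replenish $W_g$ and delay saturation. I would handle this with a Lyapunov-style argument: first, any chain of all-inactive rounds must dissolve in finite time because the non-participants whose outstanding paths induce the blocking collisions have bounded length and clear (so eventually a round succeeds in activating at least one participant); second, the lexicographic measure $(|W_{free} \setminus W_c|,\ |W_g \setminus \widetilde{W_g}|)$ strictly decreases across each progress-making round, forcing saturation to be reached in finitely many rounds and hence the skip branch to fire.
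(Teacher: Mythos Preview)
Your approach is genuinely different from the paper's. The paper argues \emph{directly and inductively}: it takes as base case the initial round (where trivially $R^*=R$), then uses the chain Lemma~\ref{lem:gamrcpp_horizon}--\ref{lem:concppforpar_equivalence_cppforpar} to guarantee that such a round produces at least one active participant, and traces the worst case (exactly one active, $R-1$ inactive, followed by a skip) to show the next all-$R$ round is reached. It never touches saturation of $W_c$ or a Lyapunov measure; the heavy lifting is outsourced to the pre-established lemmas. Your route---contradiction plus a monotone-convergence/saturation argument on $W_c$---avoids those lemmas entirely, which is conceptually appealing but costs you the clean one-line progress guarantee the paper exploits.

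There is, however, a genuine gap in your second ingredient. You claim that ``once the skip branch is entered, successive arrivals accumulate monotonically in $I_{par}$'' until $|I_{par}|=R$. This is false in general. After a skip, \FnIncrementEta{} raises $\eta$; when the next batch of non-participants arrive and $|I_{par}|=\eta$ holds again, \FnCheckCPPCriteria{} re-evaluates line~13 from scratch. If any of those arriving robots discovered a new obstacle-free cell en route (so $W_g\setminus\widetilde{W_g}\neq\emptyset$), the \emph{start} branch fires with $|I_{par}|=\eta<R$, resetting $I_{par}$ and $\eta$ at lines~17--18. So a single skip does not trap the system into accumulating to $R$; it can be interrupted by an ordinary partial-participation round. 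Your later saturation argument would rule out such new discoveries after $T_{sat}$, but you invoke the accumulation claim \emph{before} establishing you are in that regime, so the reduction ``it suffices to show the skip branch is entered'' is not valid as stated.

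A second, subtler circularity: you define $T_{sat}$ as the time after which $W_c$ stops growing, yet any non-participant that reaches its assigned goal strictly enlarges $W_c$. Hence after $T_{sat}$ there can be no non-participants with outstanding paths at all, which already forces $|I_{par}|=R$ directly---the skip-branch machinery is then unnecessary, but you also need to argue that $\eta$ has been driven up to $R$ at that point (otherwise line~11 never fires). The paper sidesteps all of this by never letting $|I_{par}|$ drift: its worst-case trace keeps $\eta$ pinned to $R-1$ or $R$ throughout, using Lemmas~\ref{lem:gamrcpp_horizon}--\ref{lem:concppforpar_equivalence_cppforpar} to justify the ``exactly one active'' step. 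If you want to salvage your argument, restrict the monotone-accumulation claim explicitly to the post-$T_{sat}$ regime and separately verify the $\eta$ bookkeeping there.
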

}
\longversion
{
\begin{proof}
Initially, the number of intended participants $\eta = R$ (line 2 in Algorithm \ref{algo:concpp}). 
So, in the initial round, all the robots are participants, i.e., $R^* = R$ trivially (line 11). 
Before the CP invokes \FnConCPPRound (line 19), the CP resets $\eta$ (line 18) to reinitialize it at the end of the planning round for replanning inactive participants in the next planning round (line 39). 
In the worst case, by Lemma \ref{lem:concppforpar_equivalence_cppforpar}, \ref{lem:cppforpar_equivalence_gamrcpphorizon}, and \ref{lem:gamrcpp_horizon}, \FnConCPPForPar finds exactly one participant active in $\Sigma$ (line 30), thereby reinitializing $\eta = R - 1$ (line 39) for replanning $R - 1$ inactive participants in the next planning round (lines 46 and 11). 
Moreover, in the worst case, due to the unavailability of unassigned goals, i.e., $W_g \setminus \widetilde{W_g} = \emptyset$ (line 13), the CP skips replanning $R - 1$ participants, invoking \FnIncrementEta (line 23) and making $\eta = R$ again, thereby, the only active participant of the previous round also becomes a participant after reaching its goal and sending its updated $\mathtt{M_{req}}$ (lines 5-9). 
Thus, all the robots become participants again, making $R^* = R$ (line 11). 
Hence, always eventually, there will be a round with all the robots as participants. 
\end{proof}
}
\begin{theorem}
\label{theorem:concpp_complete_coverage}
\FnConCPP eventually stops, and when it stops, it ensures complete coverage of $W$. 
\end{theorem}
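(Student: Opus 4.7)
The plan is to prove termination and complete-coverage correctness separately, leveraging the chain of lemmas already established. For termination, I would invoke Lemma~\ref{lem:eta_R} to argue that rounds with $R^* = R$ occur always eventually as long as \FnConCPP has not stopped. In each such round, Lemmas~\ref{lem:concppforpar_equivalence_cppforpar}, \ref{lem:cppforpar_equivalence_gamrcpphorizon}, and \ref{lem:gamrcpp_horizon} chain together to guarantee that at least one active participant is assigned a path ending at a previously unvisited goal, so $|W_c|$ strictly increases whenever such a round occurs. Since $W_{free}$ is finite, this monotone progress cannot continue indefinitely, so the CP must eventually reach a state where in some round with $R^* = R$ no unassigned goal remains.

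Next, I would analyze what it means for the algorithm to stop. The only exit is on line~21 of Algorithm~\ref{algo:concpp}, inside \FnCheckCPPCriteria, and requires both $|I_{par}| = R$ and $W_g \setminus \widetilde{W_g} = \emptyset$. When $|I_{par}| = R$, the set $\overline{I_{par}}$ of non-participants is empty, so $\widetilde{W_g} = \emptyset$ by construction (line 12), and the second condition reduces to $W_g = \emptyset$. Thus, upon stopping, every explored free cell tracked in the global view has already been visited, i.e., $W_c$ contains every cell that the CP has ever registered as obstacle-free.

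It then remains to show that the stopping condition actually implies $W_c = W_{free}$, i.e., that no obstacle-free cell of the physical workspace has been missed. Here I would use the rangefinder assumption from Section~\ref{subsubsec:robots_and_their_motions}: whenever a robot steps into a cell, it classifies each of the four neighbouring cells as obstacle or free and propagates this to the CP via \SrvRcvLV, which is mutually exclusive with \FnCheckCPPCriteria. Hence every neighbour of any cell in $W_c$ belongs to $W_c \cup W_o \cup W_g$. Suppose for contradiction that $W_c \subsetneq W_{free}$, and pick $c \in W_{free} \setminus W_c$. By the strong connectivity assumption on $W_{free}$, there exists a path in $W_{free}$ from some cell of $W_c$ (for instance any start location) to $c$. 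Let $c'$ be the first cell along this path that lies outside $W_c$; its predecessor $c''$ belongs to $W_c$, so $c'$ has been explored via $c''$'s rangefinders, and since $c' \in W_{free}$ it must have been recorded in $W_g$, contradicting $W_g = \emptyset$. Therefore $W_c = W_{free}$.

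The main obstacle I expect is not the topological argument but rigorously tying the algorithmic snapshot at the exit to the actual physical state of the robots and the workspace: one must ensure that no in-flight request or asynchronous global-view update can resurrect an unvisited free cell after the stopping decision has been taken. I would discharge this by noting that \FnCheckCPPCriteria executes inside the mutually exclusive \SrvRcvLV service, so at the moment the check $|I_{par}| = R \land W_g \setminus \widetilde{W_g} = \emptyset$ succeeds, every robot has already delivered its most recent local view and has no outstanding path (since it has become a participant), which provides the required correspondence between the algorithmic state and the physical workspace and closes the proof.
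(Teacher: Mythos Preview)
Your proposal is correct and follows essentially the same approach as the paper: both chain Lemmas~\ref{lem:eta_R}, \ref{lem:concppforpar_equivalence_cppforpar}, \ref{lem:cppforpar_equivalence_gamrcpphorizon}, and \ref{lem:gamrcpp_horizon} to obtain strict growth of $W_c$ in rounds with $R^*=R$, and both reduce the stopping condition to $|I_{par}|=R \land W_g=\emptyset$. Your version is in fact more rigorous than the paper's, which simply asserts that $W_g=\emptyset$ ``entails $W_c=W_{free}$'' and does not address the mutual-exclusion concern; you supply the strong-connectivity/rangefinder argument and the concurrency justification that the paper leaves implicit.
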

\shortversion
{
\begin{proof}
Please refer to the longer version of this paper~\cite{DBLP:conf/arxiv/MitraS24} for the proof. 
\end{proof}
}
\longversion
{
\begin{proof}
All the robots are initially participants, i.e., $R^* = R$ (lines 2 and 11 in Algorithm \ref{algo:concpp}). 
So, there are no non-participants, thereby their assigned goals $\widetilde{W_g} = \emptyset$ (line 12). 
Now, if there are no goals, 
i.e., $W_g = \emptyset$, it falsifies the \textbf{if} condition in line 13 and satisfies the \textbf{if} condition in line 20; hence the CP stops. 
Otherwise, the CP invokes \FnConCPPRound (line 19), which by Lemma \ref{lem:concppforpar_equivalence_cppforpar}, \ref{lem:cppforpar_equivalence_gamrcpphorizon}, and \ref{lem:gamrcpp_horizon}, finds at least one participant active in $\Sigma$ (line 30). 
So, while following its path, an active participant explores the unexplored cells $W_u$ into either obstacles $W_o$ or goals $W_g$ (if any) as $W_{free}$ is
connected, eventually covering its goal into $W_c$. 
Thus, $W_c$ increases. 
By Lemma \ref{lem:eta_R}, always eventually, this planning round keeps occurring, where all the robots are participants, meaning $R^* = R$ and $\widetilde{W_g} = \emptyset$. 
Now, if $W_g = \emptyset$, the CP stops, which entails $W_c = W_{free}$. 
Otherwise, the CP invokes \FnConCPPRound. 
\end{proof}
}
\longversion
{
\subsection{Time Complexity Analysis}
\label{subsec:time_complexity}
}

\longversion
{
We now analyze the time complexity of \FnConCPP. 
\begin{lemma}[Lemma $4$ of \cite{DBLP:conf/arxiv/MitraS23}]
\label{lem:tc_cfpforpar}
The function $\mathtt{CFPForPar}$ takes $\mathcal{O}(R^3)$. 
\end{lemma}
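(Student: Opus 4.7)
The plan is to charge the overall runtime to two nested loops inside $\mathtt{CFPForPar}$: an outer \emph{reattempt} loop that restarts the two-step procedure whenever a participant has to be inactivated, and an inner pass that (i) builds the dynamic priority order and (ii) offsets each participant by prepending $\mathtt{Halt}$ moves. I would argue the outer loop runs $\mathcal{O}(R)$ times and each inner pass costs $\mathcal{O}(R^2)$, yielding the claimed $\mathcal{O}(R^3)$ bound. Since the statement is Lemma~$4$ of~\cite{DBLP:conf/arxiv/MitraS23}, my goal is to re-derive that argument in a self-contained way so that the reader can verify it without consulting the reference.

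First I would bound the number of iterations of the outer loop. The loop is re-entered only when the offsetting step discovers an unavoidable collision between a participant and some non-participant, and in that case at least one participant is inactivated (its optimal path is replaced by the singleton containing its current state). Since there are at most $R^{*}\le R$ participants and every restart strictly decreases the number of still-active participants, the outer loop executes at most $R^{*}+1 = \mathcal{O}(R)$ times before either all participants have been assigned collision-free paths or all of them are inactive.

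Next I would bound the cost of one inner pass. In the prioritization step, the CP inspects, for every ordered pair $(r^{i},r^{j})$ of participants, whether the start or goal cell of $r^{i}$ lies on $\varphi^{j}$; this pairwise examination is $\mathcal{O}((R^{*})^{2})=\mathcal{O}(R^{2})$ once we observe that the path lengths produced by the cost-optimal step are bounded by a constant that depends on the local workspace information but not on $R$ (as in the standard accounting used in~\cite{DBLP:conf/arxiv/MitraS23}). In the offsetting step, the paths are processed in priority order, and each participant's offset is determined by comparing its path against the already-fixed paths of higher-priority participants and against the remaining paths $\Sigma_{rem}$ of the non-participants; over the whole pass this is again $\mathcal{O}(R^{2})$ work. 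Combining the two steps gives an inner-pass cost of $\mathcal{O}(R^{2})$.

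Multiplying the $\mathcal{O}(R)$ outer iterations by the $\mathcal{O}(R^{2})$ inner cost yields $\mathcal{O}(R^{3})$, which is the claimed bound. The main obstacle, and the step I would spend the most care on, is justifying that the per-pair collision check and the offset computation can legitimately be treated as constant-time primitives: this requires either citing the bounded-horizon path-length argument from~\cite{DBLP:conf/arxiv/MitraS23} or explicitly noting that path lengths depend on the workspace rather than on $R$, so that the dependence on $R$ is fully captured by the two nested loops. Once that accounting is pinned down, the remainder of the proof is a direct iteration count.
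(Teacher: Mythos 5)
The paper does not actually prove this lemma --- it is imported verbatim as Lemma~4 of \cite{DBLP:conf/arxiv/MitraS23}, with no proof environment following the statement --- so your reconstruction is being compared against a citation rather than an argument. Your decomposition is the natural one and is consistent with the prose description of $\mathtt{CFPForPar}$ in Section~\ref{subsubsec:cfp_for_par}: the procedure restarts only when the offsetting step inactivates a participant, there are at most $R^*\le R$ participants, so the outer loop runs $\mathcal{O}(R)$ times, and each pass consists of a pairwise prioritization step and a priority-ordered offsetting step.

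The genuine gap is exactly where you flag it, and your proposed fix does not close it. Observing that path lengths ``depend on the workspace rather than on $R$'' does not make a path-membership test or an offset computation a constant-time primitive; it makes each such operation $\mathcal{O}(|W|)$, since a cost-optimal path can have length $\Theta(|W|)$. That would give $\mathcal{O}(R^2|W|)$ per pass and $\mathcal{O}(R^3|W|)$ overall, and the extra factor is not harmless downstream: Lemma~\ref{lem:tc_concppforpar} needs $\mathtt{CFPForPar}$ to cost $\mathcal{O}(R^3)$ with no hidden $|W|$ dependence so that $\mathtt{ConCPPForPar}$ totals $\mathcal{O}(|W|^3 + R^3)=\mathcal{O}(|W|^3)$, dominated by the Hungarian step; an $\mathcal{O}(R^3|W|)$ bound would become $\mathcal{O}(|W|^4)$ under $R=\mathcal{O}(|W|)$ and would propagate into Theorem~\ref{lem:tc_concpp}. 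To make the per-pair work genuinely $\mathcal{O}(1)$ you would need to argue that the algorithm maintains a cell-indexed lookup of which paths traverse each cell (and account for that preprocessing), or else recover the actual loop structure from \cite{DBLP:conf/arxiv/MitraS23} instead of re-deriving it; as written, the $\mathcal{O}(R^2)$ inner-pass cost is asserted rather than established.
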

}
\longversion
{
\begin{lemma}
\label{lem:tc_copforpar}
The function $\mathtt{COPForPar}$ takes $\mathcal{O}(|W|^3)$. 
\end{lemma}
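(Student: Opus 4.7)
The plan is to decompose \FnCOPForPar into its two natural subroutines and bound each separately, exploiting the fact that both the number of participants $R^*$ and the number of unassigned goals $G^*$ are bounded by $|W|$ (since every robot and every goal corresponds to a distinct cell of the workspace).

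First I would recall from Section~\ref{subsec:cpp_for_participants} that \FnCOPForPar produces (i) a cost-optimal assignment $\Gamma$ from participants to unassigned goals and (ii) the corresponding paths $\Phi$. The natural implementation has two phases: building an $R^* \times G^*$ cost matrix whose $(i, g)$-entry is the shortest-path cost from $s^i_0$ to the $g$-th unassigned goal through $W^*_c \cup W^*_g$, and then running the Hungarian algorithm on this matrix to obtain $\Gamma$, followed by extracting $\Phi$ from the stored predecessor information.

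For the cost-matrix phase, a single-source shortest-path search (BFS for unit-cost motions, or Dijkstra in the weighted case) on the grid graph induced by $W^*$ runs in $O(|W| \log |W|)$ time. Running it from each of the $R^* \le R \le |W|$ participant sources, and then reconstructing one path per active participant, costs $O(|W|^2 \log |W|) + O(R^* \cdot |W|) = O(|W|^2 \log |W|)$. For the assignment phase, the Hungarian algorithm on an $n \times n$ matrix runs in $O(n^3)$, and here $n = \max(R^*, G^*) \le |W|$, giving $O(|W|^3)$. Taking the dominant term of the two yields the claimed $\mathcal{O}(|W|^3)$ bound.

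There is no real obstacle in the argument; the only points that require care are justifying $R^*, G^* \le |W|$ (immediate from the disjointness of robot locations and the definition of $W_g \subseteq W$) and noting that the $O(|W|^3)$ Hungarian cost absorbs the $O(|W|^2 \log |W|)$ cost of building the matrix. If the Hungarian algorithm is padded to a square matrix of side $\max(R^*, G^*)$, that padding is itself $O(|W|^2)$ and does not affect the bound.
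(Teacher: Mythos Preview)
Your argument is correct and self-contained. The paper, however, does not actually give a proof here: its entire justification is the single line ``Please refer to Lemma~5 of \cite{DBLP:conf/arxiv/MitraS23}.'' So there is no substantive proof in the paper to compare against; you have supplied the natural argument that the cited lemma presumably contains (build an $R^*\times G^*$ cost matrix via single-source shortest paths on the grid, then run the Hungarian algorithm, and use $R^*,G^*\le|W|$). One minor remark: for unit-cost motions on a grid, each BFS is $\mathcal{O}(|W|)$ rather than $\mathcal{O}(|W|\log|W|)$, so the cost-matrix phase is in fact $\mathcal{O}(|W|^2)$; this only strengthens the conclusion, since the Hungarian step remains the dominant $\mathcal{O}(|W|^3)$ term either way.
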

}
\longversion
{
\begin{proof}
   Please refer to Lemma $5$ of \cite{DBLP:conf/arxiv/MitraS23}. 
\end{proof}
}
\longversion
{
\begin{lemma}
\label{lem:tc_concppforpar}
The function \FnConCPPForPar takes $\mathcal{O}(|W|^3)$. 
\end{lemma}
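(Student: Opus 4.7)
The plan is to decompose \FnConCPPForPar (Algorithm~\ref{algo:cpp_for_par}) into two separate pieces: the single call to \FnCOPForPar on line~1, and the \textbf{while} loop on lines~3--11. I will bound each piece in isolation and then combine them.

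For the first piece, Lemma~\ref{lem:tc_copforpar} gives \FnCOPForPar a cost of $\mathcal{O}(|W|^3)$ directly. For one iteration of the loop, the work is dominated by the call to \FnCFPForPar on line~7, which by Lemma~\ref{lem:tc_cfpforpar} costs $\mathcal{O}(R^3)$; since $R \leq |W|$, this is $\mathcal{O}(|W|^3)$. The remaining operations in the loop body (reading the wall-clock, computing $\Sigma_{rem}$ from $\Pi$ via Equation~\ref{eq:rem_path}, and updating $la$) are comparatively cheap---at most $\mathcal{O}(R \cdot \Lambda)$---and so do not affect the asymptotic bound. What remains is to argue that the loop executes only a constant number of times.

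Here the doubling rule on line~11 is the key ingredient. Let $T$ denote a wall-clock upper bound on a single invocation of \FnCFPForPar; this is proportional to its asymptotic cost, i.e., $T = c \cdot R^3$ for a machine-dependent constant $c$. On the first iteration the measured duration is $t^{end}_{cur} - t^{start}_{cur} \leq T$, and the update on line~11 sets $la$ so that the next $T_{start}$ corresponds to a wall-clock deadline roughly twice as far into the future as the duration just observed. On the second iteration \FnCFPForPar again needs at most $T$ wall-clock time, which is well inside the $2T$-sized look-ahead window; consequently $CLK < T_{start}$ on line~8 succeeds and the loop breaks. Hence the loop iterates at most twice.

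Combining the two pieces gives $\mathcal{O}(|W|^3) + 2 \cdot \mathcal{O}(|W|^3) = \mathcal{O}(|W|^3)$, as required. The main obstacle will be rigorously justifying the constant-iteration bound on the loop, since it mixes asymptotic operation counts with wall-clock durations and requires tracking how $CLK$ (incremented once every $\tau$ seconds) advances during the second call to \FnCFPForPar relative to the look-ahead deadline $T_{start}$. A clean way to close this gap is to fix a per-operation cost $c$, express $T$ in the same units as $\tau$, and then verify the inequality $CLK + T/\tau < T_{start}$ quantitatively from the assignment on line~11.
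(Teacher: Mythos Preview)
Your proposal is correct and follows essentially the same decomposition as the paper: bound \FnCOPForPar by Lemma~\ref{lem:tc_copforpar}, bound each loop iteration by Lemma~\ref{lem:tc_cfpforpar}, argue the loop runs at most twice, and absorb $R^3$ into $|W|^3$ via $R = \mathcal{O}(|W|)$. The paper is terser---it bounds the $\Sigma_{rem}$ computation as $\mathcal{O}(R - R^*)$ and simply cites Section~\ref{subsubsec:cfp_for_par} for the ``at most twice'' claim---whereas you spell out the doubling heuristic behind line~11; but the structure and the key lemmas invoked are the same.
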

}
\longversion
{
\begin{proof}
Finding the remaining paths $\Sigma_{rem}$ of the non-participants $\overline{I^*_{par}}$ from $\Pi$ (line 6 in Algorithm \ref{algo:cpp_for_par}) takes \mbox{$\mathcal{O}(R - R^*)$}. 
By Lemma \ref{lem:tc_cfpforpar}, $\mathtt{CFPForPar}$ (line 7) takes $\mathcal{O}(R^3)$. 
The rest of the body of the \textbf{while} loop (lines 3-11) takes $\mathcal{O}(1)$. 
Furthermore, this loop iterates at most twice, as explained in Section \ref{subsubsec:cfp_for_par}. 
So, this loop takes total $\mathcal{O}(2 \cdot ((R -R^*) + R^3 + 1))$, which is $\mathcal{O}(R^3)$ as $R^* = \mathcal{O}(R)$. 
Now, by Lemma \ref{lem:tc_copforpar}, $\mathtt{COPForPar}$ (line 1) takes $\mathcal{O}(|W|^3)$. 
Also, initialization of $la$ (line 2) takes $\mathcal{O}(1)$. 
Hence, \FnConCPPForPar total takes $\mathcal{O}(|W|^3 + 1 + R^3)$, which is $\mathcal{O}(|W|^3)$ as $R = \mathcal{O}(|W|)$. 
\end{proof}
}
\longversion
{
\begin{lemma}
\label{lem:tc_increment_eta}
The function \FnIncrementEta takes $\mathcal{O}(R)$. 
\end{lemma}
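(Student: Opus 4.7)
The plan is to bound the cost of the two operations that make up \FnIncrementEta (lines 24--28 of Algorithm \ref{algo:concpp}) and observe that the input set $I$ satisfies $|I| \le R$, so both operations run in $\mathcal{O}(R)$ time.

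First, I would analyze line 25, which computes $T_{min} = \min\{T_{stop}[i]\mid i\in I\land T_{stop}[i]>CLK\}$. Since $T_{stop}$ is a pre-maintained array and the comparison $T_{stop}[i] > CLK$ is $\mathcal{O}(1)$, this minimum can be computed by a single scan over $I$, taking $\mathcal{O}(|I|)$ time. Next, I would analyze the \textbf{for} loop on lines 26--28, which iterates once per element $i \in I$ and performs an $\mathcal{O}(1)$ comparison and an $\mathcal{O}(1)$ increment inside the loop body; this again takes $\mathcal{O}(|I|)$ time.

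Then I would appeal to the fact that \FnIncrementEta is only ever invoked with $I = \overline{I_{par}}$ (line 23), $I = [R]$ (line 44), or a subset of $[R]$ in general; in every case $I \subseteq [R]$, hence $|I| \le R$. Combining, the total running time is $\mathcal{O}(|I|) + \mathcal{O}(|I|) = \mathcal{O}(R)$, which proves the lemma.

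There is no real obstacle here: the routine contains no nested loops or recursive calls, and all primitive operations (array lookups, comparisons, integer increments) are $\mathcal{O}(1)$ under the standard RAM model assumed throughout the time-complexity analysis of \cite{DBLP:conf/arxiv/MitraS23}. The only minor subtlety worth mentioning is that computing $T_{min}$ requires $I$ to contain at least one index with $T_{stop}[i] > CLK$; this is guaranteed by the call sites (a planning round only concludes with $\eta=0$ when some non-participant is still executing a path), so the minimum is well-defined and the scan bound $\mathcal{O}(|I|)$ holds.
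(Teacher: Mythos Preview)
Your proposal is correct and follows essentially the same approach as the paper: bound the computation of $T_{min}$ (line 25) and the \textbf{for} loop (lines 26--28) each by $\mathcal{O}(|I|)$, then use $|I|=\mathcal{O}(R)$ to conclude. The additional remarks you make about call sites and well-definedness of $T_{min}$ are fine but go beyond what the paper's proof includes.
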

}
\longversion
{
\begin{proof}
The body of the \textbf{for} loop (lines 26-28 in Algorithm \ref{algo:concpp}) takes $\mathcal{O}(1)$ and the loop iterates at most $|I|$ times. 
So, this loop total takes $\mathcal{O}(|I|)$. 
Similarly, determining $T_{min}$ (line 25) takes $\mathcal{O}(|I|)$. 
So, \FnIncrementEta total takes $\mathcal{O}(|I|)$, which is $\mathcal{O}(R)$ as $|I| = \mathcal{O}(R)$. 
\end{proof}
}
\longversion
{
\begin{lemma}
\label{lem:tc_check_cpp_criteria}
The function \FnCheckCPPCriteria takes $\mathcal{O}(|W|)$. 
\end{lemma}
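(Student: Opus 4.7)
The plan is to walk through the body of \FnCheckCPPCriteria (lines 10-23 of Algorithm \ref{algo:concpp}) line by line, bound the cost of each operation, and take the maximum over all branches. The crucial observation will be that the invocation of \FnConCPPRound on line 19 is dispatched via a separate thread $\mathtt{THD}$, so only the thread-spawn cost $\mathcal{O}(1)$ is charged to \FnCheckCPPCriteria; the body of \FnConCPPRound runs concurrently and is accounted for elsewhere in the overall time-complexity analysis rather than here.

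First I would handle the condition checks and the set manipulations. Testing $|I_{par}| = \eta$ (line 11) and $|I_{par}| = R$ (line 20) each cost $\mathcal{O}(1)$ using maintained size counters. The call $\mathtt{get\_reserved\_goals}$ (line 12) iterates over the at most $R$ non-participant IDs in $\overline{I_{par}}$ and reads each last state from $\Pi$, giving $\mathcal{O}(R)$. The emptiness test $(W_g \setminus \widetilde{W_g}) \neq \emptyset$ on line 13 can be performed in $\mathcal{O}(|W_g|) = \mathcal{O}(|W|)$ by scanning $W_g$ and testing membership in $\widetilde{W_g}$ (which has at most $R$ elements) via a hash set. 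Snapshotting $W^* \gets W$ (line 14) costs $\mathcal{O}(|W|)$, while $I^*_{par} \gets I_{par}$ and $S^* \gets S$ (lines 15-16) each cost $\mathcal{O}(R)$. Reinitializing $I_{par}, S$ (line 17) and resetting $\eta$ (line 18) are $\mathcal{O}(1)$. Finally, $\mathtt{Exit}$ on line 21 is $\mathcal{O}(1)$, and by Lemma \ref{lem:tc_increment_eta}, \FnIncrementEta on line 23 is $\mathcal{O}(R)$.

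Summing across the three branches (start on lines 14-19, stop on lines 20-21, skip on lines 22-23), the dominant terms are the snapshot $W^* \gets W$ and the set-difference test, both $\mathcal{O}(|W|)$. Since $R = \mathcal{O}(|W|)$, every other contribution is absorbed, yielding a total bound of $\mathcal{O}(|W|)$. The main subtlety will be justifying the $\mathcal{O}(1)$ charge for the thread dispatch on line 19; I would state explicitly that \FnCheckCPPCriteria returns as soon as $\mathtt{THD}$ is launched, so the concurrent work of \FnConCPPRound is deliberately excluded from this function's cost in line with how parallelism is modelled throughout the framework.
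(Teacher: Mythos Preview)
Your proposal is correct and follows essentially the same line-by-line accounting as the paper's proof, arriving at the same dominant $\mathcal{O}(|W|)$ terms from the snapshot of $W$ (line~14) and the unassigned-goal test (line~13). The only notable difference is that you explicitly justify charging $\mathcal{O}(1)$ for the threaded dispatch of \FnConCPPRound on line~19, whereas the paper simply folds this into ``the rest of the body of the \textbf{if} block (lines 14--19) takes $\mathcal{O}(1)$'' without comment; your treatment is the more careful one, but the approach is the same.
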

}
\longversion
{
\begin{proof}
By Lemma \ref{lem:tc_increment_eta}, \FnIncrementEta (line 23 in Algorithm \ref{algo:concpp}) takes $\mathcal{O}(R)$. 
Likewise, checking whether all the robots are participants (line 20) takes $\mathcal{O}(R^*)$. 
So, the \textbf{if-else} block (lines 20-23) takes $\mathcal{O}(R^* + (1 + R))$, which is $\mathcal{O}(R)$ as $R^* = \mathcal{O}(R)$. 
Now, taking a snapshot of $W$ (line 14) takes $\mathcal{O}(|W|)$. 
Also, taking snapshots of $I_{par}$ and $S$ (lines 15-16) take $\mathcal{O}(R^*)$ each. 
The rest of the body of the \textbf{if} block (lines 14-19) takes $\mathcal{O}(1)$. 
So, the body of this block total takes $\mathcal{O}(|W| + R^* + 1)$. 
Next, checking whether there exists any unassigned goal (line 13) takes $\mathcal{O}(|W|)$. 
Therefore, the \textbf{if-else if-else} block (lines 13-23) total takes $\mathcal{O}(|W| + ((|W| + R^* + 1) + R))$, which is $\mathcal{O}(|W| + R)$. 
Next, computing the reserved goals $\widetilde{W_g}$ for the non-participants $\overline{I_{par}}$ from $\Pi$ (line 12) takes $\mathcal{O}(R - R^*)$ and checking whether there are $\eta$ participants (line 11) takes $\mathcal{O}(R^*)$. 
Hence, \FnCheckCPPCriteria total takes $\mathcal{O}(R^* + ((R - R^*) + (|W| + R)))$, which is $\mathcal{O}(|W|)$. 
\end{proof}
}
\longversion
{
\begin{lemma}
\label{lem:tc_concpp_round}
The function \FnConCPPRound takes $\mathcal{O}(|W|^3)$. 
\end{lemma}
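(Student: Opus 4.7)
The plan is to bound the cost of each line of \FnConCPPRound (lines 29--46 of Algorithm \ref{algo:concpp}) and show that the call to \FnConCPPForPar on line 30 dominates. By Lemma \ref{lem:tc_concppforpar}, that call already costs $\mathcal{O}(|W|^3)$, so the remaining work only needs to be bounded polynomially in $|W|$ and $R$.

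First I would analyze the for loop on lines 31--39, which iterates $|I^*_{par}| = R^*$ times. Inside each iteration, constructing the dummy path on line 33 and concatenating paths on line 34 take time proportional to the lengths of the paths involved, while the rest of the body (lines 32, 35, 37--39) is $\mathcal{O}(1)$. Since an optimal path $\sigma^i$ returned by \FnConCPPForPar traverses cells of $W$ and is therefore $\mathcal{O}(|W|)$ long, and the dummy offset $T_{start} - T_{stop}[i] - 1$ is bounded by the same order (the active participant would otherwise have traversed more cells than $|W|$), the per-iteration cost is $\mathcal{O}(|W|)$, so this loop contributes $\mathcal{O}(R \cdot |W|) = \mathcal{O}(|W|^2)$ using $R = \mathcal{O}(|W|)$. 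The second for loop on lines 40--42 iterates at most $R - R^*$ times with $\mathcal{O}(1)$ body, contributing $\mathcal{O}(R) = \mathcal{O}(|W|)$.

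Next I would handle the tail of the function: by Lemma \ref{lem:tc_increment_eta}, the conditional call to \FnIncrementEta on line 44 is $\mathcal{O}(R)$; the call to $\mathtt{send\_paths\_to\_active\_participants}$ on line 45 packages and dispatches $R^*$ paths of length $\mathcal{O}(|W|)$, costing $\mathcal{O}(R \cdot |W|) = \mathcal{O}(|W|^2)$; and by Lemma \ref{lem:tc_check_cpp_criteria}, the call to \FnCheckCPPCriteria on line 46 is $\mathcal{O}(|W|)$. Summing everything yields $\mathcal{O}(|W|^3 + |W|^2 + |W| + |W| + |W|^2 + |W|) = \mathcal{O}(|W|^3)$, as claimed.

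The step I expect to require most care is justifying the $\mathcal{O}(|W|)$ bound on the dummy path length $T_{start} - T_{stop}[i] - 1$; a pessimistic reading would let this quantity grow with the global clock and blow up the per-round cost. I would sidestep this by observing that the dummy segment together with $\sigma^i$ simply extends $\pi^i$ up to time $T_{start} + |\sigma^i|$, so the work is amortized against path-length extensions and, per single invocation of \FnConCPPRound, is dominated by the cost already charged to \FnConCPPForPar for producing $\sigma^i$. Once this is in place, the sum collapses cleanly to $\mathcal{O}(|W|^3)$.
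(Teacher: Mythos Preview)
Your proposal is correct and follows the same line-by-line decomposition as the paper, invoking the same auxiliary lemmas (\ref{lem:tc_concppforpar}, \ref{lem:tc_increment_eta}, \ref{lem:tc_check_cpp_criteria}) and letting the $\mathcal{O}(|W|^3)$ cost of \FnConCPPForPar dominate.

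The only notable difference is in the granularity of the per-line bounds. The paper simply asserts that the body of the first \textbf{for} loop (lines 31--39) is $\mathcal{O}(1)$ and that $\mathtt{send\_paths\_to\_active\_participants}$ (line 45) is $\mathcal{O}(1)$ because the sends happen in parallel; it therefore tallies $\mathcal{O}(R^*)$ and $\mathcal{O}(1)$ for those steps. You are more conservative, charging $\mathcal{O}(|W|)$ per iteration for the dummy-path construction and concatenation and $\mathcal{O}(R\cdot|W|)$ for dispatching the paths, which gives $\mathcal{O}(|W|^2)$ for each. Your caution about the dummy-path length $T_{start}-T_{stop}[i]-1$ is well placed (the paper glosses over it), though your amortization sketch is not fully worked out; fortunately both your bound and the paper's are absorbed by the $\mathcal{O}(|W|^3)$ term, so the final result is unaffected either way.
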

}
\longversion
{
\begin{proof}
By Lemma~\ref{lem:tc_concppforpar}, \FnConCPPForPar (line 30) takes $\mathcal{O}(|W|^3)$. 
The body of the first \textbf{for} loop (lines 31-39) takes $\mathcal{O}(1)$, and it iterates $R^*$ times. 
So, this loop total takes $\mathcal{O}(R^*)$. 
Similarly, the body of the second \textbf{for} loop (lines 40-42) takes $\mathcal{O}(1)$, and it iterates $R - R^*$ times. 
So, this loop total takes $\mathcal{O}(R - R^*)$. 
By Lemma \ref{lem:tc_increment_eta}, \FnIncrementEta (line 44) takes $\mathcal{O}(R)$. 
So, the \textbf{if} block (lines 43-44) takes $\mathcal{O}(R)$. 
Next, \textit{parallelly} sending timestamped paths to the active participants (line 45) takes $\mathcal{O}(1)$. 
Finally, by Lemma \ref{lem:tc_check_cpp_criteria}, \FnCheckCPPCriteria (line 46) takes $\mathcal{O}(|W|)$. 
Hence, \FnConCPPRound total takes $\mathcal{O}(|W|^3 + R^* + (R - R^*) + R + 1 + |W|)$, which is $\mathcal{O}(|W|^3)$. 
\end{proof}
}
\longversion
{
\begin{lemma}
\label{lem:tc_rcv_lv}
The function \SrvRcvLV takes $\mathcal{O}(|W|^2)$. 
\end{lemma}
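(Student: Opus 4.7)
The plan is to analyze each line of \SrvRcvLV (lines 5--9 of Algorithm \ref{algo:concpp}) in turn and then sum the contributions. The body contains four operations: inserting the requesting robot's ID into $I_{par}$, inserting its state into $S$, invoking $\mathtt{update\_globalview}$ on the received local view, and invoking \FnCheckCPPCriteria. The first two insertions touch sets whose size is at most $R = \mathcal{O}(|W|)$, so each costs at most $\mathcal{O}(R)$ and contributes $\mathcal{O}(|W|)$ overall. The last invocation is bounded by Lemma~\ref{lem:tc_check_cpp_criteria} at $\mathcal{O}(|W|)$; crucially, the call to \FnConCPPRound inside \FnCheckCPPCriteria is spawned on a separate thread $\mathtt{THD}$ and therefore does not contribute to the synchronous running time charged to \SrvRcvLV.

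The non-trivial part is bounding $\mathtt{update\_globalview}$ (line 8), which merges the local view $W^i=\langle W^i_u,W^i_o,W^i_g,W^i_c\rangle$ sent inside $\mathtt{M_{req}}$ into the global view $W=\langle W_u,W_o,W_g,W_c\rangle$. I would argue that the local view has size $|W^i|=\mathcal{O}(|W|)$, and for each cell the update may need to transfer it from one of the four sets to another (for instance, a previously unexplored cell discovered to be a goal must be removed from $W_u$ and inserted into $W_g$, and a goal that has just been visited must migrate from $W_g$ into $W_c$). Under the set representation used elsewhere in the algorithm, each such reclassification takes $\mathcal{O}(|W|)$ in the worst case, giving a total of $\mathcal{O}(|W|)\cdot\mathcal{O}(|W|)=\mathcal{O}(|W|^2)$ for this step.

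Putting the pieces together yields $\mathcal{O}(1+|W|+|W|^2+|W|)=\mathcal{O}(|W|^2)$, which matches the claimed bound. The main obstacle I anticipate is pinning down the precise cost of $\mathtt{update\_globalview}$, since it hinges on the concrete data-structure choice for the four components of the global view; I will resolve this by appealing to the same representation implicitly used in the earlier complexity lemmas (Lemmas~\ref{lem:tc_copforpar}--\ref{lem:tc_check_cpp_criteria}), where set membership and cross-set transfers are treated as $\mathcal{O}(|W|)$ operations. With this consistent accounting, the quadratic bound follows directly from summing the per-line costs.
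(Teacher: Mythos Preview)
Your proposal is correct and follows essentially the same line-by-line decomposition as the paper's proof. The only minor differences are that the paper charges lines~6--7 at $\mathcal{O}(1)$ rather than $\mathcal{O}(R)$ and obtains the $\mathcal{O}(|W|^2)$ bound for $\mathtt{update\_globalview}$ by citing Theorem~2 of~\cite{DBLP:conf/arxiv/MitraS23} rather than arguing it directly from set-reclassification costs; neither affects the final bound or the structure of the argument.
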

}
\longversion
{
\begin{proof}
Adding a participant's $\mathtt{id}$ to $I_{par}$ (line 6 in Algorithm \ref{algo:concpp}) and $\mathtt{state}$ to $S$ (line 7) take $\mathcal{O}(1)$ each. 
Also, updating the global view $W$ with the participant's local view $W^i$ (line 8) takes $\mathcal{O}(|W|^2)$ (please refer to Theorem $2$ of \cite{DBLP:conf/arxiv/MitraS23}). 
By Lemma \ref{lem:tc_check_cpp_criteria}, \FnCheckCPPCriteria (line 9) takes $\mathcal{O}(|W|)$. 
Hence, \SrvRcvLV total takes $\mathcal{O}(1 + |W|^2 + |W|)$, which is $\mathcal{O}(|W|^2)$. 
\end{proof}
}
\begin{theorem}
\label{lem:tc_concpp}
Time complexity of \FnConCPP is $\mathcal{O}(|W|^4)$. 
\end{theorem}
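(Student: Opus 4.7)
The plan is to combine the per-invocation bounds established in Lemmas \ref{lem:tc_concpp_round} and \ref{lem:tc_rcv_lv} with a count of how many times \FnConCPPRound and \SrvRcvLV are triggered over the entire execution of \FnConCPP, and then show that each is invoked $\mathcal{O}(|W|)$ times, so that the total work is $\mathcal{O}(|W|) \cdot \mathcal{O}(|W|^3) = \mathcal{O}(|W|^4)$.

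First I would bound the total number of calls to \SrvRcvLV. A robot issues a fresh $\mathtt{M_{req}}$ only once at startup and then exactly once each time it finishes traversing a received path. Every received path is generated by an active-participant assignment to an unassigned goal, and each such goal is a distinct cell of $W_{free}$ that becomes covered thereafter (by the complete-coverage argument underlying Theorem~\ref{theorem:concpp_complete_coverage}). Hence the total number of requests is at most $R + |W_{free}| = \mathcal{O}(|W|)$, and by Lemma \ref{lem:tc_rcv_lv} the overall service contribution totals $\mathcal{O}(|W|) \cdot \mathcal{O}(|W|^2) = \mathcal{O}(|W|^3)$.

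Next I would bound the total number of calls to \FnConCPPRound. By Lemmas \ref{lem:concppforpar_equivalence_cppforpar}, \ref{lem:cppforpar_equivalence_gamrcpphorizon} and \ref{lem:gamrcpp_horizon}, a planning round in which all $R$ robots participate necessarily activates at least one participant and therefore covers at least one new cell, while Lemma \ref{lem:eta_R} guarantees that such a full-participation round recurs always eventually. Productive rounds (those that assign some new goal) are thus capped by $|W_{free}| = \mathcal{O}(|W|)$. Combined with Lemma \ref{lem:tc_concpp_round}, the planning contribution is $\mathcal{O}(|W|) \cdot \mathcal{O}(|W|^3) = \mathcal{O}(|W|^4)$, which dominates the service contribution and yields the claimed bound.

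The trickiest step is ruling out an unbounded sequence of unproductive rounds in which every participant is inactivated (the worst case illustrated by Example \ref{exp:cfp_for_par}). I would charge each such unproductive round to the finite remaining path of some non-participant whose presence forced the inactivation: a participant is inactivated only when its offset path is obstructed by a non-participant's remaining path $\sigma^k_{rem}$, and each such remaining path corresponds to a previous productive assignment. Since the total length of all such remaining paths summed across the execution is itself $\mathcal{O}(|W|)$, the number of unproductive rounds is also $\mathcal{O}(|W|)$ in aggregate, preserving the overall $\mathcal{O}(|W|^4)$ bound.
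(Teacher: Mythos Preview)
Your decomposition into per-invocation costs (Lemmas~\ref{lem:tc_rcv_lv} and~\ref{lem:tc_concpp_round}) times the number of invocations is precisely the paper's approach, and your $\mathcal{O}(|W|)$ bound on the number of \SrvRcvLV calls matches. The paper, however, does not separately treat unproductive rounds at all: it simply asserts, via the worst-case scenario spelled out in the proof of Lemma~\ref{lem:eta_R}, that at worst every planning round has all $R$ robots participating with exactly one kept active, and since each such round covers a new goal there are $\mathcal{O}(|W_g|)$ invocations of \FnConCPPRound overall, giving $\mathcal{O}(R)+\mathcal{O}(|W_g|)\cdot\bigl(\mathcal{O}(|W|^2)+\mathcal{O}(|W|^3)\bigr)=\mathcal{O}(|W|^4)$. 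In that scenario the $R{-}1$ inactive participants hit the \emph{skip} branch (line~23 of Algorithm~\ref{algo:concpp}) rather than triggering a fresh \FnConCPPRound, so unproductive rounds never arise.

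Your charging argument for unproductive rounds has a concrete gap. The assertion that ``the total length of all such remaining paths summed across the execution is itself $\mathcal{O}(|W|)$'' is unjustified and in general false: there are $\mathcal{O}(|W|)$ productive assignments, but each assigned path can have length $\Theta(|W|)$, so the aggregate remaining-path length can be $\Theta(|W|^2)$. Charging one unproductive round per unit of remaining path therefore yields only an $\mathcal{O}(|W|^2)$ bound on unproductive rounds, which would inflate the total to $\mathcal{O}(|W|^5)$ rather than $\mathcal{O}(|W|^4)$. If you want to retain your more explicit accounting instead of invoking the paper's worst-case pattern, you need a different argument---for instance, bounding the number of planning rounds that can be launched between two consecutive request arrivals by a constant, and then multiplying by the $\mathcal{O}(|W|)$ arrivals you already established---rather than a path-length charge.
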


\begin{proof}
\shortversion
{
Please refer to the longer version of this paper~\cite{DBLP:conf/arxiv/MitraS24} for the proof. 
}
\longversion
{
The initialization block (lines 1-4 in Algorithm \ref{algo:concpp}) takes $\mathcal{O}(R)$. 
In the worst case, by Lemma \ref{lem:eta_R}, a planning round where all the robots participate keeps exactly one participant active and the rest inactive. 
So, that active participant follows its path, reaches its goal, and then sends its updated request to \SrvRcvLV (line 5), which by Lemma \ref{lem:tc_rcv_lv}, takes $\mathcal{O}(|W|^2)$. 
Additionally, \FnConCPPRound gets invoked (line 19), which by Lemma \ref{lem:tc_concpp_round}, takes $\mathcal{O}(|W|^3)$. 
Overall it needs $\mathcal{O}(|W_g|)$ such rounds, amounting to total $\mathcal{O}(R + (|W_g| \cdot (|W|^2 + |W|^3)))$, which is $\mathcal{O}(|W|^4)$ as $|W_g| = \mathcal{O}(|W|)$. 
}
\end{proof}
\section{Evaluation}
\label{sec:evaluation}

\subsection{Implementation and Experimental Setup}
\label{subsec:implementation_and_experimental_setup}

We implement $\mathtt{Robot}$ (Algorithm \ref{algo:robot}) and \FnConCPP (Algorithm \ref{algo:concpp}) in a common ROS\longversion{\cite{key_ros}} 
C++ package\footnote{\textcolor{blue}{https://github.com/iitkcpslab/ConCPP}}, which runs in a workstation having Intel\textsuperscript{\textregistered} Xeon\textsuperscript{\textregistered} Gold 6226R Processor and 48 GB of RAM, and running Ubuntu $22.04.4$ LTS OS.

\subsubsection{Workspaces for experimentation}

We consider eight large $2$D grid benchmark workspaces of varying size and \textit{obstacle density} from \cite{stern2019mapf}.

\subsubsection{Robots for experimentation}

We consider multiple TurtleBots \cite{key_tb} (introduced before) to cover four of the workspaces and Quadcopters (introduced below) for the rest. 

\begin{example}
The state of a quadcopter is $s^i_j = (x, y) \in W$, which is its location in $W$. 
So, its set of motions is $M = \{$$\mathtt{Halt(H)}$, $\mathtt{MoveEast(ME)}$, $\mathtt{MoveNorth(MN)}$, $\mathtt{MoveWest(MW)}$, $\mathtt{MoveSouth(MS)}\}$, where $\mathtt{ME}$ moves it to the neighboring cell east of the current cell. 
Likewise, we define $\mathtt{MN, MW}$ and $\mathtt{MS}$. 
\end{example}

For each experiment, we \textit{incrementally} deploy $R \in \{128, 256, 512\}$ robots and repeat $10$ times with different random initial deployments to report their mean in performance metrics. 
We take a robot's path cost as the number of moves performed, where each takes $\tau = 1 \si{\second}$ for execution.

\subsubsection{Performance metrics}

We consider \textit{the mission time ($T_m$)} as the performance metric to compare proposed \FnConCPP with \FnOnDemCPP and \FnAPFCPP, which are horizon-based, i.e., in each horizon, the path planning and the path following interleave. 
So, in \FnOnDemCPP and \FnAPFCPP, $T_m = T_c + T_p$, where $T_c$ and $T_p$ are \textit{the total computation time} and \textit{the total path following time}, respectively, across all horizons. 
Note that $T_p = \Lambda \times \tau$, where $\Lambda$ is \textit{the path length}, which is equal for all the robots. 
However, both can overlap in an \textit{interval} (duration between consecutive $CLK$ values) in \FnConCPP. 
Hence, in \FnConCPP, $T_m$ is the duration from the beginning till complete coverage gets attained, where $T_c$ is the total computation time across all rounds and $\Lambda$ is the final $CLK$ value when the coverage gets completed.

\subsection{Results and Analysis}
\label{subsec:results_and_analysis}

We show the experimental results in Table \ref{tab:experimental_results}, where we list the workspaces in increasing order of \textit{the number of obstacle-free cells} ($|W_{free}|$) for each type of robot. 
\shortversion
{
For clarity, we do not show \textit{standard deviations} (available in the longer version of this paper~\cite{DBLP:conf/arxiv/MitraS24}). 
}

\subsubsection{Achievement in parallelizing path planning and path following}

\begin{figure}
    \centering
    \shortversion
    {
    \includegraphics[scale=0.185]{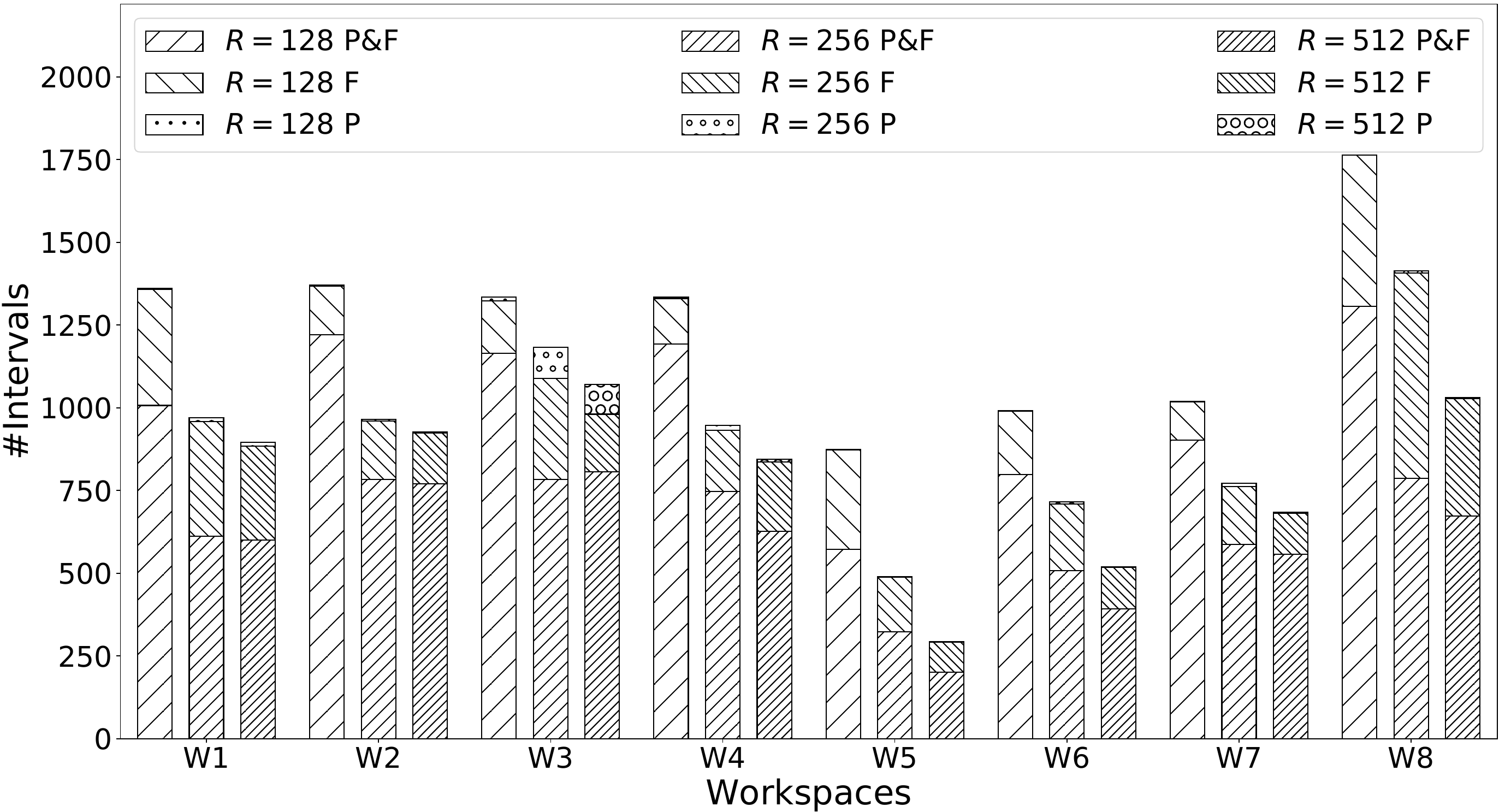}
    }
    \longversion
    {
    \includegraphics[scale=0.185]{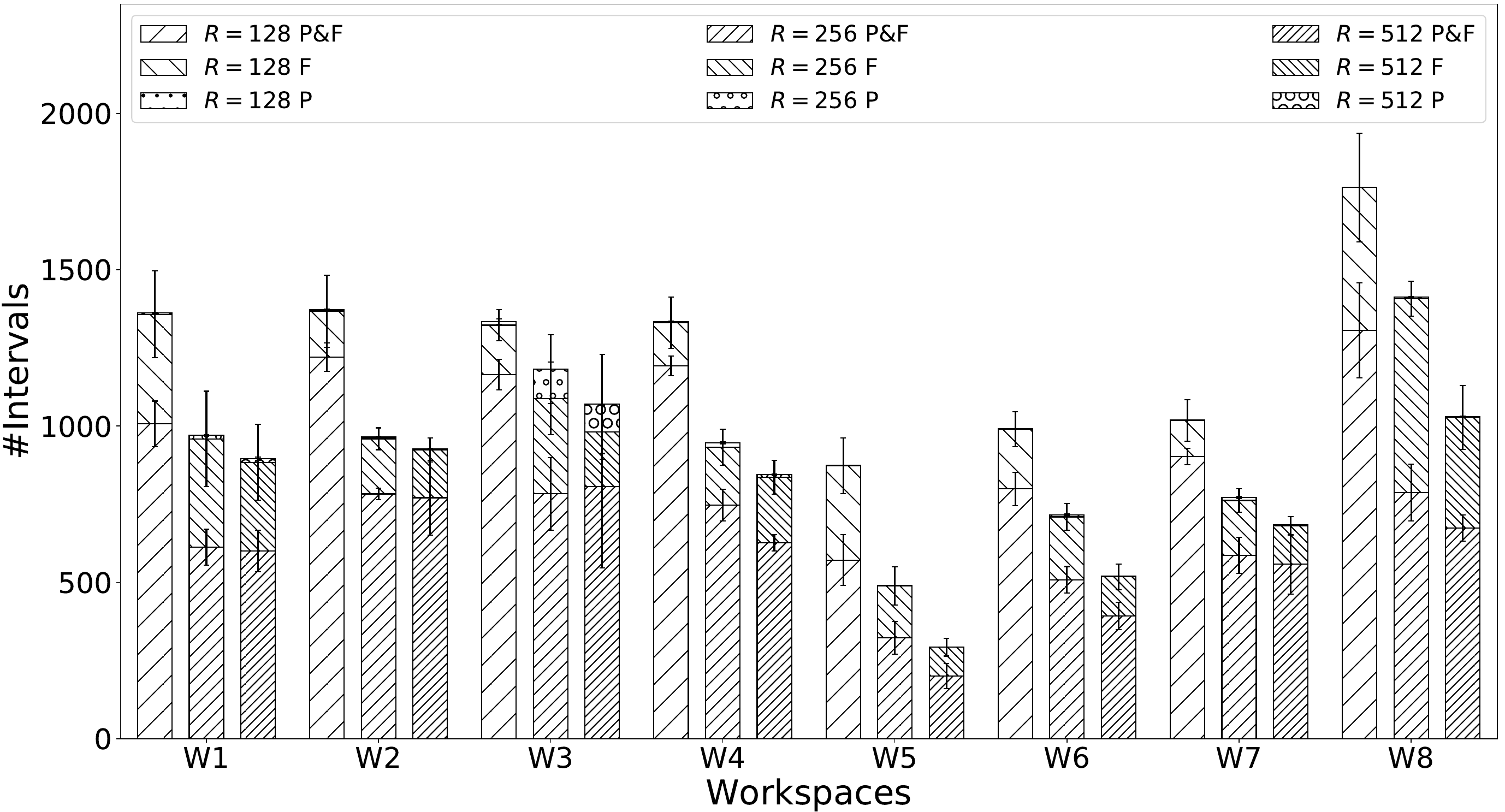}
    }
    \caption{Decomposition of \#Intervals}
    \label{fig:clk_stat}
\end{figure}

In Figure \ref{fig:clk_stat}, we show the decomposition of \textit{the number of intervals} (i.e., the final $CLK$ value) needed to attain the complete coverage into the number of intervals where \textit{both path planning and path following} happened, \textit{only path following} happened, and \textit{only path planning} happened, which we abbreviate as \textbf{P\&F}, \textbf{F}, and \textbf{P}, respectively. 
It shows the power of the proposed approach, parallelizing the path planning with the path following in $55-89\%$ of the intervals. 
Further, it shows that \#Intervals decreases as the number of robots ($R$) increases, completing the coverage faster.

\subsubsection{Performance of \texttt{ConCPPForPar} (Algorithm \ref{algo:cpp_for_par})}

The CP invokes \FnConCPPForPar in each round (line 30 in Algorithm \ref{algo:concpp}) to find the participants' timestamped paths. 
As $R$ increases, the number of participants per round $R^*$ (fourth column in Table~\ref{tab:experimental_results}) and the number of nonparticipants per round $R - R^*$ increase. 
So, Figure \ref{fig:comp_time_per_round_stat} shows that \textit{the mean computation time per round} ($\mu_c$), comprising of \textit{the mean cost-optimal path finding time} ($\mu_{COP}$) and \textit{the mean collision-free path finding time} ($\mu_{CFP}$), also increases. 
Recall that the CP spends $\mu_{COP}$ in \FnCOPForPar and $\mu_{CFP}$ in \FnCFPForPar (lines 1 and 7, respectively, in Algorithm~\ref{algo:cpp_for_par}). 
Moreover, \FnCOPForPar is independent of the timestamp $T_{start}$ but \FnCFPForPar is not as the nonparticipants remaining paths $\Sigma_{rem}$ depend on $T_{start}$ (line 6). 
\shortversion
{
As $\mu_{CFP}$ increases with $R$, \FnCFPForPar fails to find the participants' collision-free paths $\Sigma$ on time (line 9) in merely $0-8\%$ of the rounds (please see Figure 7 and read Section \Romannum{5}-B.2 for its explanation in the longer version of this paper~\cite{DBLP:conf/arxiv/MitraS24}). 
Therefore, the CP reattempts to find $\Sigma$ in the second iteration of the \textbf{while} loop (lines 3-11). 
}
\longversion
{
As $\mu_{CFP}$ increases with $R$, Figure~\ref{fig:round_stat} shows that \FnCFPForPar fails to find the participants' collision-free paths $\Sigma$ on time in merely $0-8\%$ of the rounds (line 9). 
Therefore, the CP reattempts to find $\Sigma$ in the second iteration of the \textbf{while} loop (lines 3-11). 
Note that \#Rounds decreases with $R$ as more robots complete the coverage faster. 
Also, the CP runs these rounds in \textbf{P\&F} and \textbf{P} intervals. 
Moreover, \#Rounds can be greater than \#Intervals as the CP can sequentially run multiple rounds on demand in an interval, as explained in Example~\ref{exp:concpp}. 
}

\begin{figure}
    \centering
    \shortversion
    {
    \includegraphics[scale=0.185]{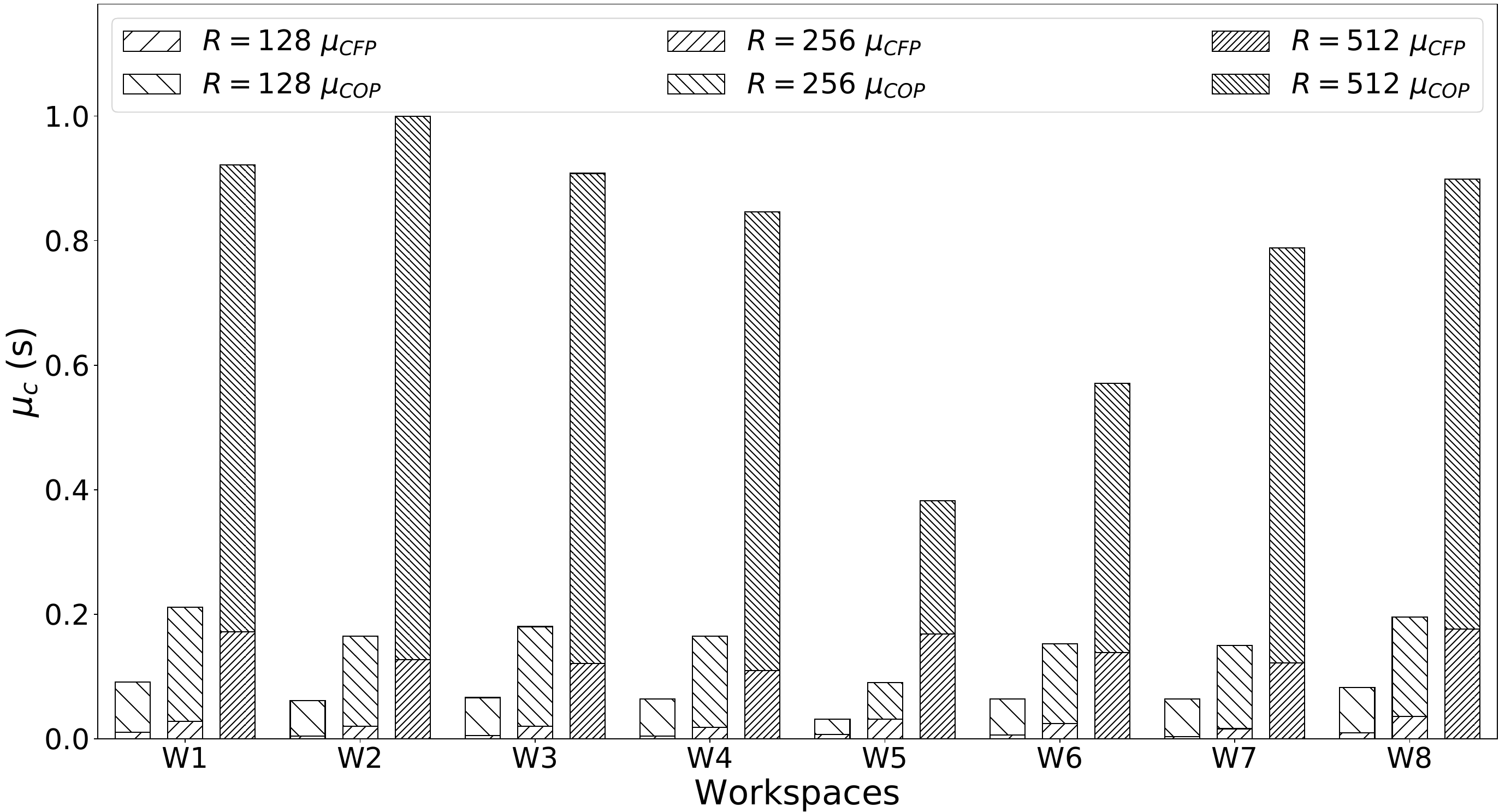}
    }
    \longversion
    {
    \includegraphics[scale=0.185]{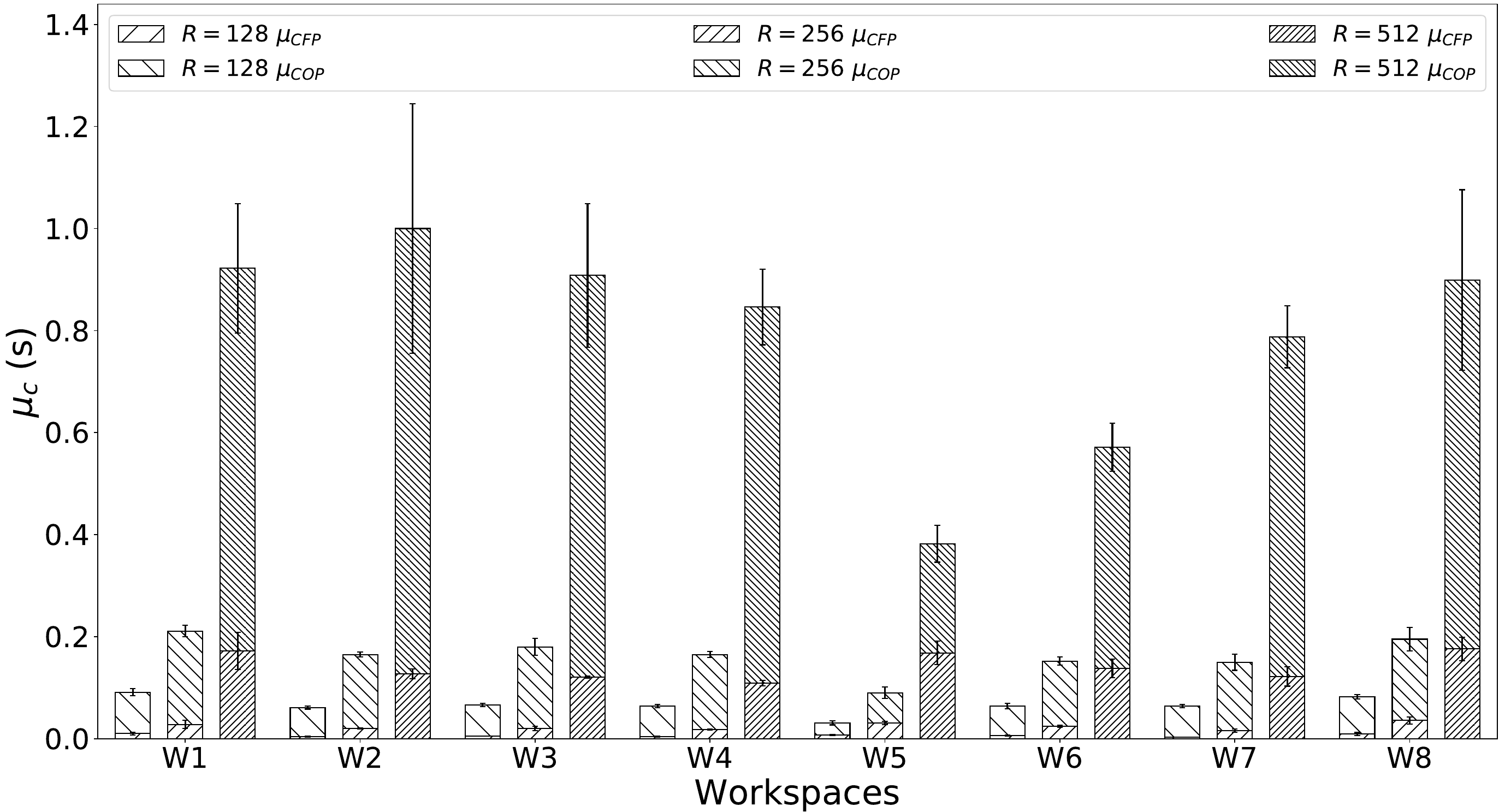}
    }
    \caption{Decomposition of $\mu_c$}
    \label{fig:comp_time_per_round_stat}
\end{figure}

\longversion
{
\begin{figure}
    \centering
    \includegraphics[scale=0.185]{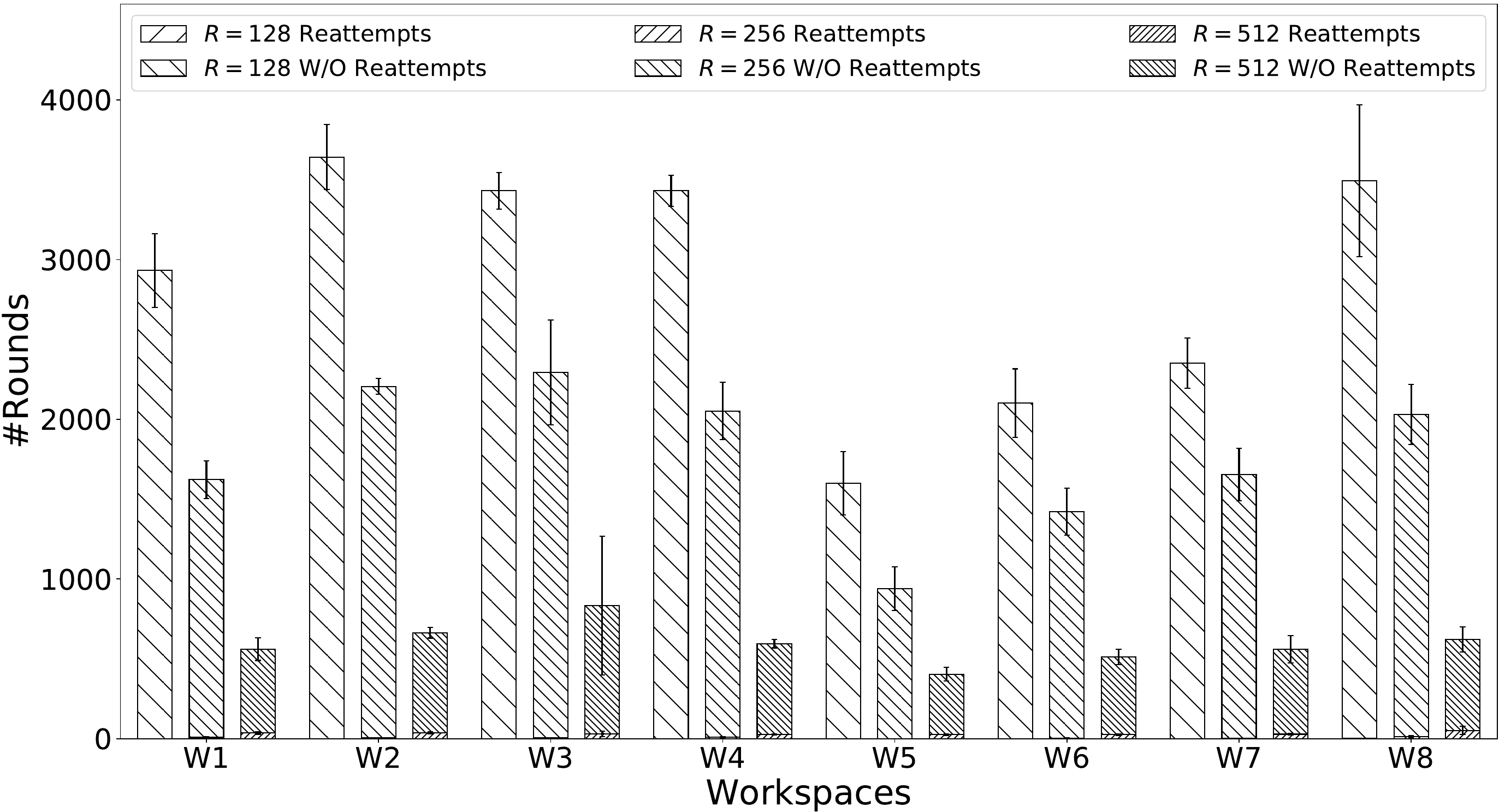}
    \caption{Decomposition of \#Rounds}
    \label{fig:round_stat}
\end{figure}

\begin{figure}
    \centering
    \includegraphics[scale=0.185]{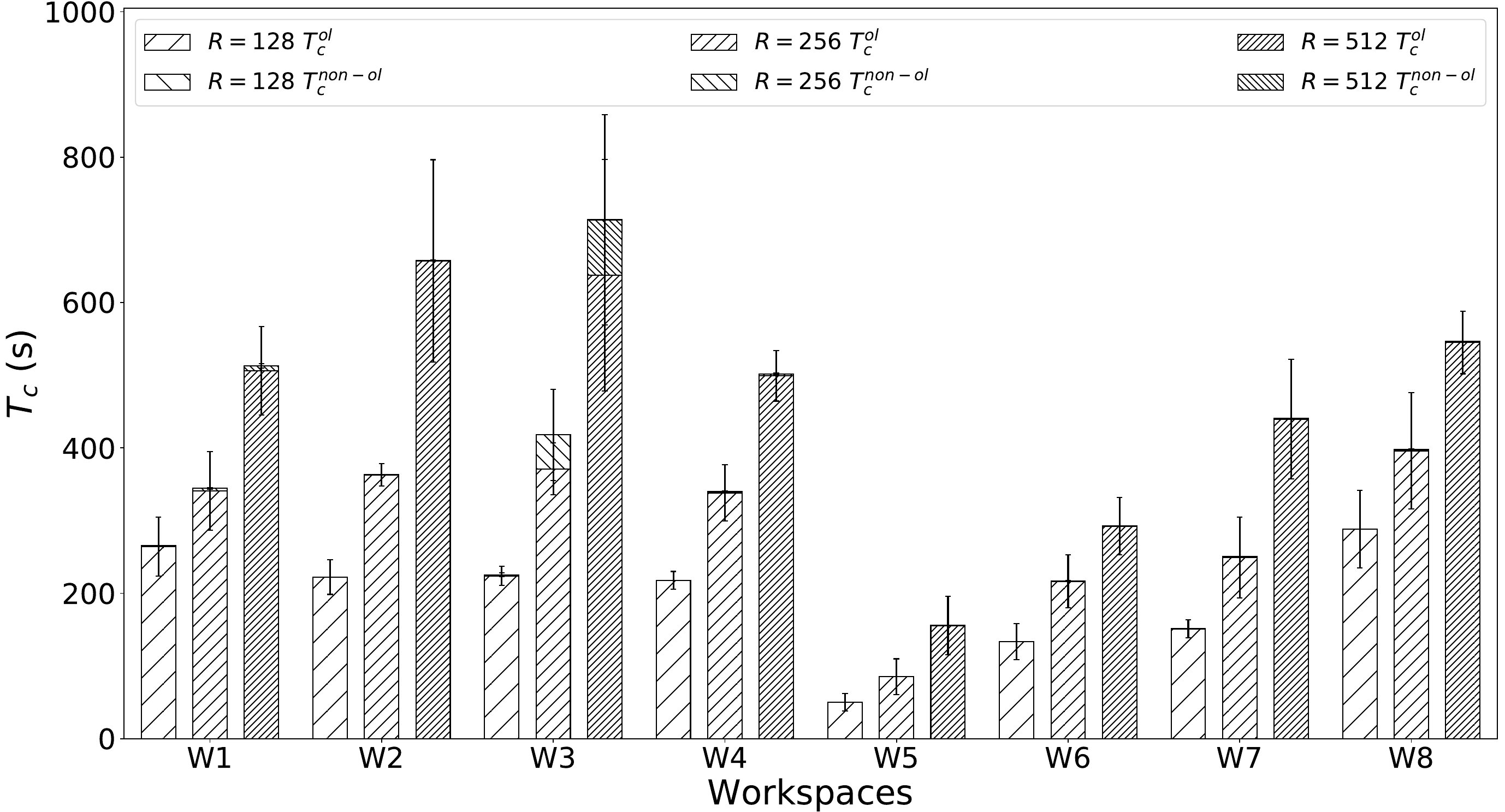}
    \caption{Decomposition of $T_c$}
    \label{fig:comp_time_stat}
\end{figure}

\begin{figure}
    \centering
    \includegraphics[scale=0.177]{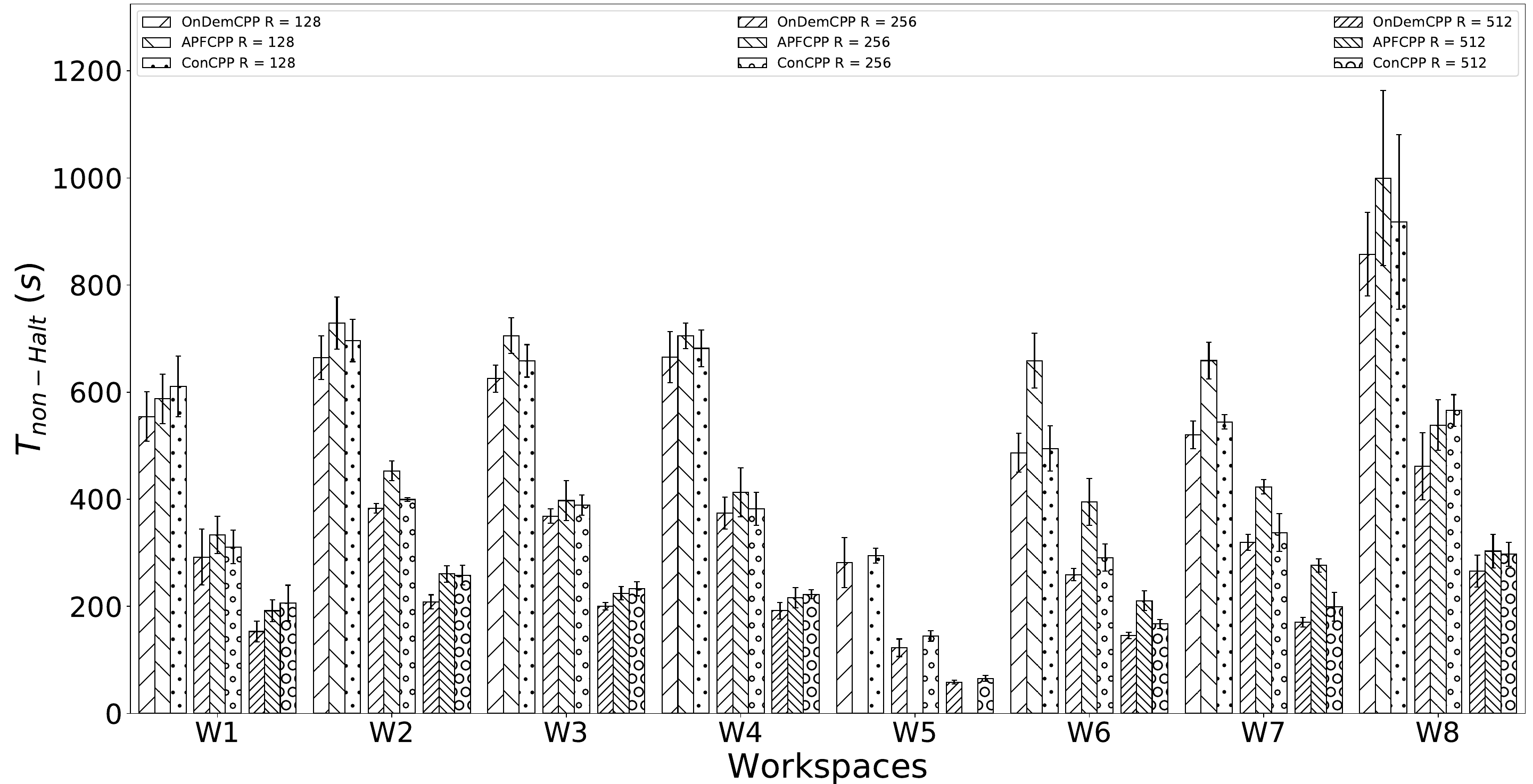}
    \caption{Time spent on non-Halt moves}
    \label{fig:non_halt}
\end{figure}
}

\subsubsection{Comparison of Mission Time in Table \ref{tab:experimental_results}}

\shortversion
{

\begin{table*}[t]
    \caption{Experimental results}
    \label{tab:experimental_results}
    \centering
    \resizebox{0.99\textwidth}{!}      
    {
        \begin{tabular}{@{}lcccrrrrrrrrrrrrrrrcc@{}}
            \toprule

             & & & & \multicolumn{2}{c}{$R^*$} & \multicolumn{3}{c}{$T_c\ (\si{\second})$} & $T^{ol}_c\ (\si{\second})$ & \multicolumn{3}{c}{$T_p\ (\si{\second})$} & \multicolumn{3}{c}{$T_{non-Halt}\ (\si{\second})$} & \multicolumn{3}{c}{$T_m\ (\si{\second})$} & \multicolumn{2}{c}{\textbf{Speed Up}} \\
             
            \cmidrule(lr){5-6}
            \cmidrule(lr){7-9}
            \cmidrule(lr){10-10}
            \cmidrule(lr){11-13}
            \cmidrule(lr){14-16}
            \cmidrule(lr){17-19}
            \cmidrule(lr){20-21}
            
            $M$ & \multicolumn{2}{c}{\textit{Workspace}} & $R$ & \FnOnDem & \FnCon & \FnOnDem & \FnAPF & \FnCon & \FnCon & \FnOnDem & \FnAPF & \FnCon & \FnOnDem & \FnAPF & \FnCon & \FnOnDem & \FnAPF & \FnCon & \FnOnDem & \FnAPF \\
            
            \midrule
            
            \multirow{12}{*}{\begin{turn}{90}Quadcopter\end{turn}} & \multirow{3}{*}{\shortstack{W1:\\w\_woundedcoast\\  $578 \times 642\ (34,020)$}} & \multirow{3}{*}{\includegraphics[scale=0.025]{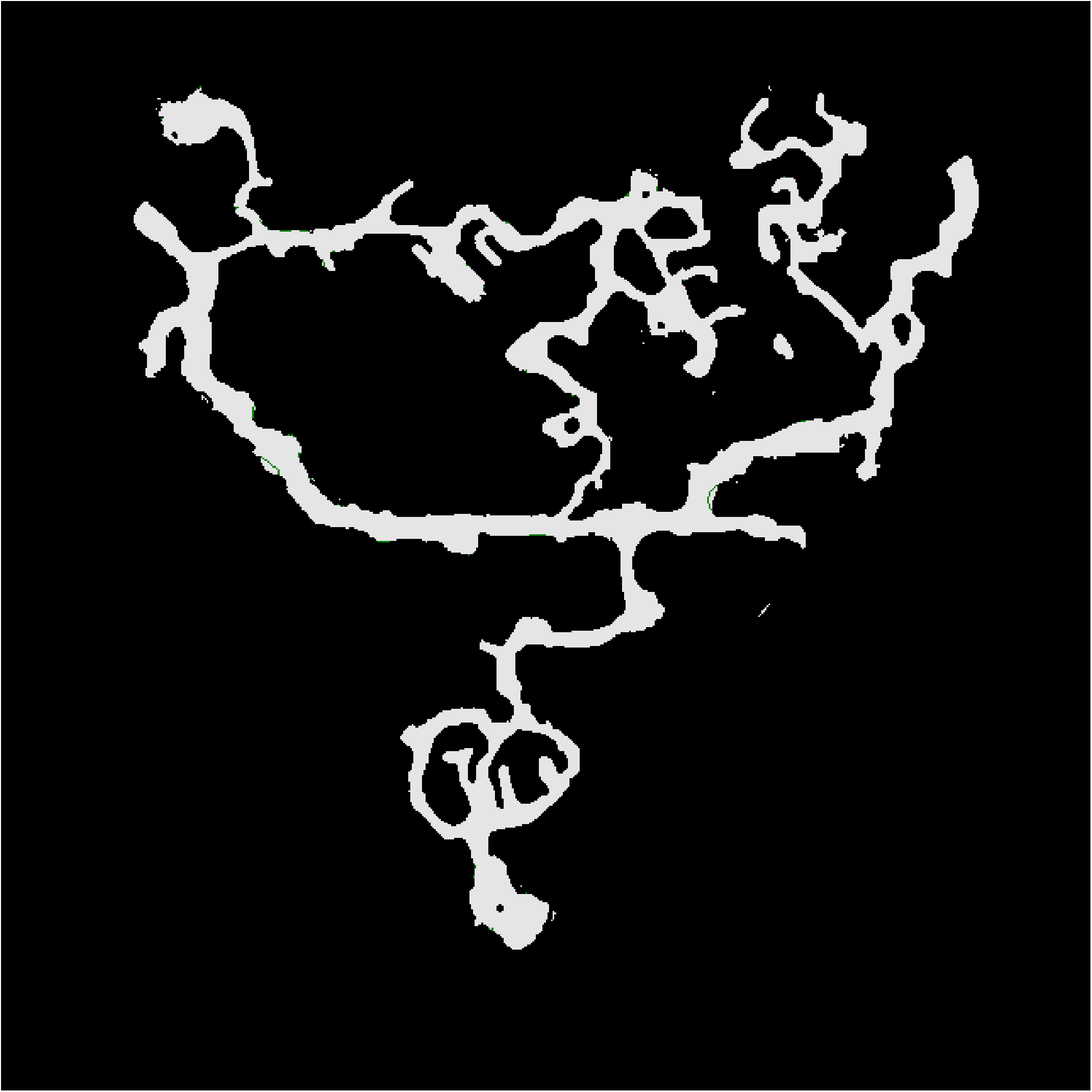}} & 128 & \phantom{0}80.4 & 33.8 & 410.1 & 193.2 & 265.3 & 264.4 & 1042.0 & 895.6 & 1358.0 & 554.5 & 587.5 & 610.5 & 1452.1 & 1146.5 & 1358.2 & \textbf{1.1} & 0.8 \\
             &  &  & 256 & 170.7 & 82.6 & 579.2 & 417.7 & 344.2 & 340.9 & 804.2 & 588.8 & 962.0 & 291.9 & 333.5 & 310.9 & 1383.4 & 1083.6 & 962.1 & \textbf{1.4} & \textbf{1.1} \\
             &  &  & 512 & 361.5 & 233.9 & 639.5 & 558.4 & 512.8 & 506.1 & 557.2 & 395.6 & 891.0 & 153.0 & 191.8 & 206.2 & 1196.7 & 1048.2 & 891.0 & \textbf{1.3} & \textbf{1.2} \\
             
            \cmidrule(lr){2-21}
            
             & \multirow{3}{*}{\shortstack{W2:\\Paris\_1\_256\\ $256 \times 256\ (47,240)$}} & \multirow{3}{*}{\includegraphics[scale=0.025]{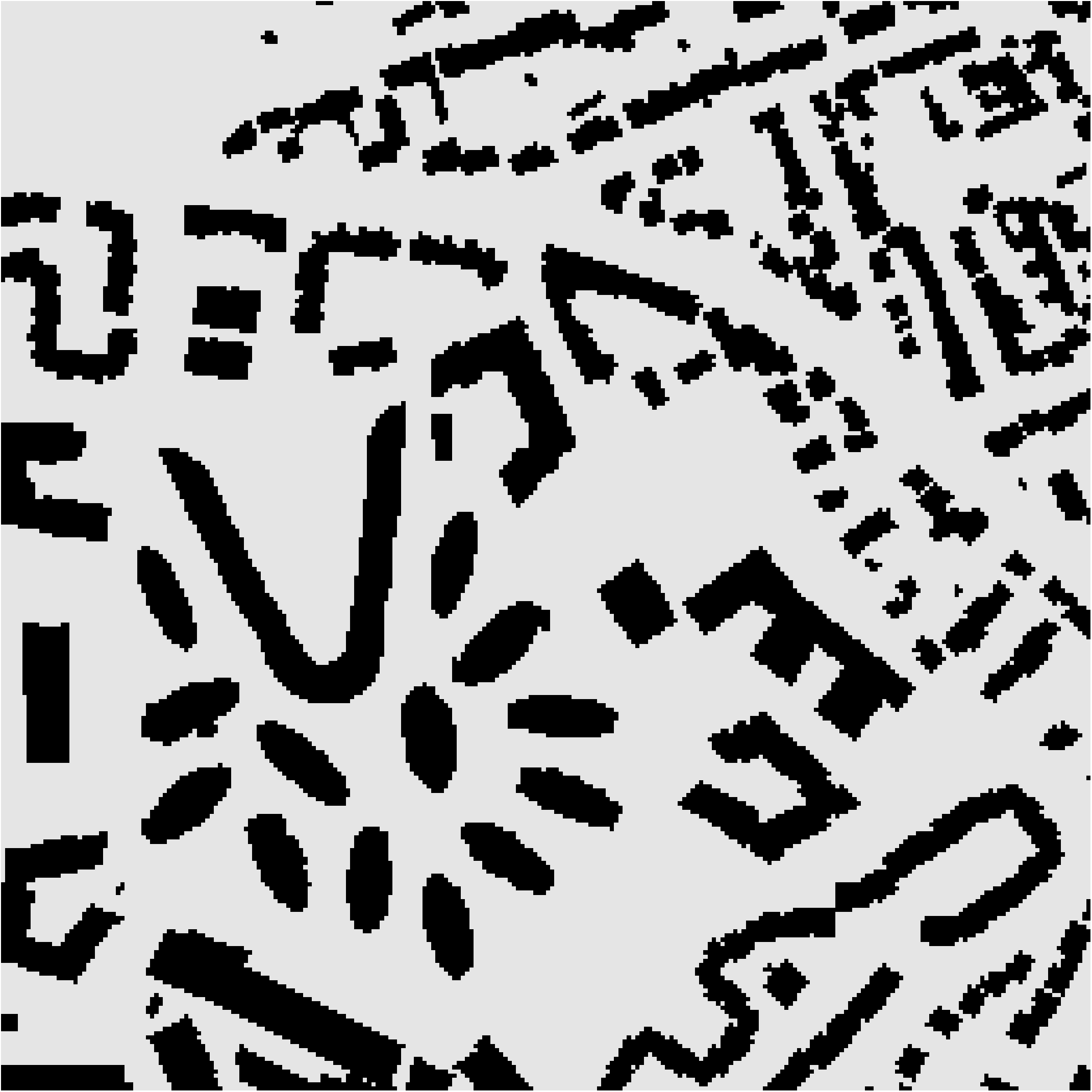}} & 128 & \phantom{0}81.3 & 30.3 & 332.5 & 158.3 & 222.3 & 222.2 & 962.6 & 853.0 & 1371.6 & 664.3 & 729.0 & 696.5 & 1295.1 & 1082.0 & 1371.7 & 0.9 & 0.8 \\
             &  &  & 256 & 157.2 & 62.6 & 618.5 & 338.7 & 363.3 & 362.9 & 656.2 & 573.4 & 965.8 & 383.1 & 453.1 & 399.5 & 1274.7 & 1014.9 & 965.9 & \textbf{1.3} & \textbf{1.1} \\
             &  &  & 512 & 335.6 & 194.4 & 992.2 & 704.6 & 658.0 & 657.2 & 455.7 & 373.3 & 929.0 & 208.5 & 260.5 & 258.3 & 1447.9 & 1195.6 & 929.1 & \textbf{1.6} & \textbf{1.3} \\
             
            \cmidrule(lr){2-21}
            
             & \multirow{3}{*}{\shortstack{W3:\\Berlin\_1\_256\\ $256 \times 256\ (47,540)$}} & \multirow{3}{*}{\includegraphics[scale=0.025]{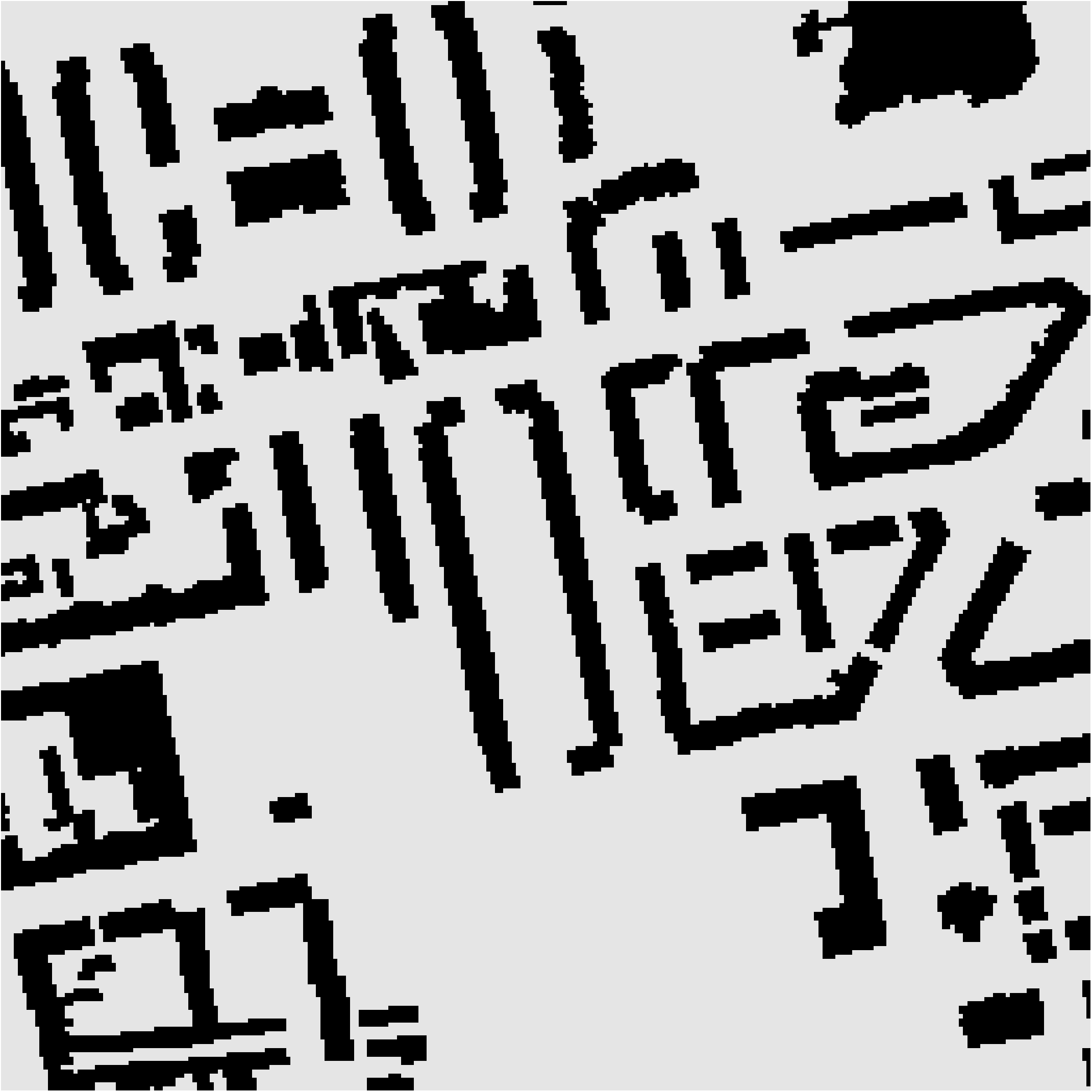}} & 128 & \phantom{0}83.2 & 31.9 & 296.2 & 170.6 & 225.4 & 224.0 & 898.2 & 739.0 & 1334.0 & 625.2 & 705.5 & 658.4 & 1194.4 & 978.1 & 1334.1 & 0.9 & 0.7 \\
             &  &  & 256 & 167.4 & 80.8 & 598.9 & 393.5 & 417.9 & 371.1 & 669.6 & 533.2 & 1186.8 & 368.5 & 397.5 & 389.1 & 1268.5 & 1018.1 & 1186.9 & \textbf{1.1} & 0.9 \\
             &  &  & 512 & 365.2 & 259.8 & 947.8 & 911.6 & 713.8 & 637.5 & 524.9 & 353.8 & 1075.6 & 200.0 & 224.3 & 232.7 & 1472.7 & 1371.9 & 1075.7 & \textbf{1.4} & \textbf{1.3} \\
             
            \cmidrule(lr){2-21}
            
             & \multirow{3}{*}{\shortstack{W4:\\Boston\_0\_256\\ $256 \times 256\ (47,768)$}} & \multirow{3}{*}{\includegraphics[scale=0.025]{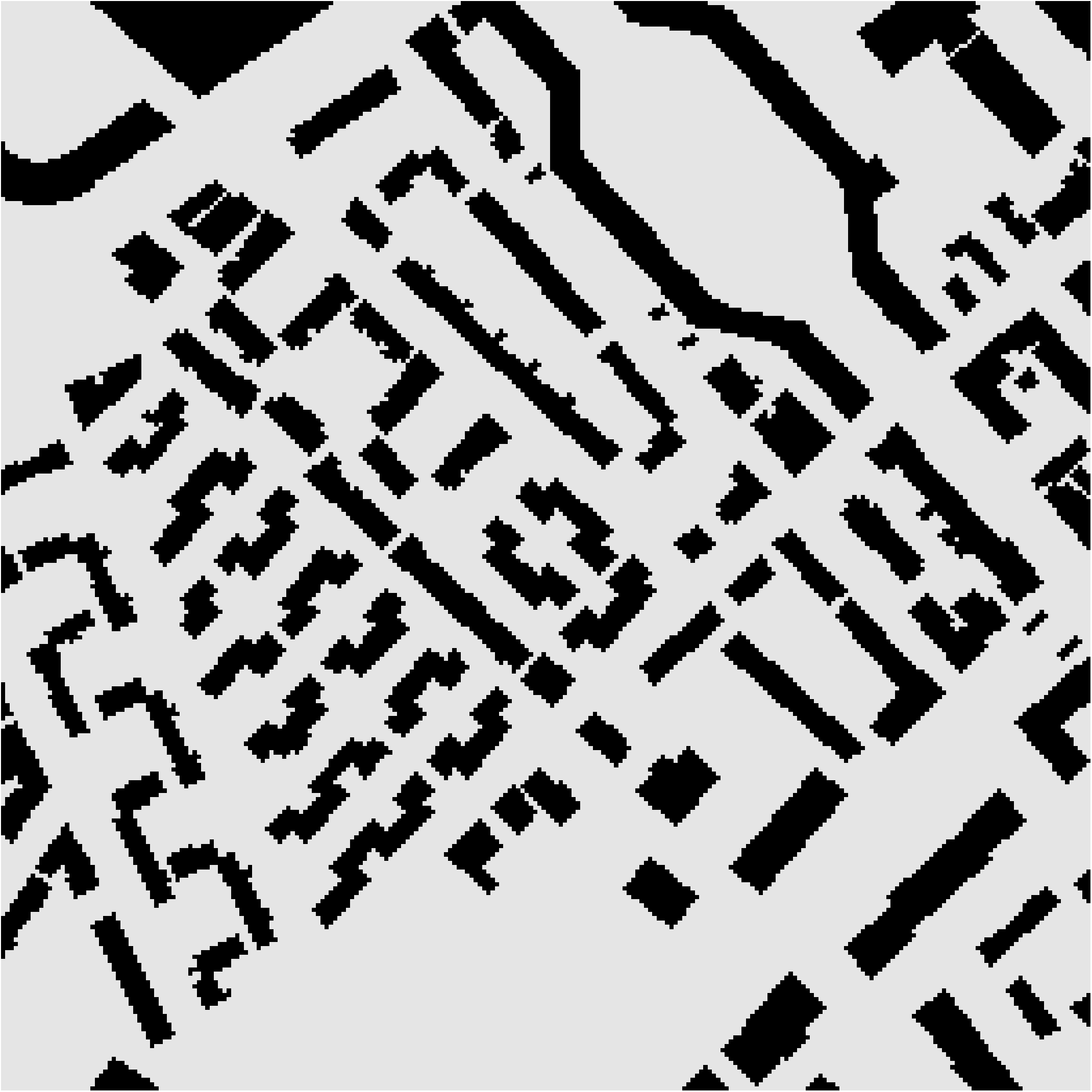}} & 128 & \phantom{0}79.6 & 30.8 & 313.3 & 259.0 & 217.8 & 217.6 & 928.0 & 730.0 & 1334.8 & 665.3 & 705.1 & 681.8 & 1241.3 & 1057.4 & 1334.9 & 0.9 & 0.8 \\
             &  &  & 256 & 157.8 & 66.9 & 531.7 & 494.6 & 339.8 & 338.2 & 625.9 & 428.6 & 949.8 & 373.8 & 412.9 & 382.1 & 1157.6 & 1017.4 & 949.9 & \textbf{1.2} & \textbf{1.1} \\
             &  &  & 512 & 345.3 & 187.0 & 718.1 & 766.5 & 501.8 & 499.0 & 428.7 & 225.2 & 847.6 & 191.9 & 216.5 & 222.4 & 1146.8 & 1097.9 & 847.7 & \textbf{1.4} & \textbf{1.3} \\
             
            \cmidrule(lr){1-21}
            
            \multirow{12}{*}{\begin{turn}{90}TurtleBot\end{turn}} & \multirow{3}{*}{\shortstack{W5:\\maze-128-128-2\\ $128 \times 128\ (10,858)$}} & \multirow{3}{*}{\includegraphics[scale=0.025]{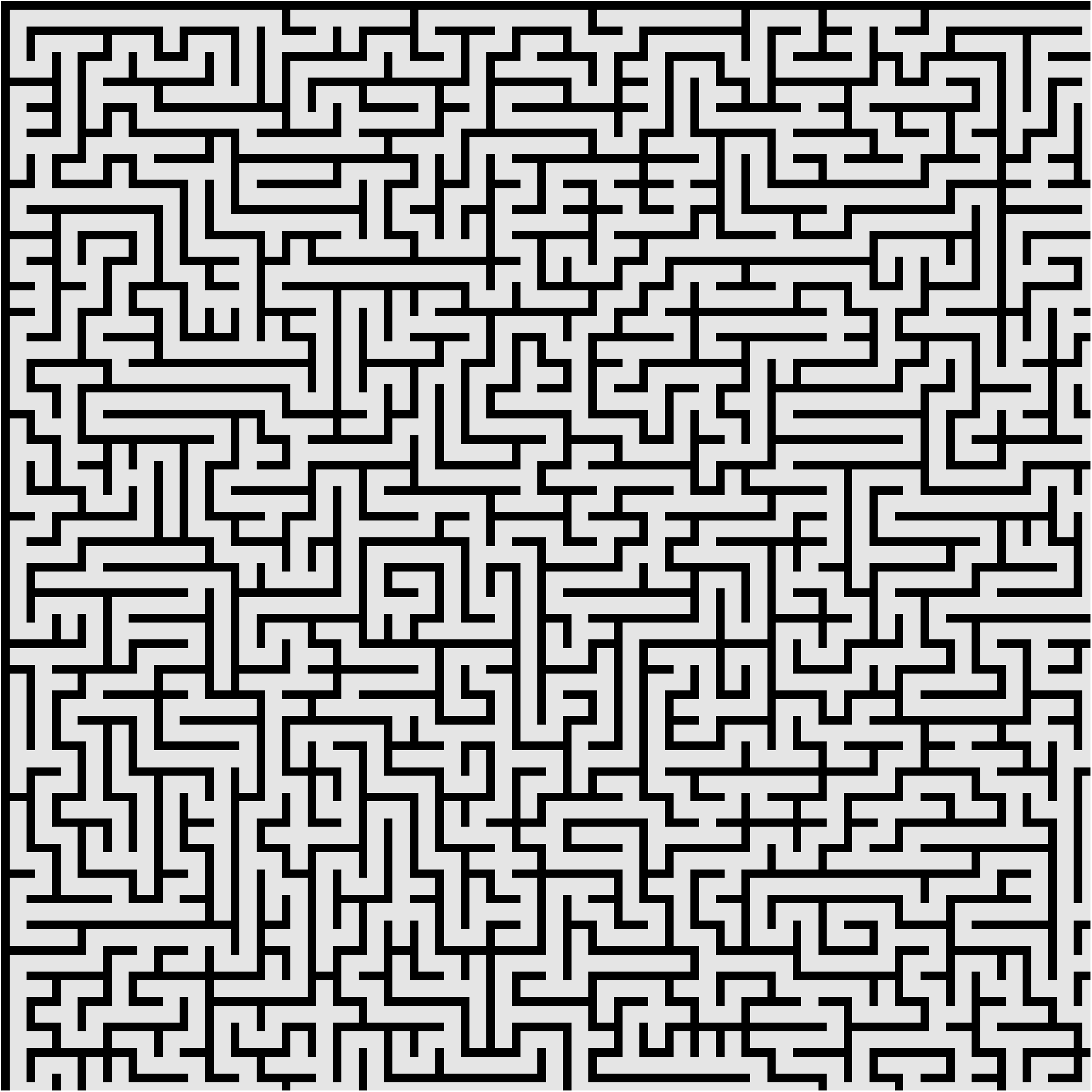}} & 128 & \phantom{0}75.5 & 47.8 & 76.8 &\textcolor{red}{TO}  & 50.1 & 50.0 & 791.5 &\textcolor{red}{TO}  & 874.6 & 281.6 &\textcolor{red}{TO}  & 294.8 & 868.3 &\textcolor{red}{TO}  & 874.7 & 0.9 &\textcolor{red}{TO}  \\
             &  &  & 256 & 174.7 & 111.6 & 97.5 &\textcolor{red}{TO}  & 85.4 & 85.3 & 417.6 &\textcolor{red}{TO}  & 489.8 & 122.7 &\textcolor{red}{TO}  & 144.8 & 515.1 &\textcolor{red}{TO}  & 489.9 & \textbf{1.1} &\textcolor{red}{TO}  \\
             &  &  & 512 & 378.4 & 301.8 & 155.5 &\textcolor{red}{TO}  & 156.1 & 155.5 & 224.3 &\textcolor{red}{TO}  & 293.4 & 58.3 &\textcolor{red}{TO}  & 65.2 & 379.8 &\textcolor{red}{TO}  & 293.5 & \textbf{1.3} &\textcolor{red}{TO}  \\
             
            \cmidrule(lr){2-21}
            
             & \multirow{3}{*}{\shortstack{W6:\\den520d\\ $257 \times 256\ (28,178)$}} & \multirow{3}{*}{\includegraphics[scale=0.025]{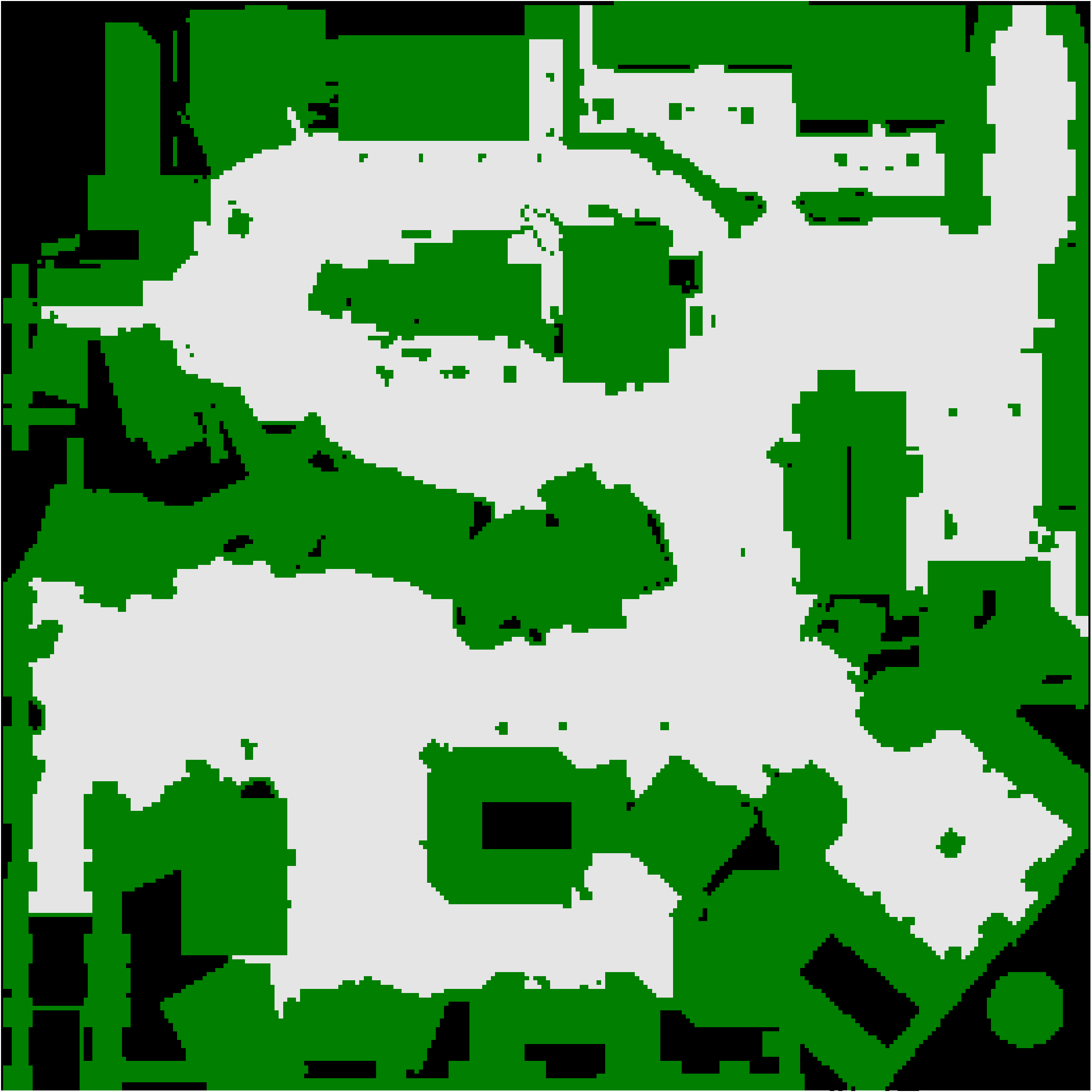}} & 128 & \phantom{0}71.9 & 31.4 & 191.5 & 138.2 & 133.6 & 133.6 & 773.1 & 731.3 & 991.0 & 486.9 & 658.6 & 494.6 & 964.6 & 953.9 & 991.1 & 0.9 & 0.9 \\
             &  &  & 256 & 153.8 & 68.0 & 307.6 & 316.9 & 217.0 & 216.4 & 506.4 & 463.9 & 715.8 & 259.3 & 395.4 & 291.0 & 814.0 & 873.8 & 715.9 & \textbf{1.1} & \textbf{1.2} \\
             &  &  & 512 & 331.1 & 200.4 & 384.9 & 371.0 & 292.7 & 292.2 & 398.6 & 343.0 & 519.4 & 145.6 & 210.4 & 167.0 & 783.5 & 814.9 & 519.5 & \textbf{1.5} & \textbf{1.6} \\
             
            \cmidrule(lr){2-21}
            
             & \multirow{3}{*}{\shortstack{W7:\\warehouse-20-40-10-2-2\\ $164 \times 340\ (38,756)$}} & \multirow{3}{*}{\includegraphics[scale=0.025]{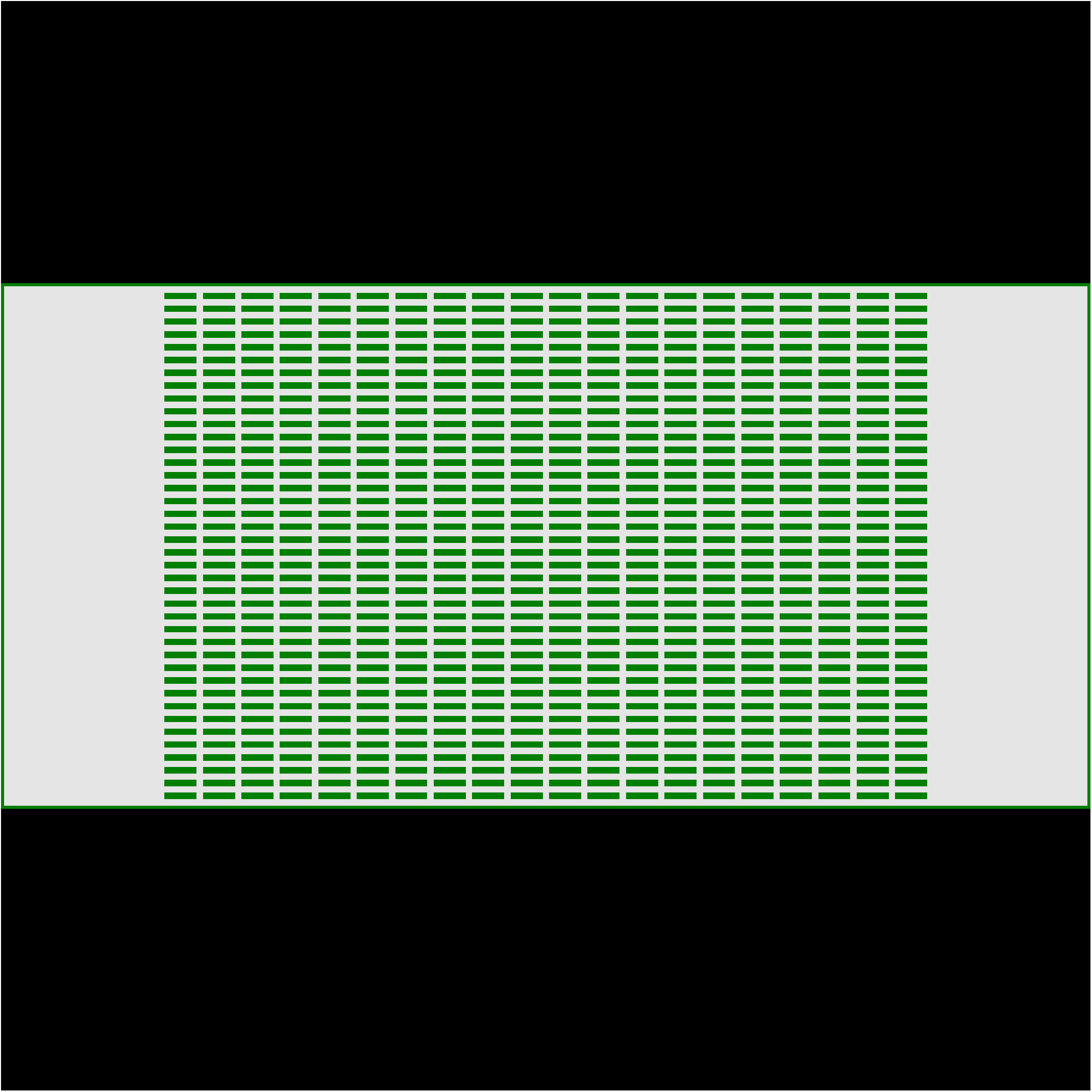}} & 128 & \phantom{0}81.2 & 27.4 & 260.7 & 197.3 & 151.2 & 151.1 & 707.6 & 711.5 & 1018.8 & 520.3 & 658.9 & 544.5 & 968.3 & 997.0 & 1018.9 & 0.9 & 0.9 \\
             &  &  & 256 & 149.3 & 57.1 & 401.5 & 412.7 & 250.4 & 249.3 & 502.5 & 484.8 & 772.2 & 319.6 & 423.1 & 337.7 & 904.0 & 991.7 & 772.3 & \textbf{1.2} & \textbf{1.3} \\
             &  &  & 512 & 335.5 & 216.2 & 473.7 & 539.2 & 440.4 & 439.6 & 375.2 & 304.7 & 684.0 & 170.3 & 276.4 & 199.3 & 848.9 & 971.0 & 684.1 & \textbf{1.2} & \textbf{1.4} \\
             
            \cmidrule(lr){2-21}
            
             & \multirow{3}{*}{\shortstack{W8:\\brc202d\\ $481 \times 530\ (43,151)$}} & \multirow{3}{*}{\includegraphics[scale=0.025]{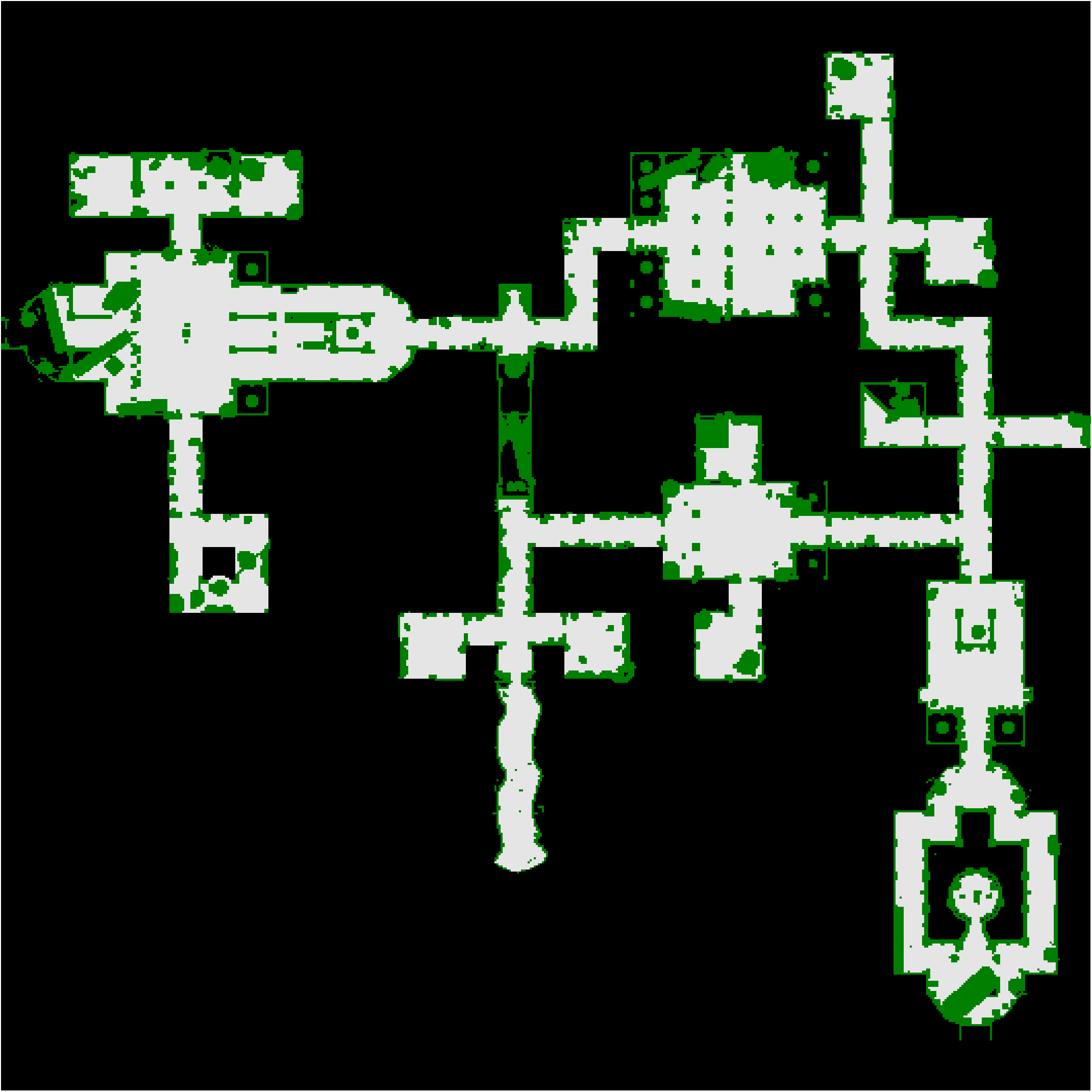}} & 128 & \phantom{0}65.1 & 25.9 & 467.7 & 391.3 & 288.4 & 288.1 & 1444.9 & 1048.3 & 1764.2 & 857.4 & 999.7 & 917.7 & 1912.6 & 1535.9 & 1764.3 & \textbf{1.1} & 0.9 \\
             &  &  & 256 & 142.0 & 56.1 & 790.1 & 831.9 & 398.1 & 396.1 & 972.5 & 573.6 & 1413.2 & 461.5 & 538.4 & 565.7 & 1762.6 & 1526.6 & 1413.3 & \textbf{1.2} & \textbf{1.1} \\
             &  &  & 512 & 309.6 & 215.7 & 818.2 & 980.6 & 546.3 & 544.8 & 780.9 & 326.4 & 1030.2 & 266.2 & 303.1 & 297.3 & 1599.1 & 1445.2 & 1030.3 & \textbf{1.6} & \textbf{1.4} \\
            \bottomrule
        \end{tabular}
   }
\end{table*}
}
\longversion
{

\begin{table*}[t]
    \caption{Experimental results}
    \label{tab:experimental_results}
    \centering
    \resizebox{0.99\textwidth}{!}      
    {
        \begin{tabular}{@{}lcccrrrrrrrrrrrcc@{}}
            \toprule

             & & & & \multicolumn{2}{c}{$R^*$} & \multicolumn{3}{c}{$T_c\ (\si{\second})$} & \multicolumn{3}{c}{$T_p\ (\si{\second})$} & \multicolumn{3}{c}{$T_m\ (\si{\second})$} & \multicolumn{2}{c}{\textbf{Mission SU}} \\
             
            \cmidrule(lr){5-6}
            \cmidrule(lr){7-9}
            \cmidrule(lr){10-12}
            \cmidrule(lr){13-15}
            \cmidrule(lr){16-17}
            
            $M$ & \multicolumn{2}{c}{\textit{Workspace}} & $R$ & \FnOnDem & \FnCon & \FnOnDem & \FnAPF & \FnCon & \FnOnDem & \FnAPF & \FnCon & \FnOnDem & \FnAPF & \FnCon & \FnOnDem & \FnAPF \\
            
            \midrule
            
            \multirow{12}{*}{\begin{turn}{90}Quadcopter\end{turn}} & \multirow{3}{*}{\shortstack{W1:\\w\_woundedcoast\\  $578 \times 642\ (34,020)$}} & \multirow{3}{*}{\includegraphics[scale=0.025]{figures/workspaces/w_woundedcoast.pdf}} & 128 & \phantom{0}80.4 $\pm$ 3.8 & 33.8 $\pm$ \phantom{0}3.0 & 410.1 $\pm$ \phantom{0}58.4 & 193.2 $\pm$ 41.8 & 265.3 $\pm$ 40.6 & 1042.0 $\pm$ 149.5 & 895.6 $\pm$ 44.8 & 1358.0 $\pm$ 135.2 & 1452.1 $\pm$ 167.1 & 1146.5 $\pm$ \phantom{0}87.5 & 1358.2 $\pm$ 135.2 & \textbf{1.1} & 0.8 \\
             &  &  & 256 & 170.7 $\pm$ 5.7 & 82.6 $\pm$ \phantom{0}5.3 & 579.2 $\pm$ 150.2 & 417.7 $\pm$ 35.2 & 344.2 $\pm$ 53.7 & 804.2 $\pm$ 250.9 & 588.8 $\pm$ 68.1 & 962.0 $\pm$ 197.7 & 1383.4 $\pm$ 383.0 & 1083.6 $\pm$ 105.7 & 962.1 $\pm$ 197.7 & \textbf{1.4} & \textbf{1.1} \\
             &  &  & 512 & 361.5 $\pm$ 9.7 & 233.9 $\pm$ 13.2 & 639.5 $\pm$ 168.6 & 558.4 $\pm$ 20.6 & 512.8 $\pm$ 61.2 & 557.2 $\pm$ 138.9 & 395.6 $\pm$ 80.7 & 891.0 $\pm$ 183.1 & 1196.7 $\pm$ 255.4 & 1048.2 $\pm$ 108.1 & 891.0 $\pm$ 183.1 & \textbf{1.3} & \textbf{1.2} \\
             
            \cmidrule(lr){2-17}
            
             & \multirow{3}{*}{\shortstack{W2:\\Paris\_1\_256\\ $256 \times 256\ (47,240)$}} & \multirow{3}{*}{\includegraphics[scale=0.025]{figures/workspaces/Paris_1_256.pdf}} & 128 & \phantom{0}81.3 $\pm$ 2.6 & 30.3 $\pm$ \phantom{0}1.8 & 332.5 $\pm$ \phantom{0}37.8 & 158.3 $\pm$ 58.7 & 222.3 $\pm$ \phantom{0}23.7 & 962.6 $\pm$ 100.8 & 853.0 $\pm$ 28.3 & 1371.6 $\pm$ 149.4 & 1295.1 $\pm$ 118.8 & 1082.0 $\pm$ 79.6 & 1371.7 $\pm$ 149.4 & 0.9 & 0.8 \\
             &  &  & 256 & 157.2 $\pm$ 3.7 & 62.6 $\pm$ \phantom{0}1.7 & 618.5 $\pm$ \phantom{0}79.6 & 338.7 $\pm$ 20.7 & 363.3 $\pm$ \phantom{0}15.3 & 656.2 $\pm$ \phantom{0}49.4 & 573.4 $\pm$ 49.4 & 965.8 $\pm$ \phantom{0}24.9 & 1274.7 $\pm$ 110.8 & 1014.9 $\pm$ 63.8 & 965.9 $\pm$ \phantom{0}24.9 & \textbf{1.3} & \textbf{1.1} \\
             &  &  & 512 & 335.6 $\pm$ 7.0 & 194.4 $\pm$ 13.4 & 992.2 $\pm$ 113.6 & 704.6 $\pm$ 16.4 & 658.0 $\pm$ 139.1 & 455.7 $\pm$ \phantom{0}50.1 & 373.3 $\pm$ 73.2 & 929.0 $\pm$ 146.5 & 1447.9 $\pm$ 117.1 & 1195.6 $\pm$ 90.5 & 929.1 $\pm$ 146.6 & \textbf{1.6} & \textbf{1.3} \\
             
            \cmidrule(lr){2-17}
            
             & \multirow{3}{*}{\shortstack{W3:\\Berlin\_1\_256\\ $256 \times 256\ (47,540)$}} & \multirow{3}{*}{\includegraphics[scale=0.025]{figures/workspaces/Berlin_1_256.pdf}} & 128 & \phantom{0}83.2 $\pm$ \phantom{0}2.5 & 31.9 $\pm$ \phantom{0}1.7 & 296.2 $\pm$ \phantom{0}17.2 & 170.6 $\pm$ \phantom{0}90.5 & 225.4 $\pm$ \phantom{0}12.1 & 898.2 $\pm$ \phantom{0}63.9 & 739.0 $\pm$ \phantom{0}31.2 & 1334.0 $\pm$ \phantom{0}49.1 & 1194.4 $\pm$ \phantom{0}65.0 & 978.1 $\pm$ \phantom{0}50.4 & 1334.1 $\pm$ \phantom{0}49.1 & 0.9 & 0.7 \\
             &  &  & 256 & 167.4 $\pm$ 12.1 & 80.8 $\pm$ 23.1 & 598.9 $\pm$ \phantom{0}69.8 & 393.5 $\pm$ 136.7 & 417.9 $\pm$ \phantom{0}96.2 & 669.6 $\pm$ \phantom{0}48.9 & 533.2 $\pm$ \phantom{0}90.9 & 1186.8 $\pm$ 275.0 & 1268.5 $\pm$ 103.3 & 1018.1 $\pm$ 202.5 & 1186.9 $\pm$ 275.0 & \textbf{1.1} & 0.9 \\
             &  &  & 512 & 365.2 $\pm$ 34.5 & 259.8 $\pm$ 87.4 & 947.8 $\pm$ 115.1 & 911.6 $\pm$ 202.5 & 713.8 $\pm$ 280.3 & 524.9 $\pm$ 140.0 & 353.8 $\pm$ 185.2 & 1075.6 $\pm$ 381.5 & 1472.7 $\pm$ 238.9 & 1371.9 $\pm$ 372.6 & 1075.7 $\pm$ 381.5 & \textbf{1.4} & \textbf{1.3} \\
             
            \cmidrule(lr){2-17}
            
             & \multirow{3}{*}{\shortstack{W4:\\Boston\_0\_256\\ $256 \times 256\ (47,768)$}} & \multirow{3}{*}{\includegraphics[scale=0.025]{figures/workspaces/Boston_0_256.pdf}} & 128 & \phantom{0}79.6 $\pm$ 4.5 & 30.8 $\pm$ \phantom{0}0.8 & 313.3 $\pm$ \phantom{0}30.1 & 259.0 $\pm$ 30.7 & 217.8 $\pm$ 12.4 & 928.0 $\pm$ 100.6 & 730.0 $\pm$ 17.6 & 1334.8 $\pm$ 102.1 & 1241.3 $\pm$ 116.3 & 1057.4 $\pm$ \phantom{0}43.1 & 1334.9 $\pm$ 102.1 & 0.9 & 0.8 \\
             &  &  & 256 & 157.8 $\pm$ 9.3 & 66.9 $\pm$ \phantom{0}4.3 & 531.7 $\pm$ \phantom{0}61.2 & 494.6 $\pm$ 52.7 & 339.8 $\pm$ 37.9 & 625.9 $\pm$ \phantom{0}46.9 & 428.6 $\pm$ 58.8 & 949.8 $\pm$ \phantom{0}29.7 & 1157.6 $\pm$ \phantom{0}92.4 & 1017.4 $\pm$ 110.3 & 949.9 $\pm$ \phantom{0}29.7 & \textbf{1.2} & \textbf{1.1} \\
             &  &  & 512 & 345.3 $\pm$ 7.7 & 187.0 $\pm$ 10.2 & 718.1 $\pm$ 119.3 & 766.5 $\pm$ 20.1 & 501.8 $\pm$ 34.1 & 428.7 $\pm$ \phantom{0}69.2 & 225.2 $\pm$ 66.4 & 847.6 $\pm$ \phantom{0}58.4 & 1146.8 $\pm$ 182.2 & 1097.9 $\pm$ \phantom{0}92.4 & 847.7 $\pm$ \phantom{0}58.4 & \textbf{1.4} & \textbf{1.3} \\
             
            \cmidrule(lr){1-17}
            
            \multirow{12}{*}{\begin{turn}{90}TurtleBot\end{turn}} & \multirow{3}{*}{\shortstack{W5:\\maze-128-128-2\\ $128 \times 128\ (10,858)$}} & \multirow{3}{*}{\includegraphics[scale=0.025]{figures/workspaces/maze-128-128-2.pdf}} & 128 & \phantom{0}75.5 $\pm$ 7.4 & 47.8 $\pm$ \phantom{0}8.6 & 76.8 $\pm$ 19.8 & \textcolor{red}{TO} & 50.1 $\pm$ 12.2 & 791.5 $\pm$ 202.1 & \textcolor{red}{TO} & 874.6 $\pm$ 89.7 & 868.3 $\pm$ 217.7 & \textcolor{red}{TO} & 874.7 $\pm$ 89.7 & 0.9 & \textcolor{red}{TO} \\
             &  &  & 256 & 174.7 $\pm$ 7.5 & 111.6 $\pm$ 14.7 & 97.5 $\pm$ 16.9 & \textcolor{red}{TO} & 85.4 $\pm$ 24.5 & 417.6 $\pm$ \phantom{0}97.4 & \textcolor{red}{TO} & 489.8 $\pm$ 71.4 & 515.1 $\pm$ 113.7 & \textcolor{red}{TO} & 489.9 $\pm$ 71.4 & \textbf{1.1} & \textcolor{red}{TO} \\
             &  &  & 512 & 378.4 $\pm$ 6.8 & 301.8 $\pm$ 35.8 & 155.5 $\pm$ 23.9 & \textcolor{red}{TO} & 156.1 $\pm$ 40.4 & 224.3 $\pm$ \phantom{0}32.1 & \textcolor{red}{TO} & 293.4 $\pm$ 55.7 & 379.8 $\pm$ \phantom{0}54.1 & \textcolor{red}{TO} & 293.5 $\pm$ 55.7 & \textbf{1.3} & \textcolor{red}{TO} \\
             
            \cmidrule(lr){2-17}
            
             & \multirow{3}{*}{\shortstack{W6:\\den520d\\ $257 \times 256\ (28,178)$}} & \multirow{3}{*}{\includegraphics[scale=0.025]{figures/workspaces/den520d.pdf}} & 128 & \phantom{0}71.9 $\pm$ 3.2 & 31.4 $\pm$ \phantom{0}2.6 & 191.5 $\pm$ 17.7 & 138.2 $\pm$ 110.6 & 133.6 $\pm$ 24.8 & 773.1 $\pm$ 106.9 & 731.3 $\pm$ 17.4 & 991.0 $\pm$ 96.4 & 964.6 $\pm$ 117.7 & 953.9 $\pm$ 131.2 & 991.1 $\pm$ 96.4 & 0.9 & 0.9 \\
             &  &  & 256 & 153.8 $\pm$ 3.7 & 68.0 $\pm$ \phantom{0}2.2 & 307.6 $\pm$ 63.1 & 316.9 $\pm$ \phantom{0}73.8 & 217.0 $\pm$ 36.2 & 506.4 $\pm$ \phantom{0}57.4 & 463.9 $\pm$ 32.2 & 715.8 $\pm$ 37.9 & 814.0 $\pm$ 103.4 & 873.8 $\pm$ 104.2 & 715.9 $\pm$ 37.9 & \textbf{1.1} & \textbf{1.2} \\
             &  &  & 512 & 331.1 $\pm$ 6.6 & 200.4 $\pm$ 18.9 & 384.9 $\pm$ 29.2 & 371.0 $\pm$ \phantom{0}27.0 & 292.7 $\pm$ 39.5 & 398.6 $\pm$ \phantom{0}37.9 & 343.0 $\pm$ 71.4 & 519.4 $\pm$ 40.9 & 783.5 $\pm$ \phantom{0}51.7 & 814.9 $\pm$ \phantom{0}85.2 & 519.5 $\pm$ 40.9 & \textbf{1.5} & \textbf{1.6} \\
             
            \cmidrule(lr){2-17}
            
             & \multirow{3}{*}{\shortstack{W7:\\warehouse-20-40-10-2-2\\ $164 \times 340\ (38,756)$}} & \multirow{3}{*}{\includegraphics[scale=0.025]{figures/workspaces/warehouse-20-40-10-2-2.pdf}} & 128 & \phantom{0}81.2 $\pm$ 2.6 & 27.4 $\pm$ \phantom{0}1.5 & 260.7 $\pm$ 22.2 & 197.3 $\pm$ 48.4 & 151.2 $\pm$ 12.4 & 707.6 $\pm$ \phantom{0}82.5 & 711.5 $\pm$ 14.9 & 1018.8 $\pm$ \phantom{0}59.8 & 968.3 $\pm$ \phantom{0}98.3 & 997.0 $\pm$ 52.1 & 1018.9 $\pm$ \phantom{0}59.8 & 0.9 & 0.9 \\
             &  &  & 256 & 149.3 $\pm$ 8.1 & 57.1 $\pm$ \phantom{0}6.6 & 401.5 $\pm$ 31.2 & 412.7 $\pm$ 17.4 & 250.4 $\pm$ 55.6 & 502.5 $\pm$ \phantom{0}78.2 & 484.8 $\pm$ 50.4 & 772.2 $\pm$ \phantom{0}63.4 & 904.0 $\pm$ 100.4 & 991.7 $\pm$ 56.3 & 772.3 $\pm$ \phantom{0}63.4 & \textbf{1.2} & \textbf{1.3} \\
             &  &  & 512 & 335.5 $\pm$ 7.2 & 216.2 $\pm$ 16.1 & 473.7 $\pm$ 36.6 & 539.2 $\pm$ 18.8 & 440.4 $\pm$ 82.3 & 375.2 $\pm$ \phantom{0}29.5 & 304.7 $\pm$ 62.1 & 684.0 $\pm$ 111.9 & 848.9 $\pm$ \phantom{0}54.7 & 971.0 $\pm$ 55.5 & 684.1 $\pm$ 111.9 & \textbf{1.2} & \textbf{1.4} \\
             
            \cmidrule(lr){2-17}
            
             & \multirow{3}{*}{\shortstack{W8:\\brc202d\\ $481 \times 530\ (43,151)$}} & \multirow{3}{*}{\includegraphics[scale=0.025]{figures/workspaces/brc202d.pdf}} & 128 & \phantom{0}65.1 $\pm$ 3.5 & 25.9 $\pm$ \phantom{0}2.3 & 467.7 $\pm$ \phantom{0}70.7 & 391.3 $\pm$ 173.8 & 288.4 $\pm$ 53.2 & 1444.9 $\pm$ 232.0 & 1048.3 $\pm$ 81.0 & 1764.2 $\pm$ 241.4 & 1912.6 $\pm$ 273.6 & 1535.9 $\pm$ 264.6 & 1764.3 $\pm$ 241.4 & \textbf{1.1} & 0.9 \\
             &  &  & 256 & 142.0 $\pm$ 8.7 & 56.1 $\pm$ \phantom{0}6.8 & 790.1 $\pm$ 152.3 & 831.9 $\pm$ \phantom{0}50.1 & 398.1 $\pm$ 80.3 & 972.5 $\pm$ 147.1 & 573.6 $\pm$ 49.9 & 1413.2 $\pm$ \phantom{0}91.9 & 1762.6 $\pm$ 230.3 & 1526.6 $\pm$ \phantom{0}91.4  & 1413.3 $\pm$ \phantom{0}91.9 & \textbf{1.2} & \textbf{1.1} \\
             &  &  & 512 & 309.6 $\pm$ 9.5 & 215.7 $\pm$ 16.6 & 818.2 $\pm$ 112.5 & 980.6 $\pm$ \phantom{0}35.6 & 546.3 $\pm$ 43.0 & 780.9 $\pm$ \phantom{0}98.5 & 326.4 $\pm$ 93.2 & 1030.2 $\pm$ 119.8 & 1599.1 $\pm$ 163.1 & 1445.2 $\pm$ 137.0 & 1030.3 $\pm$ 119.8 & \textbf{1.6} & \textbf{1.4} \\
            \bottomrule
        \end{tabular}
   }
\end{table*}
}
First, \textit{the number of participants per planning round} $R^*$ increases with $R$ as more robots become participants. 
In \FnOnDemCPP, inactive participants of the current horizon participate again in the next horizon with the new participants, who reach their goals in the current horizon. 
However, in \FnConCPP, the next planning round starts instantly even if $I_{par}$ contains only one participant, which could be an inactive participant of the current round or a new participant (lines 36-39 or 40-42 in Algorithm \ref{algo:concpp}). 
So, $R^*$ is lesser in \FnConCPP. 
Recall that in \FnAPFCPP, all the $R$ robots participate in each horizon. 

Next, \textit{the total computation time} $T_c$ increases with $R$ because in each round, finding cost-optimal paths and subsequently collision-free paths for $R^*$ participants while respecting $R - R^*$ non-participants' paths become intensive as Figure \ref{fig:comp_time_per_round_stat} shows. 
As $R^*$ is lesser in \FnConCPP, $T_c$ is also lesser in \FnConCPP. 
\shortversion{Out}\longversion{As Figure. \ref{fig:comp_time_stat} shows, out} of $T_c$, the total computation time that overlaps with the path following in \textbf{P\&F} intervals is denoted by $T^{ol}_c$, which is $81-99\%$ of $T_c$. 
The results indicate that almost all the computations have happened in parallel, with at least one robot following its path. 
It firmly establishes the strength of the concurrent CPP framework. 
Despite \FnAPFCPP's low time complexity, its $T_c$ exceeds that of \FnConCPP's as $R$ increases because all $R$ robots participate in  \FnAPFCPP. 
Also, in large complex workspaces, its dead-end recovery procedure gets triggered for many robots in most horizons. 
It gets even worse with the relatively complex motions of TurtleBots. 

Now, \textit{the total path following time} $T_p$ decreases with $R$ as more robots complete the coverage faster. 
In a horizon of \FnOnDemCPP, active robots that do not reach their goals but only progress also send their updated local views to the CP, making the global view more information-rich. 
So, in the next horizon, the CP finds a superior goal assignment for the participants. 
In contrast, in \FnConCPP, active robots send their updated local views to the CP after reaching their goals, keeping the global view less information-rich. 
It results in inferior goal assignment, leading the participants to distant unassigned goals, making $\Lambda$ relatively longer and so $T_p$. 
On the other hand, goal assignment is even superior in \FnAPFCPP compared to \FnOnDemCPP as all robots participate in each horizon, making $\Lambda$ relatively shorter. 
But, dead-end recovery of a robot blindly depends on attractive forces (based on the Euclidean distance only) from goals, failing to incorporate the presence of obstacles and other robots on the paths for feasibility. 
As a result, either actual paths are longer than they appear or robots fail to cover highly complex workspaces with very narrow passageways like a maze in a reasonable time limit of $45$\si{\minute} due to infeasible paths, which exposes the vulnerability of any potential field-based algorithms. 

\FnConCPP gains in terms of $T_c$ but losses in terms of $T_p$. 
Despite that, efficient overlapping of path plannings with executions makes \FnConCPP take shorter \textit{mission time} $T_m$ than its counterparts, thereby achieving a \textit{speedup} of up to $1.6\times$.

\subsubsection{Significance in Energy Consumption}

A quadcopter remains airborne during the entire $T_m$ and keeps consuming energy. 
As \FnConCPP completes the mission up to $1.6\times$ faster than its counterparts, the quadcopters spend significantly less energy during the mission. 
Out of $T_p$, the TurtleBots spent $T_{non-Halt}$ time (\shortversion{eighth column in Table~\ref{tab:experimental_results}}\longversion{in Figure \ref{fig:non_halt}}) in executing non-$\mathtt{Halt}$ moves while the rest are spent on $\mathtt{Halt}$ moves, which get prefixed to the paths during collision avoidance. 
In \FnOnDemCPP and \FnAPFCPP, a TurtleBot stays \textit{idle} during $T_{Halt} + T_c$ and saves energy. 
$T_{non-Halt}$ is larger in \FnConCPP than in \FnOnDemCPP by $1-22\%$. 
So, the TurtleBots spent more energy in \FnConCPP than in \FnOnDemCPP. 
But, as discussed above, attractive forces often mislead the assignment of a goal to a robot, which needs to avoid unaccounted obstacles and other robots en route, making $T_{non-Halt}$ more in \FnAPFCPP compared to \FnConCPP in most of the cases. 
Generally, the ground robots' energy consumption is less severe than the aerial robots.

\subsection{Simulations and Real Experiments}

For validation, we perform Gazebo~\cite{key_gazebo} 
simulations on six $2$D benchmark workspaces from~\cite{stern2019mapf} with $10$ IRIS quadcopters and $10$ TurtleBot3 robots, respectively. 
We also demonstrate two real experiments (see Figure \ref{fig:real_exp}) - one outdoor with three quadcopters and one indoor with two TurtleBot2 robots. 
We equip each quadcopter with a Cube Orange for autopilot, a Here$3$ GPS for localization, and a HereLink Air Unit for communication with the Remote Controller. 
We equip each TurtleBot2 robot with $4$ HC SR$04$ ultrasonic sound sensor for obstacle detection in the Vicon~\cite{key_vicon}-enabled indoor workspace for localization.  
These experimental videos are available 
at \textcolor{blue}{https://youtu.be/M1FGrU6hty8}.

\begin{figure}[t]
    \centering
        \begin{subfigure}{0.35\linewidth}
        \centering
        \includegraphics[scale=0.35]{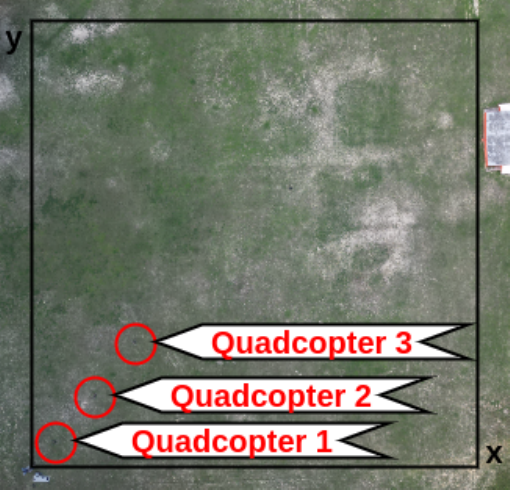}
        \caption{$10\times10$ \ \ \ \ \ \ \ \ outdoor workspace \ \ \ \ \ \ \ \ \ \ \ \ \ \ \ \ \ \ \ \ \ \ \ \ \ \ \ \ \ \ \ \ [cell size = $5\si{\meter}$]}       
        \label{subfig:outdoor_ws}
    \end{subfigure}
    \hfill
    \begin{subfigure}{0.59\linewidth}
        \centering
        \includegraphics[scale=0.038]{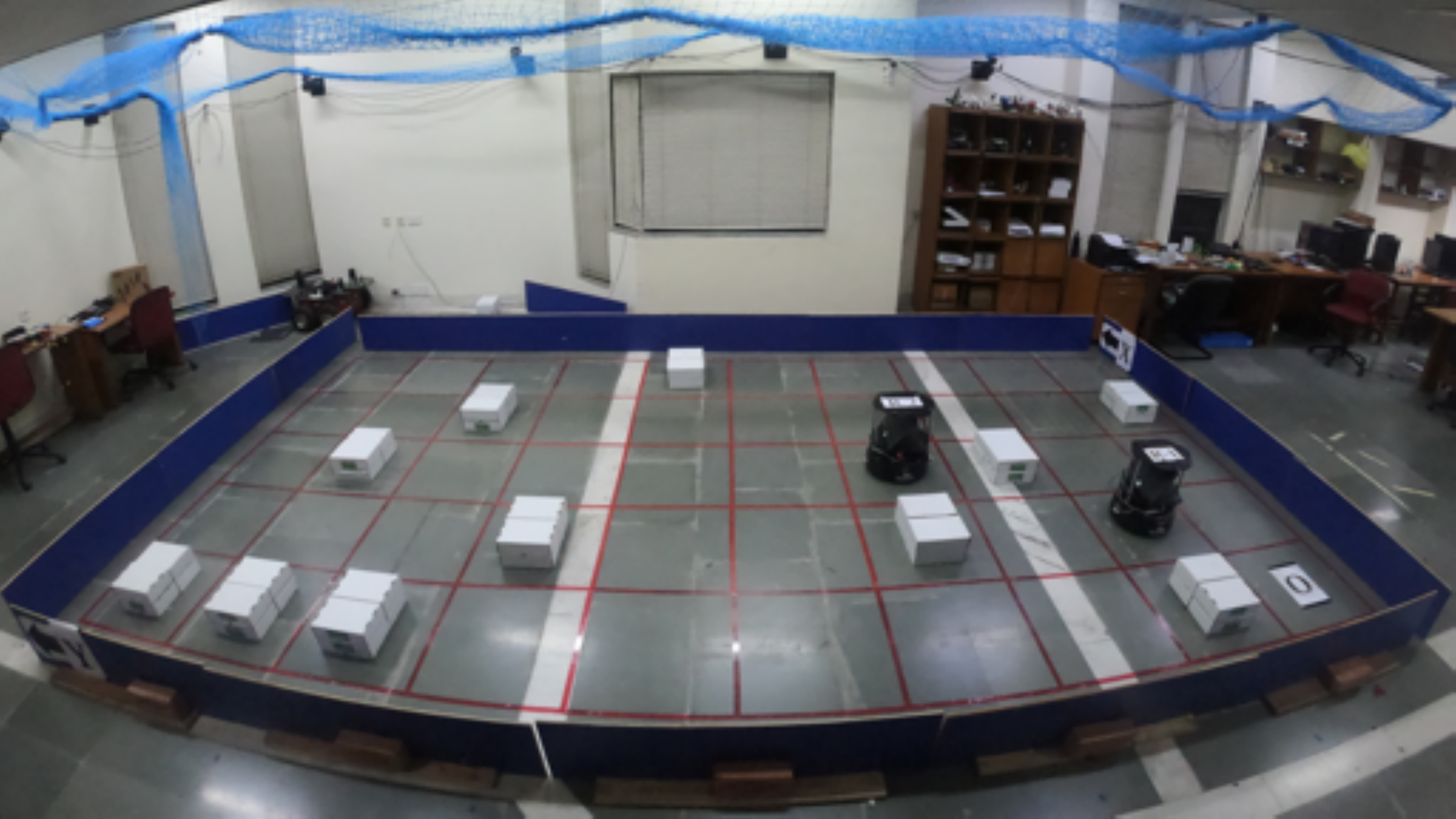}
        \caption{$\phantom{0}5\times10$ indoor workspace \ \ \ \ \ \ \ \ \ \ \ \ \ \ \ \ [cell size = $0.61\si{\meter}$] \\}       
        \label{subfig:indoor_ws}
    \end{subfigure}
    \caption{Workspaces for the real experiments, where the robot size determines the grid cell size}
    \label{fig:real_exp}
\end{figure}
\section{Conclusion}
\label{sec:conclusion}

We have proposed a non-horizon-based centralized online multi-robot CP, where path planning and path execution happen concurrently. 
This overlapping saves significant time in completing coverage of large workspaces with hundreds of robots, thereby achieving a speedup of up to $1.6\times$ compared to its horizon-based counterparts. 
We have also validated the CP by performing simulations and real experiments. 
Though we have considered only 2D workspaces for coverage, we can seamlessly extend our CP for covering 3D workspaces with robots capable of executing motions in 3D, as the core algorithm \cite{DBLP:conf/iros/MitraS22} has already demonstrated in ROS+Gazebo simulations.


\bibliographystyle{IEEEtran}
\bibliography{8_references}

\end{document}